\documentclass{article}

\usepackage[preprint]{neurips_2026}

\usepackage{amsmath}
\usepackage{multirow}
\usepackage{mathrsfs}
\usepackage[utf8]{inputenc}
\usepackage[ruled,linesnumbered]{algorithm2e}

\usepackage{subcaption}

\usepackage[utf8]{inputenc} 
\usepackage[T1]{fontenc}    
\usepackage[
  colorlinks=true,
  linkcolor=cyan,
  citecolor=cyan
]{hyperref}       
\usepackage{url}            
\usepackage{booktabs}       
\usepackage{adjustbox}      
\usepackage{array}
\usepackage{amsfonts}       
\usepackage{nicefrac}       
\usepackage{microtype}      
\usepackage{xcolor}         

\usepackage{booktabs}

\usepackage{microtype}
\usepackage{graphicx}
\usepackage{amssymb}
\usepackage{multicol}
\usepackage{wasysym}
\usepackage{makecell}
\usepackage{pifont}
\usepackage{booktabs} 
\usepackage{float}
\usepackage{hyperref}

\definecolor{commentcolor}{RGB}{220, 50, 47} 

\usepackage[bb=dsserif]{mathalpha}
\usepackage{bm}

\usepackage[utf8]{inputenc}
\usepackage{amsmath}
\usepackage{amssymb}
\usepackage{mathtools}
\usepackage{amsthm}
\usepackage{hyperref} 
\usepackage{wrapfig}
\usepackage{colortbl}
\newcommand{\cmark}{\ding{51}} 
\newcommand{\xmark}{\ding{55}} 

\usepackage[capitalize,noabbrev]{cleveref}

\theoremstyle{plain}
\newtheorem{theorem}{Theorem}[section]

\newtheorem{lemma}[theorem]{Lemma}

\theoremstyle{definition}
\newtheorem{definition}[theorem]{Definition}

\theoremstyle{remark}
\newtheorem{remark}[theorem]{Remark}
\usepackage{amssymb}

\usepackage{mdframed}

\newmdtheoremenv[
  backgroundcolor=white!95!blue!5,  
  linecolor=blue!40!cyan,          
  linewidth=1pt,
  roundcorner=6pt,
  innertopmargin=8pt,
  innerbottommargin=8pt,
  innerrightmargin=10pt,
  innerleftmargin=10pt,
  frametitlebackgroundcolor=cyan!10, 
  frametitleaboveskip=2pt,
  frametitlebelowskip=2pt,
  frametitlefont=\bfseries,
  shadow=true,
  shadowsize=3pt,
  shadowcolor=cyan!15,
]{theoremwithframe}{Theorem}

\newenvironment{talign*}
{\csname align*\endcsname}
{\endalign}
\newenvironment{talign}
{\align}
{\endalign}

\title{Fisher Decorator: Refining Flow Policy via \\  a Local Transport Map}

\author{
Xiaoyuan Cheng$^{1}$\thanks{Equal contribution. Contact Xiaoyuan Cheng with \url{ucesxc4@ucl.ac.uk}, and Haoyu Wang with \url{create_arc@sjtu.edu.cn}.} \quad
Haoyu Wang$^{2}$\footnotemark[1] \quad 
Wenxuan Yuan$^{6}$ \quad 
Ziyan Wang$^{3}$ \\ \textbf{Zonghao Chen}$^{1}$  \quad \textbf{Li Zeng}$^{4}$ \quad 
\textbf{Zhuo Sun}$^{5,6}$\thanks{Correspondence Author. Correspondence to Zhuo Sun: \url{zhuosunreid@outlook.com}.} \\
$^{1}$University College London, 
$^{2}$Shanghai Jiao Tong University,
$^{3}$King's College London,\\ 
$^{4}$Peking University,
$^{5}$Shanghai University of Finance and Economics, \\
$^{6}$Imperial College London \\
}

\begin{document}

\maketitle

\begin{abstract}
Recent advances in flow-based offline reinforcement learning (RL) have achieved strong performance by parameterizing policies via flow matching. However, they still face critical trade-offs among expressiveness, optimality, and efficiency. In particular, existing flow policies interpret the $L_2$ regularization as an upper bound of the 2-Wasserstein distance ($W_2$), which can be problematic in offline settings. This issue stems from a fundamental geometric mismatch: the behavioral policy manifold is inherently anisotropic, whereas the $L_2$ (or upper bound of $W_2$) regularization is isotropic and density-insensitive, leading to systematically misaligned optimization directions. To address this, we revisit offline RL from a geometric perspective and show that policy refinement can be formulated as a local transport map—an initial flow policy augmented by a residual displacement. By analyzing the induced density transformation, we derive a local quadratic approximation of the KL-constrained objective governed by the Fisher information matrix, enabling a tractable anisotropic optimization formulation. By leveraging the score function embedded in the flow velocity, we obtain a corresponding quadratic constraint for efficient optimization. Our results reveal that the optimality gap in prior methods arises from their isotropic approximation. In contrast, our framework achieves a controllable approximation error within a provable neighborhood of the optimal solution. Extensive experiments demonstrate state-of-the-art performance across diverse offline RL benchmarks\footnote{See project page: \url{https://github.com/ARC0127/Fisher-Decorator}.}. 
\end{abstract}

\section{Introduction}
Offline reinforcement learning (RL) algorithms hold tremendous promise for transforming large datasets into powerful decision-making systems \citep{levine2020offline, fu2022closer}. A central challenge in offline RL is formulated as \textit{KL-constrained policy optimization}, which balances reward maximization with the constraint to remain near the behavioral distribution, thereby mitigating distributional shift \citep{prudencio2023survey}. As datasets grow larger and span increasingly diverse domains, their behavioral policy distributions become more complex and often highly multimodal \citep{o2024open, park2024ogbench}. Effectively modeling such distributions requires expressive policy parameterizations. Consequently, offline RL introduces a key challenge for policy learning: balancing policy expressiveness with reliable policy optimization.

Improving policy expressiveness is therefore crucial for solving complex decision-making problems. Recent works have explored modern generative modeling techniques, such as diffusion models \citep{hansen2023idql, chen2023score, janner2022planning} and flow-based models \citep{zhang2025energy}, to address this challenge. However, directly incorporating these models into policy extraction remains non-trivial. Existing approaches typically derive a generative policy from a learned value function using techniques such as weighted behavioral cloning \citep{lu2023contrastive, kang2023efficient, ding2024diffusion} or reparameterized policy gradients \citep{wang2022diffusion, ding2023consistency}. In practice, weighted behavioral cloning often suffers from a limited number of effective training samples and insufficient policy expressiveness, which can lead to suboptimal performance, particularly on complex tasks \citep{park2024value}. Meanwhile, reparameterized policy gradient methods require backpropagation through time \citep{espinosa2025expressive}, making them computationally expensive and prone to unstable optimization.

To improve training efficiency and stability, recent works have explored one-step distillation techniques that compress iterative generative processes into single-step generators \citep{salimans2022progressive, geng2025mean}. Building on this idea, recent state-of-the-art methods such as flow $Q$-learning \citep{park2025flowqlearning} and its variants \citep{zhang2026reform, nguyen2026onestepflowqlearningaddressing} distill iterative flow models into one-step generators, enabling more tractable policy optimization while remaining faithful to the behavioral distribution.
However, the distillation objective corresponds to minimizing an upper bound of the 2-Wasserstein distance \citep{villani2009optimal}. This is because existing methods typically resort to an $L_2$ relaxation as a computationally tractable surrogate for the $W_2$ objective \citep{park2025flowqlearning}. However, such a simplification inherently reduces the optimization to an isotropic, density-insensitive penalty. In other words, it treats all parts of the behavior policy’s support equally, regardless of how much probability mass the behavior policy assigns to them. As a result, it fails to distinguish between deviations in high-density regions (i.e., frequently visited state–action pairs under the behavior policy) and low-density regions. In contrast, KL constraints explicitly emphasize alignment in high-density regions, and this mismatch can lead to suboptimal updates and distributional shift. Consequently, the resulting one-step policy may suffer from mode collapse and face challenges such as distributional shift, especially when the $Q$ function is inaccurate. Another recent work \citep{mu2026deflow} introduces a residual refinement component with an $L_2$ penalty to mitigate these effects. However, this regularization still corresponds to the same $W_2$ upper bound, and thus inherits the same isotropic, density-insensitive behavior, leading to suboptimal policy updates.
This exposes a fundamental mismatch: offline RL relies on KL-constrained updates that are inherently anisotropic and density-aware, whereas existing flow-based methods rely on $W_2$ regularization whose practical instantiations reduce to isotropic regularization (see more comprehensive literature review in Appendix~\ref{append:literature_review}).

\vspace{-10pt}
\begin{figure}[h]
    \centering
    \includegraphics[width=0.9\linewidth]{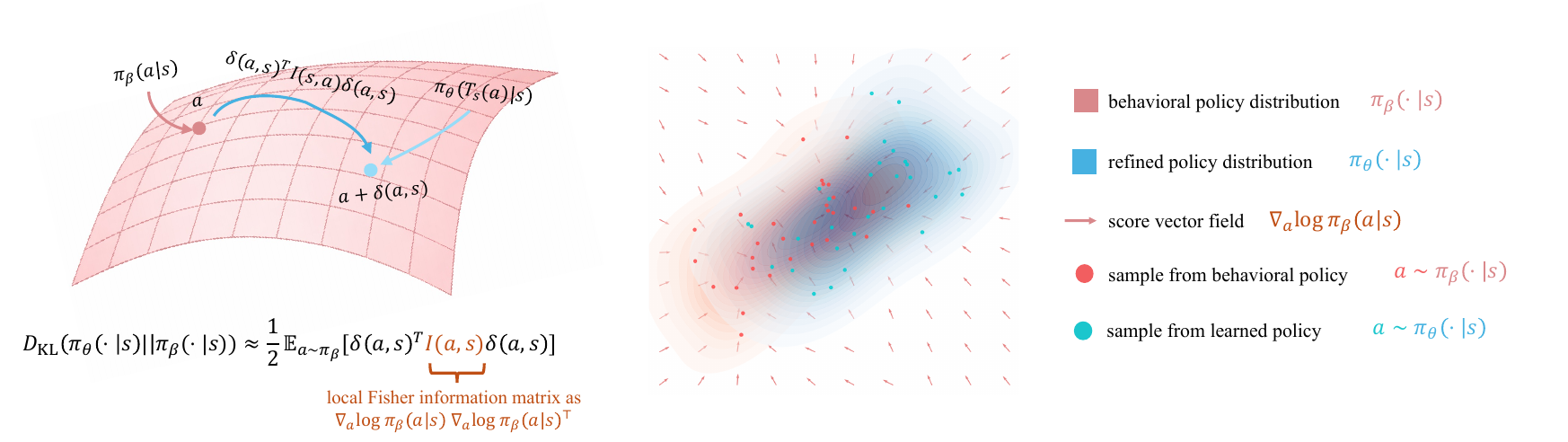}
    \caption{Geometric interpretation of offline policy optimization. (Left) Each action $a$ within the behavioral policy support is refined via a local transport map that induces a constrained displacement. The KL divergence between the behavioral policy $\pi_\beta(\cdot|s)$ and the refined pushforward measure $\pi_\theta(\cdot|s) = (T_s)_{\#} \pi_\beta(\cdot|s)$ is captured by the expectation of the local quadratic form, which is structurally governed by the anisotropic Fisher information matrix. (Right) Visualization of the policy evolution: samples from the behavioral policy distribution (red) are transported to the refined policy distribution (blue). Their trajectories are anchored by the Fisher information matrix, ensuring the refinement respects the underlying manifold.}
    \label{fig:concept}
\end{figure}
\vspace{-5pt}


To address this fundamental mismatch, we revisit flow-based policy learning in offline RL from a geometric perspective (see Figure~\ref{fig:concept}). We model policy refinement as a local transport map that augments the flow policy with a residual displacement. By analyzing the pushforward of the probability density through this transport map, we show that the KL constraint admits a local quadratic approximation governed by the Fisher information matrix. This reveals that policy updates inherently evolve on a statistical manifold with anisotropic structure \citep{shen2025nonparametric}, whose metric can be efficiently estimated from the flow’s velocity field. As a result, we obtain geometry-aware, anisotropic updates without requiring backpropagation through time or relying on biased one-step distillation. Based on this insight, we propose a simple residual policy, termed \textbf{Fisher Decorator (FiDec)}. Our contributions are summarized as follows:

(1) \textit{Geometric Characterization of the KL--Wasserstein Gap:}
We identify a fundamental geometric mismatch between KL-constrained policy optimization and its $W_2$ surrogate. By analyzing the pushforward induced by a local transport map, we show that the KL constraint admits a local quadratic form governed by the local Fisher information metric, which is inherently anisotropic and density-aware. In contrast, the $W_2$ upper bound reduces to an isotropic, density-insensitive $L_2$ penalty. This mismatch explains the optimality gap of prior methods and reveals why they induce distributional shift and mode collapse in multimodal settings.

(2) \textit{Tractable Anisotropic Policy Optimization:}
Building on this characterization, we reformulate KL-constrained policy optimization as a quadratic objective defined on the transport map. We show that the local Fisher information metric is naturally encoded in the score function of the velocity field and can be efficiently estimated. This yields a tractable anisotropic, metric-aware update rule that better preserves distribution support and avoids mode averaging.

(3) \textit{The Fisher Decorator (FiDec) Framework:}
We propose FiDec, a simple residual policy that parametrizes refinement as a local transport map. By constraining the residual under the local Fisher information metric, FiDec performs anisotropic, trust-region updates that are aligned with the underlying geometry. This mitigates distributional shift and mode collapse, while remaining efficient without ODE solving or one-step distillation. Empirically, FiDec achieves state-of-the-art (SOTA) performance across a wide range of benchmarks.

\section{Preliminary}
\textbf{Notation for Offline RL.}
We consider the standard Markov decision process (MDP) $\mathcal{M}$ \citep{sutton1998reinforcement}, defined by the tuple $(\mathcal{S}, \mathcal{A}, r, \rho, p)$, where $\mathcal{S} \subset \mathbb{R}^n$ is the state space, $\mathcal{A} \subset \mathbb{R}^d$ is the action space, $r: \mathcal{S} \times \mathcal{A} \rightarrow \mathbb{R}$ is the reward function, $\rho \in \Delta(\mathcal{S})$ is the initial state distribution, and $p: \mathcal{S} \times \mathcal{A} \rightarrow \Delta(\mathcal{S})$ denotes the transition dynamics.  In offline RL, the goal is to learn the parameters $\theta$ of a policy $\pi_\theta: \mathcal{S} \rightarrow \Delta(\mathcal{A})$ that maximizes the expected discounted return $\mathbb{E}_{s \sim \mathcal{D},\, a \sim \pi_\theta}[Q_{\phi}(s,a)] = \mathbb{E}_{s \sim \mathcal{D},\, a \sim \pi_\theta}[\sum_{t=0}^{N} \gamma^t r(s_t,a_t)]$ using a fixed dataset $\mathcal{D} = \{\tau^{(n)}\}_{n=1}^M$ without additional environment interaction. Here, each trajectory $\tau$ is given by $(s_0, a_0, \dots, s_N, a_N)$, and $\gamma \in (0,1)$ denotes the discount factor.

\textbf{Problem Formulation.}
Let $\pi_\beta$ denote the behavioral policy extracted from the dataset $\mathcal{D}$. Following \cite{levine2020offline} and \cite{prudencio2023survey}, offline RL can be formulated as a KL-constrained optimization problem:
\begin{talign} \label{eq:KL_constrained_problem}
\max_{\pi_\theta} \quad 
& \mathbb{E}_{s\sim \mathcal{D},\, a\sim \pi_\theta(\cdot|s)} 
[Q_{\phi}(s,a)], \qquad 
\text{s.t.}\quad  \mathbb{E}_{s\sim \mathcal{D}}
\left[
D_{\text{KL}}\big(\pi_\theta(\cdot|s)\,\|\,\pi_\beta(\cdot|s)\big)
\right] \le \epsilon ,
\end{talign}
where $\epsilon > 0$ is a predefined threshold that controls the trade-off between policy improvement and stability. This constraint ensures the learned policy $\pi_\theta$ remains within a principled proximity to $\pi_\beta$, thereby mitigating distributional shift by anchoring the optimization to the data support. In this work, we aim to learn an expressive policy $\pi_\theta$ that satisfies this objective. Specifically, we leverage the flow matching framework for policy parameterization, which facilitates the flexible modeling of complex, multimodal action distributions \citep{tong2023conditional}. To provide the necessary background, we next briefly review the formulation of flow matching and its associated velocity fields.

\textbf{Flow Matching and Policy.} Flow models \citep{lipman2022flow, liu2022flow, albergo2022building} are rooted in ordinary differential equations (ODEs), which enable simpler training and fast inference. Given a data distribution $p(x) \in \Delta(\mathbb{R}^d)$, flow matching aims to fit the parameter $\beta$ of a time-dependent velocity field $v_\beta:[0,1] \times \mathbb{R}^d \to \mathbb{R}^d$ that its corresponding flow \citep{lang2012differential} $\psi_\beta: [0,1] \times \mathbb{R}^d \to \mathbb{R}^d$, defined by the unique solution to the ODE
\begin{talign}
    \frac{d}{dt} \psi_\beta(t, x) = v_\beta(t, \psi_\beta(t, x)),
\end{talign}
transforms a unit Gaussian at $t = 0$ to the target distribution $p$ at $t = 1$. We consider the linear Gaussian interpolation path \citep{lipman2024flow}, the objective is to minimize
\begin{equation}
    \mathcal{L}_{\text{flow}} = \mathbb{E}_{t \sim \text{Unit}[0,1], x^0 \sim \mathcal{N}(0, I_d), x^1\sim p(x)} [ \| v_\beta(t, x^t) - (x^1 - x^0) \|_2^2 ], 
\end{equation}
where $\mathcal{N}(0, I_d)$ is the $d$-dimensional unit Gaussian, $\text{Unif}[0,1]$ denotes the uniform distribution over the unit interval, and $x^t = (1-t)x^0 + tx^1$ is the linear interpolation. 

\textbf{Flow Policies.} We use flow matching to learn the behavioral policy, and the objective is minimizing
\begin{equation}
    \mathbb{E}_{t \sim \text{Unit}[0,1], s, a = x^1  \sim \mathcal{D}, x^0 \sim  \mathcal{N}(0, I_d)} [ \| v_\beta (t, s, x^t) - (x^1 - x^0) \|_2^2],
\end{equation}
where $v_\beta: [0,1] \times \mathcal{S} \times \mathbb{R}^d \to \mathbb{R}^d$ is a state-time-dependent vector field that generates a state-dependent flow $\psi_\beta(t,s,x)$, which serves as the behavioral policy.  For simplicity, we denote $\mu_\beta(s, z)$ as the conditional flow policy at $t = 1$, where $z$ is sampled from the standard Gaussian $\mathcal{N}(0, I_d)$. Here, we ask the central question: 
\begin{center}
\textit{How can we design a policy refinement mechanism that respects the anisotropic structure induced by the KL constraint, while remaining tractable for optimization?}
\end{center}
To address this, the following section identifies the theoretical discrepancy between existing flow-based methods and the foundational optimization objective. We then examine the inherent link between the pushforward of transport maps and KL-constrained optimization.

\section{Methods}
This section first identifies the theoretical gap of previous methods, then introduces a local transport map to bridge KL-constrained optimization and information geometry, and finally derives a practical algorithm (FiDec) for efficient policy refinement. 

\subsection{Revisit Theoretical Gap in Optimization} \label{subsec:gap}
In Flow $Q$-learning (FQL) \citep{park2025flowqlearning}, its variants, as well as DeFlow \citep{mu2026deflow}, the refined policy is typically parameterized as a one-step flow $\mu_\theta(s, z)$. Under this formulation, the constraint term is equivalent to the upper bound of $2$-Wasserstein distance:
\begin{equation} \label{eq:W_2_distance}
\begin{split}
        \mathbb{E}_{s \sim \mathcal{D}, z\sim \mathcal{N}(0, I_d)} \big[\| \mu_\theta(s, z) - \mu_\beta(s, z)\|_2^2 \big] & \geq  \mathbb{E}_{s \sim \mathcal{D}} \big[\inf_{\xi \in \Xi(\pi_\theta(\cdot|s), \pi_\beta(\cdot|s))} \mathbb{E}_{x,y \sim \xi} [\| x - y \|_2^2] \big]
        \\
        &= \mathbb{E}_{s \sim \mathcal{D}} \big[W_2\big(\pi_\theta(\cdot|s),\pi_\beta(\cdot|s)\big)^2\big], 
\end{split}
\end{equation}
where $\Xi(\pi_\theta(\cdot|s), \pi_\beta(\cdot|s))$ denotes the set of coupling distributions of $\pi_\theta$ and  $\pi_\beta$, and $W_2$ denotes the $2$-Wasserstein distance
with the Euclidean metric in the action space. However,  the $W_2$ regularization in \eqref{eq:W_2_distance} is not equivalent to the KL constraint specified in \eqref{eq:KL_constrained_problem}.


\textbf{Why the $L_2$ (or upper bound of $W_2$) in \eqref{eq:W_2_distance} is problematic?}  
First, the nature of $D_{\text{KL}}$ is anisotropic \citep{amari2016information}, which emphasizes the alignment of probability densities through its logarithmic ratio, acting as a strict support-preserving constraint. Specifically, the KL divergence becomes infinite when $\pi_\theta(\cdot | s)$ assigns positive probability mass outside the support of $\pi_\beta(\cdot|s)$, effectively preventing the learned policy from drifting into regions outside the behavioral support. This property is crucial in offline RL, where inaccurate $Q$-values often exert an `extrapolative' pressure on the policy \citep{park2024value, zhang2026reform}. 

In contrast, replacing KL with $L_2$ (or upper bound of $W_2$) introduces fundamental limitations due to its isotropic nature, particularly in multimodal settings. Specifically, when instantiated via its commonly used upper bound, the $W_2$ objective reduces to an isotropic $L_2$ penalty in the ambient space, treating all directions and regions uniformly regardless of the underlying distribution. This leads to several undesirable consequences: (1) \textit{Relaxation of Support Constraints:} Unlike the `zero-forcing' nature of KL, the isotropic $L_2$ penalty lacks a mechanism to strongly penalize deviations into low-density or unsupported regions. As a result, the optimized policy may assign non-negligible mass to regions outside the behavioral support, leading to distributional shift and OOD issues. (2) \textit{Mass Averaging via Displacement Interpolation:} From the perspective of optimal transport \citep{chewi2025statistical, villani2009optimal}, $W_2$ optimization follows displacement geodesics. For a multimodal $\pi_\beta$, the isotropic penalty induces a `centripetal' effect toward the Wasserstein barycenter, encouraging the policy to collapse toward the geometric mean of distinct modes and resulting in unrealistic mode-averaging behavior. (3) \textit{Misaligned Updates under Intrinsic Geometry:} The isotropic $L_2$ regularization ignores the anisotropic statistical manifold induced by the behavioral distribution. Consequently, policy updates are misaligned with the underlying geometry, leading to suboptimal optimization.


To break this bottleneck, we revisit the policy refinement process and develop an optimization framework that is mathematically faithful to the original KL constraint while remaining computationally tractable within the flow-matching framework.

\subsection{Policy Refinement via A Local Transport Map}

\begin{figure}[ht]
    \centering
    \includegraphics[width=0.9\linewidth]{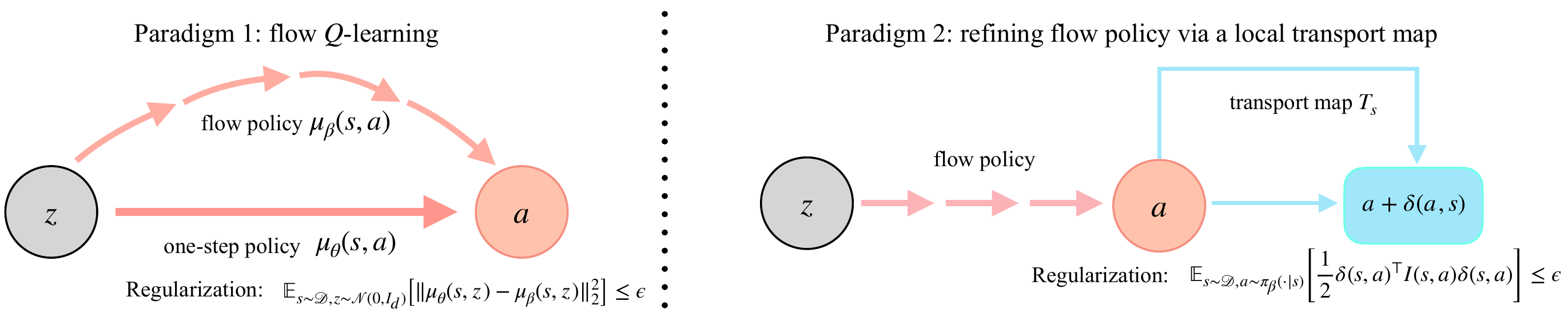}
    \caption{Comparison of flow policy refinement paradigms. (Left) Paradigm 1: flow $Q$-learning. The refined policy $\mu_\theta$ is parameterized as an independent one-step mapping from noise $z$ to action $a$. (Right) Paradigm 2: local transport map (Ours). Instead of re-learning the flow, we keep the original flow as a base and apply a local transport map $T_s$. The refined action is formulated as $T_s(a)=a +\delta(s,a)$, ensuring the refinement is constrained by local Fisher information matrix.}
    \label{fig:paradiam_comp}
    \vspace{-10pt}
\end{figure}

Rather than learning a one-step policy (see difference in Figure~\ref{fig:paradiam_comp}), we parameterize the refined policy $\pi_\theta$ via a differentiable local \textit{transport map} $T_s: \mathcal{A} \to \mathcal{A}$ for each $s \in \mathcal{S}$ (see Definition~\ref{def:transport_map} in Appendix~\ref{append:technical_definition}). This map is defined as a local displacement over the flow-generated action:
\begin{equation} \label{eq:local_transform}
    T_s(a) \coloneqq a + \delta(s, a) \quad \text{where} \ a = \mu_{\beta}(s,z). 
\end{equation}
Here, $\delta(s, \cdot): \mathbb{R}^d \to \mathbb{R}^d$ denotes a learned residual term that induces a local displacement field on the behavioral policy. By formulating the policy update as the \textit{pushforward measure}\footnote{Formally, the pushforward measure $\pi_\theta = (T)_{\#} \pi_\beta$ is defined such that for any measurable set $B \subset \mathcal{A}$, $\pi_\theta(B|s) = \pi_\beta(T_s^{-1}(B|s))$, meaning samples from $\pi_\theta$ are obtained by simply transforming samples from $\pi_\beta(\cdot|s)$ via $a' = T_s(a)$. More information can be found in Appendix~\ref{append:technical_definition}.} $\pi_\theta(\cdot|s) = (T_s)_{\#} \pi_\beta(\cdot|s)$, we can directly link the refinement process to the geometric structure of the behavioral policy distribution. This parameterization ensures that the optimization remains anchored to this support, inherently mitigating the risk listed in Section~\ref{subsec:gap}.

Applying the pushforward properties \citep{papamakarios2021normalizing}, 
the relationship between the refined policy distribution $\pi_\theta$ and the behavioral policy distribution $\pi_\beta$ is given by
\begin{equation} \label{eq:probability_pushforward}
\pi_{\theta}(T_s(a)| s) 
= 
\pi_\beta(a| s) 
\cdot 
\left|\det \nabla_{a'} T_s^{-1}(a')\right|_{a'=T_s(a)}.
\end{equation}
See detailed derivation in Appendix~\ref{lemma:change_of_variable_transport}. Here, $\det$ is the determinant of matrix and $\nabla_{a'} T^{-1}_s(a')$ denotes the Jacobian of the inverse map with respect to its input. Based on \eqref{eq:local_transform} and the inverse function theorem \citep{tao2006analysis}, the Jacobian of $T_s$ is
$$\nabla_{a'} T_s^{-1}(a') = (I_d + \nabla_a \delta(s, a))^{-1}$$
Since $\delta$ represents a small localized displacement field, we obtain
\begin{talign} \label{eq:det}
\left| \det \nabla_{a'} T_s^{-1}(a')  \right|
=
\left|\det (I_d + \nabla_a \delta(s,a))\right|^{-1}
=
1 - \nabla\cdot \delta(s,a) + \mathcal{O}(\|\nabla_a \delta(s,a)\|^2),
\end{talign}
where $\nabla \cdot \delta = \mathrm{tr}(\nabla_a \delta)$ denotes the divergence of the displacement field. The detailed derivation is in Appendix~\ref{append:technical_proof}. We then reveal how the local transport map changes the behavioral policy distribution.

\begin{theorem}[Connecting the Transport Map to KL Divergence] Let $\pi_\theta = T_{\#} \pi_\beta$ be the pushforward of the behavioral policy under the residual map $T(s,a) = a + \delta(s,a)$. Under a small perturbation $\delta$, the KL divergence admits the following second-order approximation:
\begin{equation}
D_{\text{KL}}\big(\pi_\theta(\cdot|s)\,\|\,\pi_\beta(\cdot|s)\big)   \approx \frac{1}{2} \mathbb{E}_{a \sim \pi_\beta(\cdot|s)} \left[ \delta(s,a)^\top \colorbox{cyan!15}{$I(s,a)$} \delta(s,a) \right],
\end{equation}
where \colorbox{cyan!15}{$I(s,a) = \nabla_a \log \pi_{\beta}(a|s) \nabla_a \log \pi_{\beta}(a|s)^\top$} is the local Fisher information matrix (see Definition~\ref{def:local_info_matrix}) obtained at the behavioral policy distribution.
\end{theorem}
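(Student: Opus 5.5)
The plan is to rewrite the KL divergence as an expectation over the base samples $a\sim\pi_\beta(\cdot|s)$ via the pushforward identity \eqref{eq:probability_pushforward}, and then expand every term to second order in $\delta$. Changing variables $a'=T_s(a)$ gives
\begin{equation*}
D_{\text{KL}}\big(\pi_\theta\,\|\,\pi_\beta\big)=\mathbb{E}_{a\sim\pi_\beta(\cdot|s)}\Big[\log \pi_\theta(T_s(a)|s)-\log\pi_\beta(T_s(a)|s)\Big].
\end{equation*}
By \eqref{eq:probability_pushforward}, the first term equals $\log\pi_\beta(a|s)-\log|\det(I_d+\nabla_a\delta)|$. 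The KL is therefore the expectation of $-\big[\log\pi_\beta(a+\delta|s)-\log\pi_\beta(a|s)\big]-\log|\det(I_d+\nabla_a\delta)|$.

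\textbf{Expansion.} Write $g=\nabla_a\log\pi_\beta(a|s)$ and $H=\nabla_a^2\log\pi_\beta(a|s)$.
\begin{itemize}
\item A Taylor expansion of the log-density gives $-g^\top\delta-\tfrac12\delta^\top H\delta$.
\item Expanding the log-determinant to second order, rather than stopping at the first-order form \eqref{eq:det}, gives $-\nabla\cdot\delta+\tfrac12\mathrm{tr}\big((\nabla_a\delta)^2\big)$.
\end{itemize}
The first-order terms cancel in expectation. Integration by parts, assuming $\pi_\beta\delta$ decays at the boundary of the support, gives $\mathbb{E}_{\pi_\beta}[g^\top\delta]=\int\nabla\pi_\beta\cdot\delta=-\mathbb{E}_{\pi_\beta}[\nabla\cdot\delta]$. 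So $\mathbb{E}[-g^\top\delta-\nabla\cdot\delta]=0$ exactly, as expected because the KL is minimized at $\delta=0$.

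\textbf{Second order.} The remaining quadratic part is
\begin{equation*}
\mathbb{E}_{\pi_\beta}\Big[-\tfrac12\delta^\top H\delta+\tfrac12\mathrm{tr}\big((\nabla_a\delta)^2\big)\Big].
\end{equation*}
The goal is to show this matches $\tfrac12\mathbb{E}[(g^\top\delta)^2]$. I would use the Fisher-type identity $\mathbb{E}[-H]=\mathbb{E}[gg^\top]$, which follows from integrating by parts the relation $\nabla^2\pi_\beta=\pi_\beta(H+gg^\top)$. The difficulty is that $\delta$ depends on $a$, so this identity holds only up to terms involving $\nabla\delta$. Integrating $\mathbb{E}[\delta^\top H\delta]$ by parts produces $-\mathbb{E}[(g^\top\delta)^2]$ plus cross terms such as $\mathbb{E}[\delta^\top\nabla\delta^\top g]$. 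These terms interact with the trace term.

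\textbf{Main obstacle.} This cross-term bookkeeping is the hard step. First I would try the \emph{locally constant displacement} regime that the paper's phrase ``small localized displacement'' suggests. Here $\nabla_a\delta$ is treated as higher order, i.e.\ $\|\nabla_a\delta\|=o(\|\delta\|)$, consistent with \eqref{eq:det}. Then the trace and cross terms drop, and $-\tfrac12\mathbb{E}[\delta^\top H\delta]\approx\tfrac12\mathbb{E}[\delta^\top gg^\top\delta]=\tfrac12\mathbb{E}[\delta^\top I(s,a)\delta]$. I would state this smoothness assumption explicitly as the meaning of ``$\approx$''.

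\textbf{Fallback.} If that is unsatisfying, I would instead use a one-parameter path $\delta=\varepsilon u$ and compute the second derivative of the KL at $\varepsilon=0$. This is the Fisher information of the path, $\mathbb{E}[(\partial_\varepsilon\log\pi_\varepsilon)^2]$, with $\partial_\varepsilon\log\pi_\varepsilon=-\nabla\cdot u-g^\top u$. When $\nabla\cdot u$ is negligible, this again reduces to $\mathbb{E}[u^\top gg^\top u]$.
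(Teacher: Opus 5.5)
This is correct and is essentially the paper's own proof: the same pullback of the KL to an expectation under $\pi_\beta$, the same second-order expansions of $\log\pi_\beta(a+\delta)$ and of $\log\det(I_d+\nabla_a\delta)$ (keeping the $\tfrac12\mathrm{tr}((\nabla_a\delta)^2)$ term), the same first-order cancellation by integration by parts, the same conversion of the Hessian term into the Fisher form plus Jacobian cross terms, and the same near-rigid assumption $\|\nabla_a\delta\|=o(\|\delta\|)$ to discard the remainder. Your fallback is also valid and slightly sharper: the exact second-order term $\tfrac12\mathbb{E}_{\pi_\beta}\big[(\nabla\cdot\delta+g^\top\delta)^2\big]$ differs from $\tfrac12\mathbb{E}_{\pi_\beta}[\delta^\top I(s,a)\delta]$ only by terms containing $\nabla\cdot\delta$, so you only need the divergence to be negligible, not the full Jacobian, and a divergence-free $\delta$ gives the Fisher form exactly at second order.
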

Formal statements and proof are given in Appendix~\ref{theorem:second_order_expansion}. Notably, this establishes a natural bridge between the transport map and KL divergence without altering the original optimization objective. \textit{The final approximation reveals that the local displacement is penalized by an anisotropic quadratic form induced by the local Fisher information matrix $I(s,a)$.}


Having established this second-order approximation of KL divergence, ensuring consistency with \eqref{eq:KL_constrained_problem} reduces to estimating the local score function, $\nabla_a \log \pi_\beta(a|s)$. Remarkably, in the context of flow matching, this score function is intrinsically linked to the learned velocity field $v_\beta$. For a Gaussian interpolation path, the score function can be derived using Lemma~\ref{lemma:score_computation} in the Appendix as follows: 
\begin{talign} \label{eq:score_function_compute}
    \nabla_a \log \pi_\beta (a|s) = \lim_{t \to 1} \frac{t v_{\beta}(t, s, a) - a}{1-t}. 
\end{talign}
However, the expression in \eqref{eq:score_function_compute} exhibits a singularity in this limit (if directly calculate it via the velocity of flow), as both the numerator and denominator vanish simultaneously, leading to numerical instability. To circumvent this, we introduce a small perturbation $\varepsilon$ to the time variable, yielding a numerically stable approximation. The following theorem establishes the convergence rate and error bound between this perturbed approximation and the ground-truth score function.

\begin{theorem}[Pointwise Convergence of the Perturbed Score Function]    Let $\pi_\beta (a|s)$ denote the behavioral policy distribution defined on a compact set of $\mathbb{R}^d,$ and assume $\pi_\beta (\cdot|s) \in C^3(\mathbb{R}^d, \mathbb{R})$ over the compact subset of $\mathbb{R}^d$. Consider the linear Gaussian interpolation path, let $\mathbf{s}_t(a|s)$ be the associated score function at time $t$. For a small perturbation parameter $\varepsilon > 0$, define the perturbed time $t_\varepsilon = 1 - \varepsilon$. As $\varepsilon \to 0^+$, the approximation error between the perturbed score $\mathbf{s}_{1-\varepsilon}(a|s)$ and the ground truth score $\mathbf{s}_1(a|s)$ satisfies: 
    $$ \| \mathbf{s}_{1-\varepsilon}(a|s) - \mathbf{s}_1(a|s) \| = \varepsilon \left\| \nabla_a \big( a^\top \nabla_a \log \pi_\beta(a|s) \big) \right\| + \mathcal{O}(\varepsilon^2). $$ 
    Consequently, the boundary perturbation scheme achieves a first-order convergence rate of $\mathcal{O}(\varepsilon)$.
\end{theorem}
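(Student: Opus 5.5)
The plan is to write the time-$t$ marginal of the linear Gaussian interpolation path in closed form, reparametrize it so that the dependence on $\varepsilon = 1-t$ is explicit, and then Taylor-expand in $\varepsilon$. This gives the exact first-order coefficient of $\mathbf{s}_{1-\varepsilon}-\mathbf{s}_1$; taking norms of that identity then yields the stated formula. Throughout I fix $s$ and write $\pi(a) := \pi_\beta(a|s)$. Since $x^t = t x^1 + (1-t)x^0$ with $x^0\sim\mathcal{N}(0,I_d)$, the marginal at $t=1-\varepsilon$ is
\begin{equation*}
p_{1-\varepsilon}(a) = \int \pi(y)\,\mathcal{N}\big(a;(1-\varepsilon)y,\varepsilon^2 I_d\big)\,dy .
\end{equation*}
The associated score is $\mathbf{s}_{1-\varepsilon}(a|s)=\nabla_a\log p_{1-\varepsilon}(a)$, and $\mathbf{s}_1=\nabla_a\log\pi$; this is consistent with Lemma~\ref{lemma:score_computation} and \eqref{eq:score_function_compute}. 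Substituting $y=(a-\varepsilon u)/(1-\varepsilon)$ with $u\sim\mathcal{N}(0,I_d)$ gives
\begin{equation*}
p_{1-\varepsilon}(a) = (1-\varepsilon)^{-d}\,\mathbb{E}_{u}\Big[\pi\Big(\tfrac{a-\varepsilon u}{1-\varepsilon}\Big)\Big].
\end{equation*}

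Next I expand the argument as $\frac{a-\varepsilon u}{1-\varepsilon} = a+\varepsilon(a-u)+\varepsilon^2(a-u)+\mathcal{O}(\varepsilon^3)$. A Taylor expansion of $\pi$ around $a$ to second order, with a third-order integral remainder controlled by the $C^3$ assumption, then gives
\begin{equation*}
\mathbb{E}_u\big[\pi(\cdot)\big] = \pi(a) + \varepsilon\, a^\top\nabla\pi(a) + \varepsilon^2 R(a,\varepsilon),
\end{equation*}
because the terms linear in $u$ vanish in expectation. The remainder $R(a,\varepsilon)$ involves only Gaussian moments of $u$ multiplied by derivatives of $\pi$ up to order three, so it is bounded uniformly in $a$ over the compact set. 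Taking logarithms, the factor $-d\log(1-\varepsilon)$ does not depend on $a$ and disappears under $\nabla_a$. Using $\log(\pi+\varepsilon a^\top\nabla\pi+\mathcal{O}(\varepsilon^2)) = \log\pi + \varepsilon\, a^\top\nabla_a\log\pi + \mathcal{O}(\varepsilon^2)$, which requires $\pi$ to be bounded below on the compact set, I obtain
\begin{equation*}
\log p_{1-\varepsilon}(a) = \text{const} + \log\pi(a) + \varepsilon\, a^\top\nabla_a\log\pi(a) + \mathcal{O}(\varepsilon^2).
\end{equation*}
Differentiating in $a$ gives $\mathbf{s}_{1-\varepsilon}-\mathbf{s}_1 = \varepsilon\,\nabla_a\big(a^\top\nabla_a\log\pi\big)+\mathcal{O}(\varepsilon^2)$. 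Taking norms yields the stated identity, and hence the $\mathcal{O}(\varepsilon)$ rate.

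The main obstacle is justifying the exchange of $\nabla_a$ with the $\mathcal{O}(\varepsilon^2)$ remainder: an expansion of $\log p$ alone does not control its gradient. I would handle this by performing the expansion directly on $\nabla_a p_{1-\varepsilon}(a) = (1-\varepsilon)^{-d-1}\mathbb{E}_u[\nabla\pi(\cdot)]$, differentiating under the integral sign, which is justified by dominated convergence since $\nabla\pi\in C^2$ with bounded derivatives. Expanding $\nabla\pi$ to first order with a $C^3$ remainder gives $\nabla p_{1-\varepsilon} = \nabla\pi + \varepsilon\big(\nabla\pi + \nabla^2\pi\, a\big) + \mathcal{O}(\varepsilon^2)$, up to the $(1-\varepsilon)^{-d}$ factor that cancels in the ratio $\nabla p/p$. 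A direct computation of this ratio then reproduces $\varepsilon\nabla_a(a^\top\nabla_a\log\pi)$.

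A second subtlety is that $\pi$ lives on a compact set, while the Gaussian variable $u$ is unbounded. I would either assume that $\pi$ extends to a $C^3$ function on $\mathbb{R}^d$ with bounded derivatives, or restrict to $a$ in the interior of the support and bound the tail contributions, which are $\mathcal{O}(e^{-c/\varepsilon^2})$. In addition, positivity of $\pi$ near $a$ is needed to take logarithms.
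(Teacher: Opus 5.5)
Your proposal is correct and follows essentially the same route as the paper. Both arguments expand the time-$(1-\varepsilon)$ marginal to first order in $\varepsilon$ and then get the score correction from the quotient rule. The paper does the expansion in two steps: it rescales to $f_\varepsilon(y)=(1-\varepsilon)^{-d}\pi_\beta(y/(1-\varepsilon))$ and then observes that the Gaussian smoothing only enters at order $\varepsilon^2$. Your single substitution $y=(a-\varepsilon u)/(1-\varepsilon)$ does both at once, and the factor $(1-\varepsilon)^{-d}$ it isolates plays the role of the constant $d$ in the paper's $\nabla_a\cdot(a\pi_\beta)/\pi_\beta = d + a^\top\nabla_a\log\pi_\beta$.

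Your direct expansion of $\nabla_a p_{1-\varepsilon}=(1-\varepsilon)^{-d-1}\mathbb{E}_u[\nabla\pi_\beta(\cdot)]$, together with the explicit requirement $\pi_\beta(a)>0$, properly justifies the step where the paper simply differentiates an $\mathcal{O}(\varepsilon^2)$ remainder. Your version is therefore, if anything, more rigorous than the paper's.
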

Detailed proof is given in Appendix~\ref{theorem:perturbation_error}. Based on the error analysis provided in the theorem above, we conclude that the Fisher information matrix $I(s,a)$ can be estimated via local perturbation with a controllable error bound. Thus, in flow policy, the optimization in \eqref{eq:KL_constrained_problem} can be surrogate as 
\begin{equation} \label{eq:final_formulation}
\begin{split}
    \max_{T} \quad 
& \mathbb{E}_{s\sim \mathcal{D},\, a\sim \pi_\beta(\cdot|s)} 
[Q_{\phi}(s,T_s(a))], \quad
\text{s.t.}\quad  \mathbb{E}_{s\sim \mathcal{D}, a\sim \pi_\beta(\cdot|s)}
\bigg[ \frac{1}{2} \delta(s, a)^\top I(s,a) \delta(s, a) \bigg] \le \epsilon .
\end{split}
\end{equation}
Here, the original intractable problem becomes a trust-region optimization \citep{nocedal2006numerical} as
\begin{talign} \label{eq:approximated_info_metric}
    I(s,a) \approx \frac{\big( t_\varepsilon v_{\beta}(t_\varepsilon, s, a) - a \big) \big( t_\varepsilon v_{\beta}(t_\varepsilon, s, a) - a \big)^{\top}}{(1 - t_\varepsilon)^2}
\end{talign}
is computed from the time perturbed score function.  The new formulation \eqref{eq:final_formulation} breaks previous bottlenecks (see the high-level comparison in Table~\ref{tab:comparison_regularization}):  
(1) \textit{Preservation of Support.} Unlike global $W_2$ optimization, the residual formulation enforces a local coupling where $\pi_\theta(a + \delta(s,a) | s) \approx \pi_\beta(a | s)$ due to the near-unit Jacobian $|\det \nabla_a T| \approx 1$ (since $\delta$ is small). This ensures that the probability mass is merely "nudged" within the existing manifold rather than being reallocated across space. By maintaining $\pi_\theta(a') \approx 0$ whenever $\pi_\beta(a|s) = 0$, the policy is effectively prevented from shifting into the "voids" between modes, thereby reinstating the support-preserving nature of the KL divergence. 
\begin{wraptable}{r}{0.7\linewidth}
\centering
\scriptsize
\setlength{\tabcolsep}{2.5pt}
\renewcommand{\arraystretch}{1.2}
\caption{Comparison of metric-aware behavioral regularizers in offline RL. Our method aligns with the KL-constrained formulation, preserving support and preventing mode averaging.}
\label{tab:comparison_regularization}
\begin{tabular}{lcccc}
\toprule
Method & Metric & Preserve Support & Robust to Mode Averaging & Anisotropic \\
\midrule
FQL  \citep{park2025flowqlearning} & $W_2^2$ & \xmark & \xmark & \xmark \\
DeFlow \citep{mu2026deflow} & $W_2^2$ & \xmark & \xmark & \xmark \\
FiDec (ours) & Fisher & \cmark & \cmark & \cmark \\
\bottomrule
\end{tabular}
\end{wraptable}
(2) \textit{Robust to Mode Averaging.} By restricting the update to a small displacement field $\delta$, the optimization is transformed from a global search for the Wasserstein barycenter into a localized geometric refinement. Since the transport map $T$ is anchored to near identity, it preserves the multimodal structure of $\pi_\beta$ and counteracts the "centripetal" force that typically leads to mass averaging. (3) \textit{Aligned Updates under Intrinsic Geometry.} Unlike the isotropic $L_2$ penalty induced by $W_2$, our formulation employs the local Fisher information metric to provide direction-aware regularization, better aligning policy updates with the structure of the behavioral distribution.

Furthermore, we analyze the value gap induced by the isotropic metric (i.e., $L_2$) from a trust-region optimization perspective. Under a first-order Taylor expansion, we show that this gap scales with the mismatch between the anisotropic metric and the $L_2$ metric as $\mathcal{O}\big(\nabla_a Q_\phi(s,a)^\top I(s,a)^{^{\dagger}} \nabla_a Q_\phi(s,a) - \nabla_a Q_\phi(s,a)^\top\nabla_a Q_\phi(s,a)\big)$ (see derivation in Appendix~\ref{append:optimal_policy_gap}). In general data distributions where the induced geometry is anisotropic, this gap remains non-vanishing.

\subsection{Practical Algorithm}
The Fisher Decorator (FiDec) algorithm (shown in Algorithm \ref{algo:FiDec} in Appendix~\ref{append:implementation}) is conceptually simple and can be decomposed into four components:  (1) training the critic $Q_{\phi}$, 
(2) learning the behavioral flow policy via the velocity field $v_\beta$, and 
(3) optimizing the transport map $T$, (4) updating the Lagrangian dual variables. 
Steps (1) and (2) largely follow the standard flow policy setting. In contrast, we emphasize the learning of the transport map $T$, and dual updates for the related Lagrangian formulation. 

\textbf{Learning Transport Map.}
We parameterize the transport map as a composition of a behavioral flow policy and a residual term. 
Specifically, the flow policy $\mu_\beta$ generates a base action, while a neural network $\delta_{\theta}$ learns a correction term to further refine the action. 
The residual term $\delta_{\theta}$ is constrained by a quadratic form induced by the local Fisher information matrix.

Following this formulation, we introduce the Lagrangian of \eqref{eq:final_formulation} as (see details in Appendix~\ref{append:trust_region}):
\begin{talign} \label{eq:primal-dual}
\min_{\lambda \geq 0} \max_{\delta_{\theta}} \;
\mathbb{E}_{s \sim \mathcal{D}, \, a \sim \pi_\beta}\Big[
    Q_\phi\big(s, T_s(\mu_\beta(s, z))\big) 
    - \lambda \big( \frac{1}{2} \delta_{\theta}(s,a)^\top I(s,a)\delta_{\theta}(s,a) - \epsilon \big)
\Big],
\end{talign}
where $T_s(\mu_\beta(s, z)) = \mu_\beta(s, z) + \delta_\theta(s, \mu_\beta(s, z))$ denotes the learned transport map, $I(s,a)$ is the approximated Fisher information matrix shown in \eqref{eq:approximated_info_metric}, $\lambda$ is the dual variable, and $\epsilon$ controls the strength of the constraint. 
Learning the transport map is equivalent to solving the inner problem. 

\textbf{Dual Updates.} We update the dual variable $\lambda$ via projected gradient ascent to enforce the constraint
\begin{talign}
\lambda \leftarrow \mathrm{ReLU}\!\left(
\lambda + \eta \, \mathbb{E}_{s \sim \mathcal{D}, \, a \sim \pi_\beta}\big[
\frac{1}{2}\delta_{\theta}(s,a)^\top I(s,a)\delta_{\theta}(s,a) - \epsilon
\big]
\right),
\end{talign}
where $\eta$ is the learning rate. This update increases $\lambda$ when the constraint is violated and decreases it otherwise, thereby adaptively balancing policy improvement and constraint satisfaction.

We optimize the objective~\eqref{eq:primal-dual} via an alternating primal-dual procedure.  In each iteration, we update the transport map parameter $\delta_\theta$ by maximizing the Lagrangian with fixed $\lambda$, and update $\lambda$ via projected gradient ascent.  In practice, expectations are approximated using minibatch samples, and gradients through the behavioral policy $\mu_\beta$ are stopped when optimizing $\delta_\theta$ for stability.

\section{Numerical Experiments}

\begin{wrapfigure}{r}{0.52\linewidth}
\centering
\vspace{-15pt}
\includegraphics[width=0.9\linewidth]{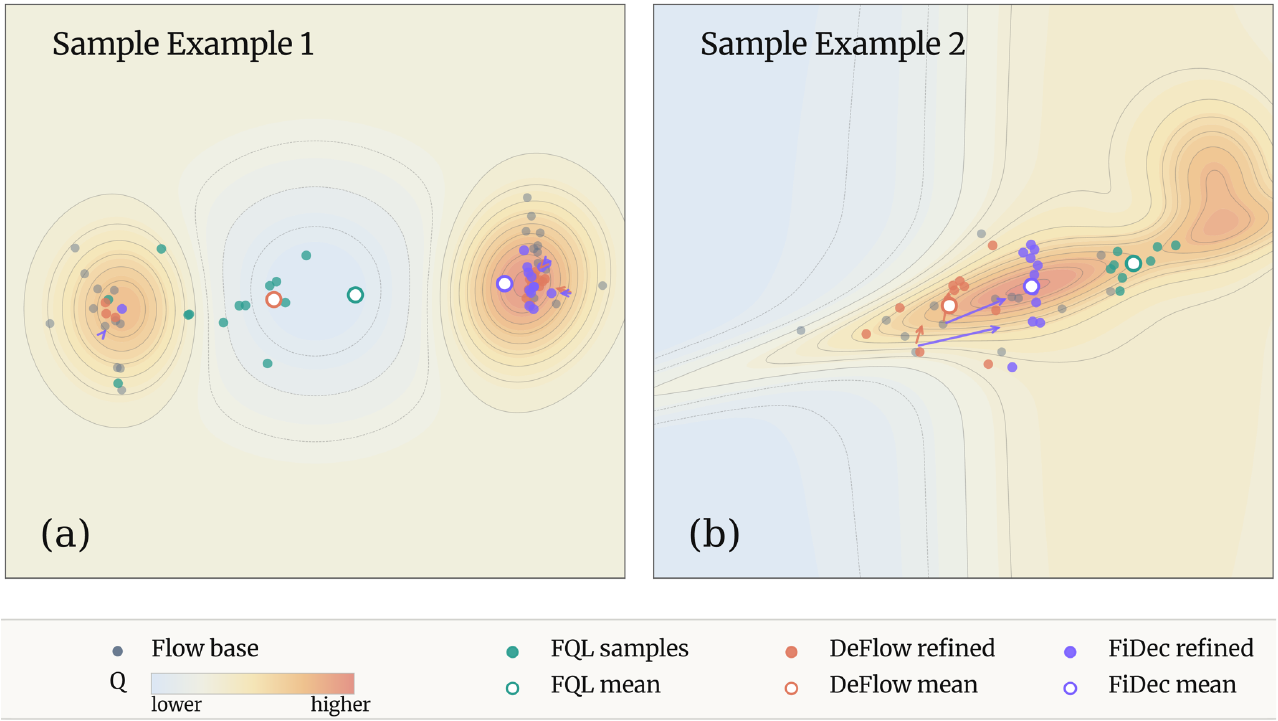}
\vspace{-1pt}
\caption{Isotropic vs. anisotropic policy refinement. 
(a) In multimodal settings, isotropic $W_2$-based methods exhibit mode averaging or interpolation: FQL collapses toward intermediate regions, while DeFlow spreads mass averagely across modes with its mean located in low-value areas. In contrast, FiDec preserves multimodality while shifting mass toward more favorable regions. 
(b) Along the distribution support, isotropic regularization either leads to OOD drift (FQL) or fails to reach high-value regions (DeFlow). FiDec instead follows the support and refines the policy toward higher-value regions.}
\label{fig:examples}
\vspace{-19pt}
\end{wrapfigure}

In this section, we benchmark our proposed algorithm, FiDec, against state-of-the-art offline and offline-to-online RL methods across a diverse suite of challenging tasks. We answer the questions if our method can break the existing bottleneck shown in Table~\ref{tab:comparison_regularization}. Furthermore, we conduct extensive ablation studies and sensitivity analyses to elucidate the impact of key hyperparameter choices.


\textbf{Benchmarks.} Followed previous work \citep{tarasov2023revisiting, rafailov2024d5rl, park2025flowqlearning}, we assess our method on multiple challenging benchmarks, including OGBench \citep{park2024ogbench} and D4RL \citep{fu2020d4rl}.

\textbf{Baselines for Offline Setting.} To ensure a comprehensive evaluation, we compare our method against eleven recent offline RL algorithms spanning a broad range of architectural designs and policy extraction paradigms. These include: (1) Gaussian policies: BC, IQL \citep{kostrikov2021offline}, and ReBRAC \citep{tarasov2023revisiting}; (2) diffusion policies: IDQL \citep{hansen2023idql}, SRPO \citep{chen2023score}, and CAC \citep{ding2023consistency}; and (3) flow policies: FAWAC \citep{ashvin2020accelerating}, IFQL \citep{zhang2025energy}, FQL \citep{park2025flowqlearning}, and DeFlow \citep{mu2026deflow}. \textbf{Baselines for Offline-to-Online Setting.} For offline-to-online RL experiments, we consider prior offline RL methods (IQL, ReBRAC, IFQL, FQL and DeFlow) that support fine-tuning and achieve strong performance. Meanwhile, we consider two other strong algorithms designed for online fine-tuning: Cal-QL \citep{nakamoto2023cal} and RLPD \citep{ball2023efficient}.

\textbf{Result Analysis.} Our result analysis is structured around
the core contributions of this work.

\textit{1. Can FiDec achieve strong performance in the offline setting across diverse benchmark tasks?} The answer is \textbf{YES}. As shown in Table~\ref{tab:offline_rl_latest_final}, FiDec consistently outperforms prior methods, including those based on Gaussian, diffusion, and flow policies. In particular, FiDec demonstrates clear advantages over recent strong baselines such as FQL and DeFlow, especially on challenging OGBench tasks (e.g., Humanoid, Puzzle, Antsoccer), which are inherently multimodal. This implicitly suggests that FiDec is able to effectively capture and preserve complex multimodal behavior distributions, while avoiding mode collapse and maintaining generalization across diverse task settings.

\begin{table*}[ht]
\centering
\caption{Offline RL results. Results are averaged over 8 seeds. Our method (FiDec) achieves strong performance across 73 diverse tasks in 4 benchmarks: the top three methods are highlighted in \colorbox{cyan!15}{(\textbf{1st})},  \colorbox{green!15}{(2nd)} and \colorbox{yellow!15}{(3rd)}. We follow the default evaluation setting from \cite{park2025flowqlearning} and \cite{mu2026deflow}. More detailed results for each specific task are reported in  Table~\ref{tab:full_offline_rl_latest_final}.}
\label{tab:offline_rl_latest_final}
\scriptsize
\setlength{\tabcolsep}{3.2pt}
\renewcommand{\arraystretch}{0.75}
\begin{adjustbox}{max width=\textwidth}
\begin{tabular}{>{\raggedright\arraybackslash}p{4.8cm}cccccccccccc}
\toprule
& \multicolumn{3}{c}{Gaussian Policies} & \multicolumn{3}{c}{Diffusion Policies} & \multicolumn{6}{c}{Flow Policies} \\
\cmidrule(lr){2-4} \cmidrule(lr){5-7} \cmidrule(lr){8-13}
Task Category & BC & IQL & ReBRAC & IDQL & SRPO & CAC & FAWAC & FBRAC & IFQL & FQL & DeFlow & \textbf{FiDec} \\
\midrule
OGBench antmaze-large-singletask (5) & 11$\pm$1 & 53$\pm$3 & \colorbox{yellow!15}{81$\pm$5} & 21$\pm$5 & 11$\pm$4 & 33$\pm$4 & 6$\pm$1 & 60$\pm$6 & 28$\pm$5 & 79$\pm$3 & \colorbox{green!15}{81$\pm$11} & \textbf{\colorbox{cyan!15}{87$\pm$1}} \\

OGBench antmaze-giant-singletask (5) & 0$\pm$0 & 4$\pm$1 & \colorbox{cyan!15}{\textbf{26$\pm$8}} & 0$\pm$0 & 0$\pm$0 & 0$\pm$0 & 0$\pm$0 & 4$\pm$4 & 3$\pm$2 & \colorbox{yellow!15}{9$\pm$6} & 5$\pm$7 & \colorbox{green!15}{13$\pm$4} \\

OGBench humanoidmaze-medium-singletask (5) & 2$\pm$1 & 33$\pm$2 & 22$\pm$8 & 1$\pm$0 & 1$\pm$1 & 53$\pm$8 & 19$\pm$1 & 38$\pm$5 & \colorbox{green!15}{60$\pm$14} & \colorbox{yellow!15}{58$\pm$5} & 57$\pm$29 & \colorbox{cyan!15}{\textbf{72$\pm$2}} \\

OGBench humanoidmaze-large-singletask (5) & 1$\pm$0 & 2$\pm$1 & 2$\pm$1 & 1$\pm$0 & 0$\pm$0 & 0$\pm$0 & 0$\pm$0 & 2$\pm$0 & \colorbox{cyan!15}{\textbf{11$\pm$2}} & 4$\pm$2 & \colorbox{yellow!15}{5$\pm$5} & \colorbox{green!15}{8$\pm$3} \\

OGBench antsoccer-arena-singletask (5) & 1$\pm$0 & 8$\pm$2 & 0$\pm$0 & 12$\pm$4 & 1$\pm$0 & 2$\pm$4 & 12$\pm$0 & 16$\pm$1 & 33$\pm$6 & \colorbox{yellow!15}{60$\pm$2} & \colorbox{green!15}{62$\pm$20} & \colorbox{cyan!15}{\textbf{64$\pm$2}} \\

OGBench cube-single-singletask (5) & 5$\pm$1 & 83$\pm$3 & 91$\pm$2 & \colorbox{green!15}{95$\pm$2} & 80$\pm$5 & 85$\pm$9 & 81$\pm$4 & 79$\pm$7 & 79$\pm$2 & \colorbox{cyan!15}{\textbf{96$\pm$1}} & 90$\pm$6 & \colorbox{yellow!15}{94$\pm$1} \\

OGBench cube-double-singletask (5) & 2$\pm$1 & 7$\pm$1 & 12$\pm$1 & 15$\pm$6 & 2$\pm$1 & 6$\pm$2 & 5$\pm$2 & 15$\pm$3 & 14$\pm$3 & \colorbox{yellow!15}{29$\pm$1} & \colorbox{green!15}{38$\pm$22} & \colorbox{cyan!15}{\textbf{43$\pm$4}} \\

OGBench scene-singletask (5) & 5$\pm$1 & 28$\pm$1 & 41$\pm$3 & 46$\pm$3 & 20$\pm$1 & 40$\pm$7 & 30$\pm$3 & 45$\pm$5 & 30$\pm$3 & \colorbox{green!15}{\textbf{56$\pm$2}} & \colorbox{yellow!15}{51$\pm$40} & \colorbox{cyan!15}{\textbf{56$\pm$1}} \\

OGBench puzzle-3x3-singletask (5) & 2$\pm$0 & 9$\pm$1 & 21$\pm$1 & 10$\pm$2 & 18$\pm$1 & 19$\pm$0 & 6$\pm$2 & 14$\pm$4 & 19$\pm$1 & \colorbox{green!15}{30$\pm$1} & \colorbox{yellow!15}{24$\pm$38} & \colorbox{cyan!15}{\textbf{43$\pm$1}} \\

OGBench puzzle-4x4-singletask (5) & 0$\pm$0 & 7$\pm$1 & 14$\pm$1 & \colorbox{cyan!15}{\textbf{29$\pm$3}} & 10$\pm$3 & 15$\pm$3 & 1$\pm$0 & 13$\pm$1 & \colorbox{green!15}{25$\pm$5} & 17$\pm$2 & 4$\pm$2 & \colorbox{yellow!15}{15$\pm$1} \\

D4RL antmaze (6) & 17 & 57 & 78 & 79 & 74 & 30$\pm$3 & 44$\pm$3 & 64$\pm$7 & 65$\pm$7 & \colorbox{green!15}{84$\pm$3} & \colorbox{yellow!15}{81$\pm$7} & \colorbox{cyan!15}{\textbf{87$\pm$1}} \\

D4RL adroit (12) & 48 & 53 & \colorbox{cyan!15}{\textbf{59}} & \colorbox{yellow!15}{52$\pm$1} & 51$\pm$1 & 43$\pm$2 & 48$\pm$1 & 50$\pm$2 & \colorbox{green!15}{52$\pm$1} & 52$\pm$1 & 34$\pm$41 & 46$\pm$2 \\

Visual manipulation (5) & -- & 42$\pm$4 & \colorbox{yellow!15}{60$\pm$2} & -- & -- & -- & -- & 22$\pm$2 & 50$\pm$5 & \colorbox{cyan!15}{\textbf{65$\pm$2}} & 59$\pm$38 & \colorbox{green!15}{\textbf{64$\pm$1}} \\

\bottomrule
\end{tabular}
\end{adjustbox}
\end{table*}

\begin{wrapfigure}{r}{0.6\linewidth}
\centering
\includegraphics[width=\linewidth]{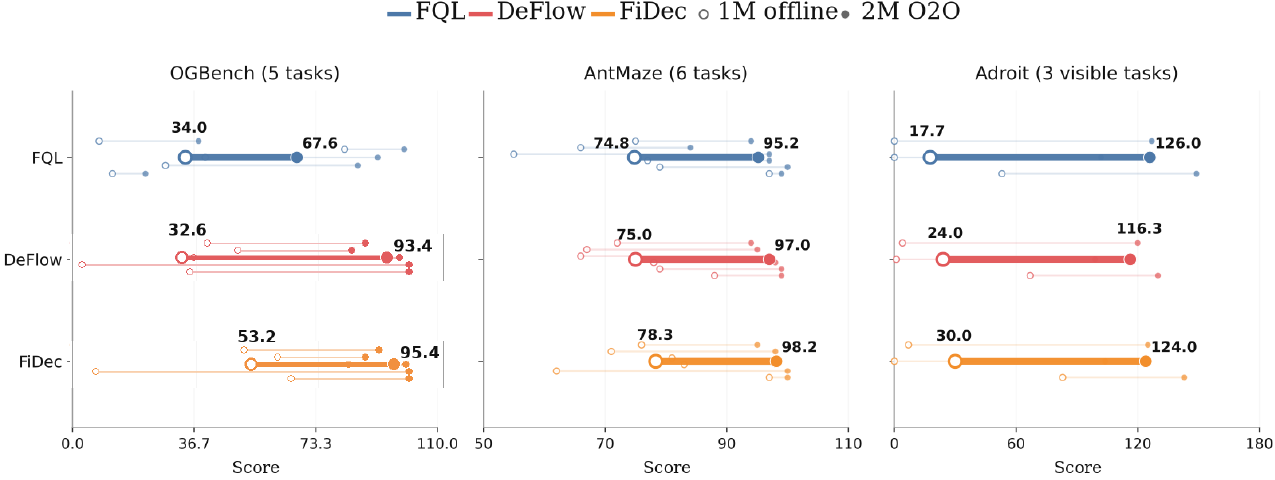}
\vspace{-10pt}
\caption{Offline-to-online fine-tuning performance. 
We evaluate 14 tasks. All methods continue training from the offline objective, with fine-tuning starting after 1M gradient steps (offline prior: $\circ$, fine-tuning: $\bullet$). Light curves denote individual tasks, and dark curves show the average.}
\label{fig:o2o_fine_tuning}
\vspace{-10pt}
\end{wrapfigure}

\textit{2. Can our new formulation in~\eqref{eq:primal-dual} go beyond isotropic regularization to enable more effective policy refinement?} \textbf{YES}. As shown in Figure~\ref{fig:examples}(a), both FQL and DeFlow exhibit limitations under isotropic regularization. FQL suffers from mode averaging, while DeFlow produces a distribution that interpolates between modes, with its mean located in low-value regions. This behavior arises because the $W_2$ distance induces an isotropic $L_2$ penalty, which treats all directions uniformly and ignores the underlying score field of the distribution, resulting in ambiguous update directions. In contrast, our formulation is anisotropic and metric-aware: the update directions are guided by the local score field through the local Fisher information matrix. This preserves multimodality while biasing updates toward favorable regions: weaker-score areas (e.g., the left mode) allow larger displacements, whereas stronger-score regions are more constrained. A similar phenomenon is observed in Figure~\ref{fig:examples}(b). The behavioral policy is supported on a thin manifold. Under isotropic regularization, DeFlow produces a more concentrated distribution but fails to reach higher-value regions, while FQL deviates from the data support and exhibits out-of-distribution drift. In contrast, FiDec follows the geometric structure and moves toward higher-value regions, demonstrating both optimality and effectiveness in policy refinement. This generally better performance in the offline setting (see Table~\ref{tab:offline_rl_latest_final}) also verifies this point; see Figure~\ref{fig:new_examples} for more example details. We provide more examples in Appendix~\ref{append:additional_result}.

\textit{3. Can our new formulation in~\eqref{eq:primal-dual} be seamlessly adopted in the offline-to-online setting?} The answer is \textbf{YES}. FiDec can be directly fine-tuned without any modification and consistently outperforms prior flow policies (Table~\ref{tab:o2o_rl_latest_final} in Appendix~\ref{append:additional_result}, Figure~\ref{fig:o2o_fine_tuning}).  We attribute this advantage to two key factors. First, FiDec provides a stronger offline prior. By parameterizing the policy as a local transport map, it retains the expressive structure of the underlying flow policy while enabling flexible local refinement. As a result, FiDec achieves a higher-quality initialization compared to prior methods, where FQL often underperforms due to its limited expressiveness under one-step distillation. Second, FiDec benefits from anisotropic metric-aware updates during fine-tuning. Guided by the local Fisher information matrix, the policy can efficiently move probability mass with lower cost in low-score regions (due to weaker score magnitude) toward more favorable regions. This leads to faster improvement and a higher concentration of samples in high-$Q$ regions, resulting in more effective online refinement.

\begin{wraptable}{l}{0.48\linewidth}
\vspace{-10pt}
\centering
\caption{Isotropic vs. anisotropic metrics under identical hyperparameters and policy parameterization under offline setting.}
\label{tab:iso_vs_aniso}
\resizebox{\linewidth}{!}{
\begin{tabular}{lcc}
\toprule
Env & Anisotropic (Fisher) & Isotropic ($L_2$) \\
\midrule
antsoccer-arena-navigate-singletask-task4      & \colorbox{cyan!15}{\textbf{0.49$\pm$0.04}} & 0.43$\pm$0.08 \\
cube-double-play-singletask-task2              & \colorbox{cyan!15}{\textbf{0.58$\pm$0.02}} & 0.50$\pm$0.03 \\
humanoidmaze-medium-navigate-singletask-task1  & \colorbox{cyan!15}{\textbf{0.76$\pm$0.07}} & 0.35$\pm$0.07 \\
puzzle-4x4-play-singletask-task4               & \colorbox{cyan!15}{\textbf{0.05$\pm$0.01}} & 0.04$\pm$0.01 \\
scene-play-singletask-task2                    & \colorbox{cyan!15}{\textbf{0.70$\pm$0.07}} & 0.51$\pm$0.06 \\
\midrule
Avg. Performance & \textbf{0.51$\pm$0.25} & 0.36$\pm$0.17 \\
\bottomrule
\end{tabular}}
\end{wraptable}

\textit{4. Is the anisotropic metric always better than the isotropic one under the same policy parameterization?} \textbf{YES}. Under the same residual displacement parameterization and training setup, we observe that the anisotropic metric induced by the Fisher information matrix consistently outperforms its isotropic counterpart ($L_2$), as shown in Table~\ref{tab:iso_vs_aniso}. This suggests that properly accounting for the local geometric properties of the behavioral policy distribution leads to more effective policy updates and improved performance, which is consistent with our theoretical analysis of the optimization gap in Appendix~\ref{append:optimal_policy_gap} from a trust-region perspective.

\begin{wraptable}{r}{0.6\linewidth}
\vspace{-12pt}
\centering
\footnotesize
\caption{Training efficiency comparison of flow policies.}
\begin{tabular}{lccc}
\toprule
Method & FQL & DeFlow & FiDec \\
\midrule
Time (ms/step) & 2.13 $\pm$ 0.14 & 2.61 $\pm$ 0.18 & \underline{2.72 $\pm$ 0.20} \\
\bottomrule
\end{tabular}
\label{tab:efficiency}
\vspace{-10pt}
\end{wraptable}

\textit{5. How efficient is FiDec compared to existing approaches?} FiDec is computationally efficient at both the algorithmic and implementation levels. It avoids expensive procedures such as ODE solving and backpropagation through time, and performs policy refinement via a simple one-step residual update. The additional cost from the Fisher information matrix is minimal, as it is computed from the flow velocity using an outer product and applied through a lightweight quadratic form in JAX. From a framework perspective, FiDec operates in a one-step optimization scheme without distillation or iterative generation, and the primal-dual updates introduce negligible overhead. As shown in Table~\ref{tab:efficiency}, FiDec achieves competitive training speed, indicating that the added anisotropic geometric structure does not compromise efficiency.

\begin{wrapfigure}{r}{0.35\linewidth}
\centering
\vspace{0pt}
\includegraphics[width=0.85\linewidth]{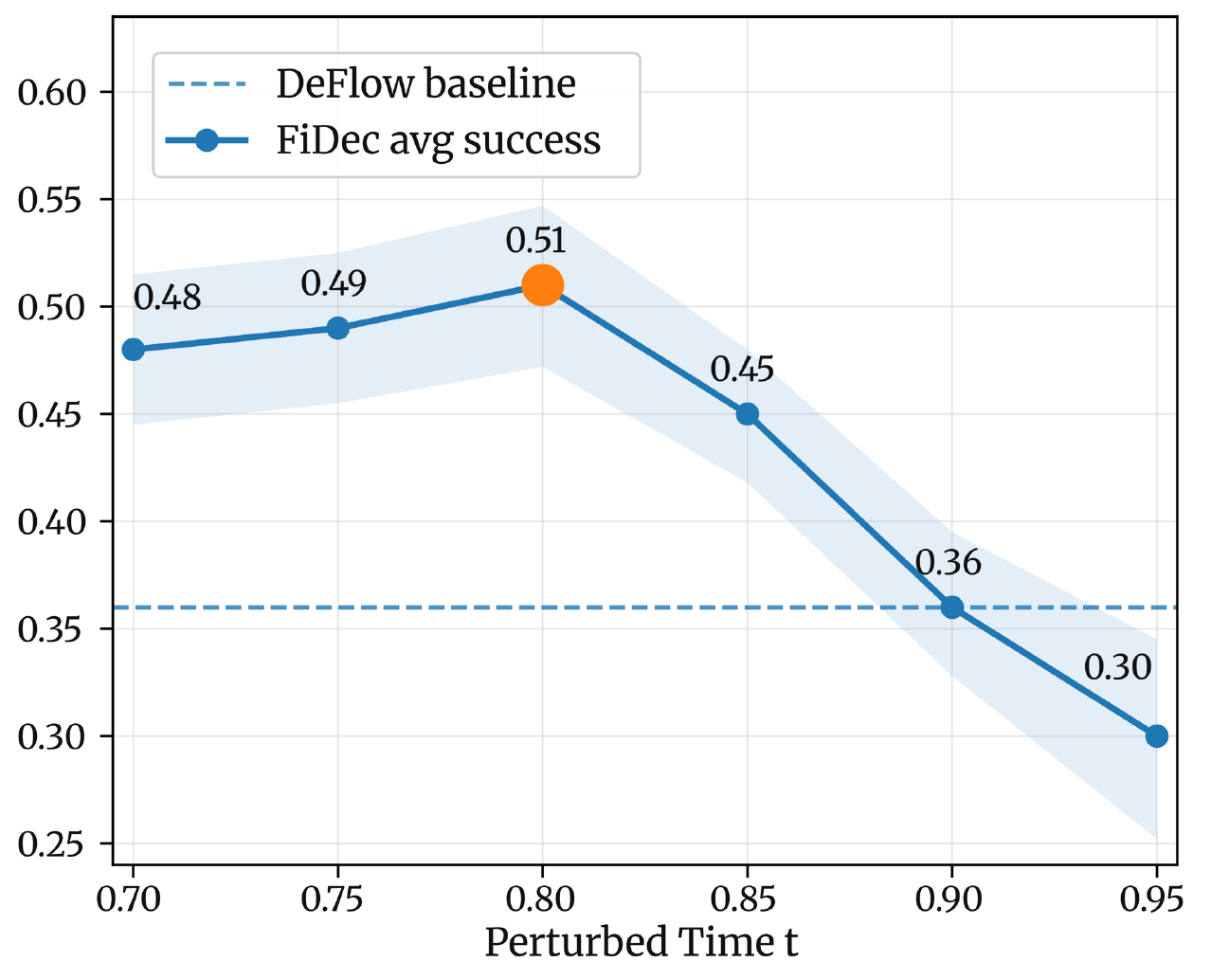}
\vspace{-5pt}
\caption{Ablations on perturbed time $t_{\varepsilon}$.}
\label{fig:perturbed_time}
\vspace{-10pt}
\end{wrapfigure}

\textit{6. Can the hyperparameter ``perturbed time $t_\varepsilon$'' be determined from first principles, rather than through heuristic tuning?} \textbf{YES}. We provide a principled characterization of the optimal perturbation by analyzing the trade-off between approximation bias and numerical error (see Appendix~\ref{append:perturbed_time}). This yields an optimal scaling $\varepsilon^* \sim \mathcal{O}(\delta_{\text{FP32}}^{\nicefrac{1}{4}})$, which depends on both machine precision $\delta$ and the spatial deformation $\left\| \nabla_a \big( a^\top \nabla_a \log \pi_\beta(a|s) \big) \right\|^2$ of the behavioral distribution.  In practice, this analysis suggests that $\varepsilon \sim \mathcal{O}(10^{-1.5})$ under standard FP32 precision, which aligns with empirical performance (Figure~\ref{fig:perturbed_time}), where the best results are achieved within the range of $t_\varepsilon\in[0.70, 0.80]$. Detailed per-task results are reported in Table~\ref{tab:perturbed_time_full},  Appendix~\ref{Append:more_results}.

\section{Conclusion}
In this paper, we identify a fundamental geometric mismatch in flow-based offline RL: KL-constrained optimization induces an anisotropic geometry, while commonly used $W_2$ surrogates reduce to isotropic regularization, leading to suboptimal updates. To address this, we formulate policy refinement as a local transport map and derive a tractable quadratic approximation of the KL objective. This yields an efficient anisotropic optimization framework, instantiated as FiDec, that enables stable and expressive policy improvement without expensive iterative procedures.  Empirically, FiDec achieves state-of-the-art performance across diverse benchmarks. Our findings underscore the importance of geometry-aware policy optimization in offline RL.


Despite its effectiveness, our method has several limitations. First, FiDec relies on an approximate estimation of the Fisher information matrix derived from the flow velocity. While efficient, this approximation may be inaccurate when the learned flow model is imperfect or when the underlying distribution exhibits highly complex structure. Second, although we provide a principled analysis of the perturbation scale, its optimal value still depends on unknown properties of the data distribution and may require mild tuning in practice. An interesting direction for future work is to further exploit the analytical formula of the optimal policy refinement. As shown in Equation~\eqref{eq:closed_form_delta} in Appendix~\ref{append:trust_region}, the optimal displacement admits a closed-form solution resembling a pointwise natural gradient update in the action space. This suggests that, given an accurate estimate of the local Fisher information matrix, policy refinement could potentially be performed without iterative optimization.

\bibliography{ref}
\bibliographystyle{unsrt}

\clearpage
\appendix
\section*{Notation}
\begin{center}
\begin{tabular}{ll} 
\toprule
\textbf{Notation} & \textbf{Meaning} \\
\midrule
$a$ & action \\
$s$ & state \\
$\mathbf{s}$ & score function \\
$t$ & time step \\
$r$ & reward function \\
$v$ & velocity field (or flow vector) \\

$\mathcal{A}$ & action space \\
$\mathcal{S}$ & state space \\
$\mathcal{B}$ & Borel $\sigma$-algebra \\
$\mathcal{D}$ & offline dataset \\

$I$ & local Fisher information matrix \\
$D_{\mathrm{KL}}$ & Kullback--Leibler divergence \\
$L_2$ & Lebesgue space equipped with $2-$norm \\
$W_2$ & 2-Wasserstein distance \\

$\mathcal{L}$ & Lagrangian \\
$\lambda$ & Lagrange multiplier (dual variable) \\

$\mathcal{N}$ & Gaussian distribution \\
$Q$ & action-value function \\
$T$ & local transport map \\

$\Xi$ & set of couplings (transport plans) \\
$\xi$ & a coupling (joint distribution) \\

$\psi$ & flow induced by a velocity field \\
$\delta$ & local displacement \\

$\mu_{\beta}$ & policy generated from the original flow \\
$\mu_{\theta}$ & policy generated from the refined flow \\

$\pi_{\beta}$ & behavior policy \\
$\pi_{\theta}$ & learned policy \\

$\#$ & pushforward operator \\
$\nabla \cdot$ & divergence operator \\

$\det(\cdot)$ & determinant of a matrix \\
$\operatorname{tr}(\cdot)$ & trace of a matrix \\
$\Box^{\dagger}$ & pseudo-inverse of a matrix \\
\bottomrule
\end{tabular}
\end{center}

\clearpage
\section*{Appendix Overview}

This appendix provides a comprehensive and self-contained presentation of the theoretical foundations, proofs, implementation details, and additional experimental results of our method.

Appendix~\ref{append:literature_review} provides provide a more comprehensive literature review covering offline RL, generative policies and residual policies. 

In Appendix~\ref{append:technical_definition}, we introduce the key technical definitions, including probability measures, Wasserstein distance, transport maps, and the local Fisher information matrix, which form the basis of our geometric formulation.  
Appendix~\ref{append:technical_proof} presents detailed derivations and proofs, including the change-of-variable formulation under transport maps, the second-order expansion of the KL divergence, and the connection to the local Fisher information matrix.  
We further analyze the trust-region optimization perspective and characterize the optimality gap induced by isotropic approximations.

Appendix~\ref{append:implementation} provides the full implementation details and algorithmic description of our method.  
Finally, Appendix~\ref{Append:more_results} includes additional experimental results, benchmark descriptions, and hyperparameter settings.

Overall, the appendix is intended to complement the main paper by providing rigorous theoretical justification and complete experimental transparency.

\clearpage
\section{More Comprehensive Literature Review} \label{append:literature_review}
We provide a more comprehensive literature review covering offline RL, generative policies and residual policies. 

\textbf{Offline RL.} Offline RL aims to learn a policy solely from previously collected data. Over the years, many offline RL methods and techniques have been proposed, most of which can be viewed through a common principle: maximizing expected return while controlling the discrepancy between the state-action distribution induced by the learned policy and that of the dataset \citep{levine2020offline, sikchi2023dual}. Prior work has instantiated this principle in various forms, including behavioral regularization \citep{ashvin2020accelerating, fujimoto2021minimalist, tarasov2023corl}, conservative value estimation \citep{kumar2020conservative}, in-sample maximization \citep{kostrikov2021offline, xu2023offline, garg2023extreme}, out-of-distribution detection \citep{yu2020mopo, nikulin2023anti}, dual RL \citep{janner2021offline}, and generative modeling \citep{chen2021decision, janner2021offline}. After offline training, the learned policy can be further improved through additional online interaction, a setting commonly known as offline-to-online RL, for which several methods have been developed \citep{lee2022offline, song2022hybrid, nakamoto2023cal, ball2023efficient, park2025flowqlearning, zhang2026reform}. Our method, FiDec, is applicable to both offline and offline-to-online RL, and introduces anisotropic constraints to guide policy improvement.

\textbf{RL with Generative Models.} Inspired by the recent success of iterative generative modeling methods, including denoising diffusion \citep{sohl2015deep, ho2020denoising, song2020score} and flow matching \citep{lipman2022flow, esser2024scaling}, a growing body of work has explored their integration into reinforcement learning. Prior studies have used iterative generative models for planning, hierarchical learning, and policy parameterization \citep{venkatraman2023reasoning, cheng2025safe, ma2025efficient, dong2025maximum}, as well as for world modeling and data augmentation \citep{farebrother2025temporal, alonso2024diffusion, hafner2025training}. Our approach falls under flow-based policy parameterization: we represent the policy with an expressive flow network capable of capturing arbitrarily complex behavior-policy distributions.

\textbf{Residual Policy.} Residual learning has a long and influential history in machine learning, ranging from ResNets \citep{he2016deep} in computer vision to residual policy learning in robotics \citep{silver2018residual}. More recent studies \citep{ankile2025imitation, xiao2025self, yuan2024policy} extend this idea by training a refinement module that adjusts actions sampled from a frozen policy through online interaction. Mu \citep{mu2026deflow} further investigate the use of residual policies within flow policies for offline reinforcement learning. In contrast, our work examines this paradigm from a different perspective: we study how the intrinsic properties of residual policies and flow matching can be leveraged to improve policy learning through an anisotropic geometric view. 

\clearpage
\section{Technical Definitions} \label{append:technical_definition}

\begin{definition}[Probability Measure  \citep{renyi2007probability}]
Let $(\mathcal{X}, \mathcal{B})$ be a measurable space. A measure $\mu : \mathcal{B} \to [0,1]$ is called a \emph{probability measure} if:
\begin{itemize}
    \item (null empty set) $\mu(\emptyset) = 0$,
    \item (countable additivity) $\mu$ is countably additive, i.e., for any sequence of disjoint sets $\{B_i\}_{i=1}^\infty \subseteq \mathcal{B}$,
    \[
    \mu\!\left(\bigcup_{i=1}^\infty B_i\right) = \sum_{i=1}^\infty \mu(B_i),
    \]
    \item (normalization) $\mu(\mathcal{X}) = 1$.
\end{itemize}
\end{definition}

\begin{definition}[2-Wasserstein Distance \citep{guionnet2004lectures}]
Let $\mu, \nu$ be probability measures on $\mathcal{X}$ with finite second moments. The 2-Wasserstein distance between $\mu$ and $\nu$ is defined as
\[
W_2(\mu, \nu) = \left( \inf_{\gamma \in \Xi(\mu, \nu)} \int_{\mathcal{X} \times \mathcal{X}} \|x - y\|^2 \, d\gamma(x,y) \right)^{1/2},
\]
where $\Xi(\mu, \nu)$ denotes the set of all couplings of $\mu$ and $\nu$.
\end{definition}

\begin{definition}[Transport Map \citep{federer2014geometric}] \label{def:transport_map}
Let $(\mathcal{X}, \mathcal{B})$ be a measurable space, and let $\mu$ and $\nu$ be probability measures on $\mathcal{X}$. A measurable map $T : \mathcal{X} \to \mathcal{X}$ is called a \emph{transport map} from $\mu$ to $\nu$ if it pushes $\mu$ forward to $\nu$, i.e.,
\[
\nu = T_{\#}\mu,
\]
meaning that for every measurable set $B \subseteq \mathcal{X}$,
\[
\nu(B) = \mu\big(T^{-1}(B)\big).
\]
Equivalently, for any measurable function $f$,
\[
\int f(y)\, d\nu(y) = \int f(T(x))\, d\mu(x).
\]
\end{definition}

\begin{remark}
In our setting, for each state $s$, the mapping $T_s: \mathcal{A} \to \mathcal{A}$ defines a transport map between action distributions. Specifically, let $\pi_\beta(\cdot | s)$ denote the well-defined behavioral policy distribution and let $a' = T_s(a)$ with $a \sim \pi_\beta(\cdot | s)$. Then the refined policy $\pi_\theta(\cdot | s)$ is given by the pushforward
\[
\pi_\theta(\cdot | s) = (T_s)_{\#} \, \pi_\beta(\cdot | s).
\]
Thus, for each fixed state $s$, the map $T_s$ transports the behavioral policy distribution $\pi_\beta(\cdot | s)$ to the refined distribution $\pi_\theta(\cdot | s)$.
\end{remark}

\begin{definition}[Local Fisher Information Matrix \citep{amari2016information}]
\label{def:local_info_matrix}
Let $p(x)$ be a smooth density on $\mathbb{R}^d$ with score function $\mathbf{s}(x) \coloneqq \nabla_x \log p(x)$.
The local Fisher information matrix is defined as
$$I(x) \coloneqq \mathbf{s}(x) \mathbf{s}(x) ^\top.$$
\end{definition}

\begin{remark}[Relation to information geometry]
The local Fisher information matrix $I(x)$ can be viewed as the local contribution (integrand) of the classical Fisher information matrix. 
$$I(p) \coloneqq \mathbb{E}_{x\sim p(x)} \mathbb{E}[\mathbf{s}(x) \mathbf{s}(x)^\top].$$
In information geometry, $I(p)$ defines a local Riemannian metric on a statistical manifold of probability distributions. However, the local object $I(x)$ itself does not directly define a Riemannian metric on the sample space, since it is local Fisher information matrix. Nevertheless, it encodes the local direction of maximal sensitivity of the log-density via the score function, and can be interpreted as the infinitesimal contribution to the local Fisher information metric under expectation.
\end{remark}

Motivated by the above, our work focuses on the cost induced by local displacements in the action space. In particular, given a displacement field $\delta(s,a)$, the associated infinitesimal cost is governed by the local Fisher information matrix through
$$(\delta(s,a)^\top \mathbf{s}(a|s))^2 = \delta(s,a)^\top I(s,a)\delta(s,a),$$
where $I(s,a) = \mathbf{s}(a|s)\mathbf{s}(a|s)^{\top}$ is the local Fisher information matrix.
This reveals that the underlying anisotropic geometric structure induced by the distribution is inherently anisotropic: \textit{only directions aligned with the score function $\mathbf{s}(a|s)$ contribute to the cost, while directions orthogonal to $\mathbf{s}(a|s)$ incur no first-order variation. Consequently, the geometry is highly directional and depends on the local behavior of the log-density.}

\clearpage
\section{Technical Proofs} \label{append:technical_proof}

\subsection{Refined Policy Distribution under Transport Map $T$} 

\begin{lemma}[Change of Variables under Transport Maps] \label{lemma:change_of_variable_transport}
For all state $s\in \mathcal{S}$, let $T_s: \mathcal{A} \to \mathcal{A}$ be an invertible and differentiable transport map for a fixed state $s$, and let $\pi_\beta(\cdot | s)$ be a probability distribution on $\mathcal{A}$. Define the pushforward distribution
\[
\pi_\theta(\cdot | s) = (T_s)_{\#} \, \pi_\beta(\cdot | s).
\]
Then the density of $\pi_\theta$ is given by the change-of-variables formula:
\[
\pi_{\theta}(a' | s)
=
\pi_\beta\big(T^{-1}_s(a')| s\big)
\cdot
\left|\det \nabla_{a'} T^{-1}_s(a')\right|,
\]
where $a' = T_s(a)$ and $\nabla_{a'} T^{-1}_s(a')$ denotes the Jacobian of the inverse map with respect to its input.
\end{lemma}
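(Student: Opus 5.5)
The plan is to verify the formula through the defining property of the pushforward (Definition~\ref{def:transport_map}) and a change of variables in the Lebesgue integral. Fix $s\in\mathcal{S}$ and suppress it where harmless. Assume $\pi_\beta(\cdot|s)$ has a density with respect to Lebesgue measure on $\mathcal{A}$. Assume also that $T_s$ is a $C^1$ diffeomorphism of $\mathcal{A}$ onto its image, i.e.\ invertible with differentiable inverse; this is how we read ``invertible and differentiable''. By Definition~\ref{def:transport_map}, for every bounded measurable test function $f$,
\begin{equation*}
\int f(a')\,\pi_\theta(da'|s) = \int f\big(T_s(a)\big)\,\pi_\beta(a|s)\,da .
\end{equation*}

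The key step is to substitute $a = T_s^{-1}(a')$ in the right-hand integral. The multivariate change-of-variables theorem for the $C^1$ diffeomorphism $T_s^{-1}$ gives $da = |\det \nabla_{a'}T_s^{-1}(a')|\,da'$. Hence the right-hand side equals
\begin{equation*}
\int f(a')\,\pi_\beta\big(T_s^{-1}(a')|s\big)\,\big|\det \nabla_{a'}T_s^{-1}(a')\big|\,da' .
\end{equation*}
This holds for all test functions $f$; taking $f=\mathbf{1}_B$ for every Borel set $B$ suffices. Therefore $\pi_\theta(\cdot|s)$ is absolutely continuous with respect to Lebesgue measure. Its density is the stated expression, unique up to a null set. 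Evaluating at $a'=T_s(a)$ yields \eqref{eq:probability_pushforward}.

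As a side remark, the inverse function theorem gives $\nabla_{a'}T_s^{-1}(a') = (\nabla_a T_s(a))^{-1}$. So the factor can equivalently be written $|\det \nabla_a T_s(a)|^{-1}$, which is the form used in \eqref{eq:det}.

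The main obstacle is not the computation but the regularity hypotheses. ``Invertible and differentiable'' alone does not make the inverse differentiable; $x\mapsto x^3$ is a counterexample. The Jacobian of the inverse must exist and be nonsingular almost everywhere, and the change-of-variables theorem needs $T_s$ to be a diffeomorphism, or at least injective and locally Lipschitz with nonvanishing Jacobian. I would state this explicitly. In the residual setting $T_s(a)=a+\delta(s,a)$ it can be guaranteed by requiring $\sup_a\|\nabla_a\delta(s,a)\|<1$. Then $T_s$ is injective, since $a\mapsto a+\delta$ has a contraction perturbation, and $I_d+\nabla_a\delta$ is invertible everywhere. The global inverse function theorem then makes $T_s$ a $C^1$ diffeomorphism onto its image. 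Outside that image, $\pi_\theta(\cdot|s)=0$.
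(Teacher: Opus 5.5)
Your proposal is correct and follows essentially the same route as the paper. You write $\pi_\theta(B|s)=\pi_\beta(T_s^{-1}(B)|s)$ (your test-function phrasing is equivalent), change variables via $a=T_s^{-1}(a')$, and identify the density because $B$ is arbitrary. Your diffeomorphism assumption is a genuine repair rather than pedantry, and it is more necessary than your $x\mapsto x^3$ example suggests, since for that map the formula still holds for every $a'\neq 0$. A sharper example is a $C^1$ strictly increasing bijection whose derivative vanishes on a fat Cantor set $K$: it maps $K$ to a Lebesgue-null set, so it pushes any density charging $K$ to a measure with a singular part. The paper's bare ``invertible and differentiable'' hypothesis therefore does not suffice.
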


\begin{proof}
Fix an arbitrary state $s \in \mathcal{S}$ and let $T_s$ be an invertible and differentiable map. 
By definition of transport map in Definition~\ref{def:transport_map}, for any measurable set $B \subseteq \mathcal{A}$,
\[
\pi_\theta(B | s)
=
\pi_\beta\big(T^{-1}_s(B)| s\big).
\]
Assume that $\pi_\beta(\cdot | s)$ admits a density (still denoted by $\pi_\beta$). Then
\[
\pi_\theta(B | s)
=
\int_{T^{-1}_s(B)} \pi_\beta(a | s)\, da.
\]
Applying the change-of-variables formula with $a' = T(s,a)$ yields
\[
\int_{T^{-1}_s(B)} \pi_\beta(a | s)\, da
=
\int_{B} \pi_\beta\big(T^{-1}_s(a')| s\big)
\left|\det \nabla_{a'} T^{-1}_s(a')\right| \, da'.
\]
Since this holds for all measurable sets $B$, we conclude that the density of $\pi_\theta(\cdot | s)$ is
\[
\pi_{\theta}(a' | s)
=
\pi_\beta\big(T^{-1}_s(a')| s\big)
\cdot
\left|\det \nabla_{a'} T^{-1}_s(a')\right|.
\]
\end{proof}
\begin{lemma} \label{lemma:det_approx}
Let 
\begin{align} 
    T_s(a) = a + \delta(s,a), \quad \delta(s, \cdot) \in C^1(\mathbb{R}^d, \mathbb{R}^d)
\end{align}
Assume that $T$ is invertible and $\|\nabla_a \delta(s,a)\|$ is small. Then, for all $a' = T_s(a)$, we have
\begin{equation}
    \left|\det \nabla_{a'} T^{-1}_s(a')\right|
    =
    1 - \nabla _a\cdot \delta(s,a)
    +
    \mathcal{O}(\|\nabla_a \delta(s,a)\|^2),
\end{equation}
where $a = T^{-1}_s(a')$.
\end{lemma}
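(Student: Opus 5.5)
The plan is to combine the inverse function theorem with a first-order expansion of the determinant around the identity. Write $A \coloneqq \nabla_a \delta(s,a)$ at $a = T_s^{-1}(a')$. Since $T_s(a) = a + \delta(s,a)$ with $\delta(s,\cdot)\in C^1$, its Jacobian is $\nabla_a T_s(a) = I_d + A$. Because $T_s$ is invertible and, for $\|A\|$ small (say $\|A\|<1$ in operator norm), $I_d + A$ is nonsingular, the inverse function theorem gives
\begin{equation*}
\nabla_{a'} T_s^{-1}(a') = \big(\nabla_a T_s(a)\big)^{-1} = (I_d + A)^{-1},
\end{equation*}
and hence $\det \nabla_{a'} T_s^{-1}(a') = \det(I_d+A)^{-1}$. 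This is exactly the identity already used in \eqref{eq:det}.

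Next I will expand $\det(I_d + A)$. The determinant is a polynomial in the entries of $A$. Its constant term is $1$ and its linear term is $\operatorname{tr}(A)$, either by Jacobi's formula or by expanding $\prod_i(1+\lambda_i)$ over the eigenvalues of $A$. Every remaining term is a product of at least two entries of $A$. This gives $\det(I_d+A) = 1 + \operatorname{tr}(A) + R(A)$ with $|R(A)| \le C_d\|A\|^2$ for $\|A\|\le 1$, where $C_d$ depends only on $d$. Since $\|A\|$ is small, $\det(I_d+A)>0$, so the absolute value can be dropped.

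Finally I will invert this scalar. Set $x = \operatorname{tr}(A) + R(A)$ and note $|x| \le d\|A\| + C_d\|A\|^2$, which is small. Then
\begin{equation*}
(1+x)^{-1} = 1 - x + \mathcal{O}(x^2) = 1 - \operatorname{tr}(A) + \mathcal{O}(\|A\|^2).
\end{equation*}
Using $\operatorname{tr}(\nabla_a\delta) = \nabla_a\cdot\delta$ yields the claim.

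The main care point is keeping the remainder bounds uniform. I must check that the constant in $\mathcal{O}(\|A\|^2)$ depends only on $d$, not on $a$, and that smallness of $\|A\|$ gives both positivity of the determinant and $|x|\le 1/2$, so that the geometric-series bound applies. Fixing the operator norm with the threshold $\|A\|\le \min\{1, 1/(2(d+C_d))\}$ should handle both at once.
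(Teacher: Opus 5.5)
Your proposal is correct and follows essentially the same route as the paper. Both arguments write the Jacobian as $I_d + \nabla_a\delta$, expand $\det(I_d+A) = 1 + \operatorname{tr}(A) + \mathcal{O}(\|A\|^2)$, use the inverse function theorem to get $\det\nabla_{a'}T_s^{-1}(a') = 1/\det(I_d+A)$, and then apply $(1+x)^{-1} = 1 - x + \mathcal{O}(x^2)$. Your extra bookkeeping (the uniform constant $C_d$, the explicit smallness threshold, and positivity of $\det(I_d+A)$ to justify dropping the absolute value) fills in details the paper leaves implicit.
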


\begin{proof}
The Jacobian of $T$ is given by
\begin{equation}
    \nabla_a T_s(a) = I + \nabla_a \delta(s,a).
\end{equation}
Using the determinant expansion
\begin{equation}
    \det(I + A)
    =
    1 + \operatorname{tr}(A)
    +
    \underbrace{\frac{1}{2}\left[(\operatorname{tr}A)^2 - \operatorname{tr}(A^2)\right]
    + \mathcal{O}(\|A\|^3)}_{\text{high-order terms}},
\end{equation}
we obtain
\begin{equation}
    \det \nabla_a T_s(a)
    =
    1 + \nabla \cdot \delta(s,a)
    + \mathcal{O}(\|\nabla_a \delta(s,a)\|^2).
\end{equation}
By the inverse function theorem,
\begin{equation}
    \det \nabla_{a'} T^{-1}_s(a')
    =
    \frac{1}{\det \nabla_a T_s(a)}.
\end{equation}
Applying the Taylor expansion $(1+\epsilon)^{-1} = 1 - \epsilon + \mathcal{O}(\epsilon^2)$ yields
\begin{equation} \label{eq:final_lemma2}
    |\det \nabla_{a'} T^{-1}_s(a') | 
    =
    1 - \nabla \cdot \delta(s,a)
    + \mathcal{O}(\|\nabla_a \delta(s,a)\|^2),
\end{equation}
where $a = T^{-1}_s(a')$. 
\end{proof}
\begin{theorem}[Second-order expansion of KL divergence under small transport] \label{theorem:second_order_expansion}
Let $\pi_\beta(\cdot | s)$ be a probability density on $\mathbb{R}^d$ with $\log \pi_\beta \in C^2(\mathbb{R}^d, \mathbb{R})$, and define a transport map
\begin{align}
    T_s(a) = a + \delta(s,a),
\end{align}
where $\delta(s,\cdot) \in C^1(\mathbb{R}^d, \mathbb{R}^d)$ and $\|\nabla_a \delta(s,a)\|$ is small so that $T$ is invertible.

Let $\pi_\theta(\cdot | s)$ be the pushforward of $\pi_\beta(\cdot | s)$ under $T$, i.e.,
\begin{align}
    \pi_\theta(a' | s)
    =
    \pi_\beta(a | s)
    \left|\det \nabla_{a'} T^{-1}_s(a')\right|,
    \quad a' = T(s,a).
\end{align}
Assume that $\pi_\beta$ decays sufficiently fast at infinity so that integration by parts is valid.

Then, the Kullback--Leibler divergence admits the expansion
\begin{align}
    D_{\mathrm{KL}}\big(\pi_\theta(\cdot | s)\,\|\,\pi_\beta(\cdot | s)\big)
    \approx
    \frac{1}{2}
    \int_{\mathbb{R}^d}
    \pi_\beta(a | s)\,
    \delta(s,a)^\top
    \nabla_a \log \pi_\beta(a | s )\nabla_a \log \pi_\beta(a | s)^{\top}\,
    \delta(s,a)
    \, da.
\end{align}
\end{theorem}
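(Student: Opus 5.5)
The plan is to compute the KL divergence exactly as an expectation under $\pi_\beta$ using the change of variables in Lemma~\ref{lemma:change_of_variable_transport}, expand everything to second order in $\delta$, and then use integration by parts to remove the first-order term and to rewrite the Hessian term in score form.

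\textbf{Step 1 (rewriting as a $\pi_\beta$-expectation).} Since $\pi_\theta(\cdot|s)=(T_s)_{\#}\pi_\beta(\cdot|s)$, Definition~\ref{def:transport_map} gives
\begin{equation*}
D_{\mathrm{KL}}(\pi_\theta\|\pi_\beta)=\mathbb{E}_{a\sim\pi_\beta}\big[\log\pi_\theta(T_s(a)|s)-\log\pi_\beta(T_s(a)|s)\big].
\end{equation*}
By Lemma~\ref{lemma:change_of_variable_transport} and the inverse function theorem, $\log\pi_\theta(T_s(a)|s)=\log\pi_\beta(a|s)-\log|\det(I+\nabla_a\delta)|$.

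\textbf{Step 2 (Taylor expansion).} Write $\mathbf{s}=\nabla_a\log\pi_\beta$ and $H=\nabla_a^2\log\pi_\beta$. We have
\begin{equation*}
\log\pi_\beta(a+\delta)=\log\pi_\beta(a)+\delta^\top\mathbf{s}+\tfrac12\delta^\top H\delta+o(\|\delta\|^2).
\end{equation*}
Refining Lemma~\ref{lemma:det_approx} through $\log\det(I+A)=\operatorname{tr}A-\tfrac12\operatorname{tr}(A^2)+\mathcal{O}(\|A\|^3)$ gives
\begin{equation*}
D_{\mathrm{KL}}=-\mathbb{E}_{\pi_\beta}\big[\nabla\cdot\delta+\delta^\top\mathbf{s}\big]+\tfrac12\mathbb{E}_{\pi_\beta}\big[\operatorname{tr}((\nabla_a\delta)^2)-\delta^\top H\delta\big]+\text{h.o.t.}
\end{equation*}

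\textbf{Step 3 (first-order term vanishes).} Note that $\pi_\beta(\nabla\cdot\delta+\delta^\top\mathbf{s})=\nabla\cdot(\pi_\beta\delta)$. Its integral is zero by the divergence theorem, using the assumed decay of $\pi_\beta$ at infinity. This is consistent with KL being minimized at $\delta=0$.

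\textbf{Step 4 (Hessian term to score form).} This is the main step. Integrate $\int\pi_\beta\,\delta_i\delta_j\,\partial_i\mathbf{s}_j$ by parts in the index $i$:
\begin{equation*}
\int\pi_\beta\,\delta^\top H\delta=-\int\pi_\beta\Big[(\delta^\top\mathbf{s})^2+(\nabla\cdot\delta)(\delta^\top\mathbf{s})+\delta^\top(\nabla_a\delta)^\top\mathbf{s}\Big],
\end{equation*}
up to the exact transposition convention. Substituting into Step 2 gives
\begin{equation*}
D_{\mathrm{KL}}=\tfrac12\mathbb{E}_{\pi_\beta}\big[(\delta^\top\mathbf{s})^2\big]+R,
\end{equation*}
where every term of $R$ contains at least one factor of $\nabla_a\delta$. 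As a sanity check, the full second-order quantity should assemble into the exact form $\tfrac12\mathbb{E}_{\pi_\beta}\big[(\nabla\cdot\delta+\delta^\top\mathbf{s})^2\big]$, up to terms that vanish by integration by parts.

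\textbf{Step 5 (where the approximation is made).} The obstacle is justifying that $R$ can be dropped. Contrary to what the informal statement suggests, $R$ is not higher order in $\|\delta\|$: it is of order $\|\delta\|\,\|\nabla_a\delta\|$ and $\|\nabla_a\delta\|^2$. I would therefore make explicit the regime in which the ``$\approx$'' holds. This regime is that the displacement is locally slowly varying, i.e.\ $\|\nabla_a\delta\|\ll\|\delta\|\,\|\mathbf{s}\|$, which is the same smallness of $\nabla_a\delta$ assumed in Lemma~\ref{lemma:det_approx}. In that regime the surviving leading term is
\begin{equation*}
\tfrac12\mathbb{E}_{\pi_\beta}\big[(\delta^\top\mathbf{s})^2\big]=\tfrac12\mathbb{E}_{\pi_\beta}\big[\delta^\top\mathbf{s}\mathbf{s}^\top\delta\big]=\tfrac12\mathbb{E}_{\pi_\beta}\big[\delta^\top I(s,a)\,\delta\big],
\end{equation*}
which is the claimed formula. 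I would state the divergence-dependent corrections explicitly as the approximation error rather than hide them.
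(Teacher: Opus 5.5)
Your proposal is correct and follows essentially the paper's route: you rewrite the KL as a $\pi_\beta$-expectation via the pushforward, expand $\log\pi_\beta(a+\delta)$ and $\log\det(I+\nabla_a\delta)$ to second order, and cancel the first-order term via $\pi_\beta(\nabla\cdot\delta+\delta^\top\nabla_a\log\pi_\beta)=\nabla\cdot(\pi_\beta\delta)$; you then integrate the Hessian term by parts to isolate $\tfrac12\mathbb{E}_{\pi_\beta}[(\delta^\top\nabla_a\log\pi_\beta)^2]$ plus the same $\nabla_a\delta$-dependent residual the paper calls $\mathcal{E}$, where the paper's only difference is splitting $-\nabla_a^2\log\pi_\beta$ into $-\nabla_a^2\pi_\beta/\pi_\beta$ and the outer product of scores before integrating by parts, which is algebraically equivalent. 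Your exact second-order form $\tfrac12\mathbb{E}_{\pi_\beta}[(\nabla\cdot\delta+\delta^\top\nabla_a\log\pi_\beta)^2]$ and the dimensionally consistent regime $\|\nabla_a\delta\|\ll\|\delta\|\,\|\nabla_a\log\pi_\beta\|$, used in place of the paper's $\|\nabla_a\delta\|=o(\|\delta\|)$, are correct refinements; note, though, that this relative condition is an extra hypothesis, not the absolute smallness assumed in Lemma~\ref{lemma:det_approx}.
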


\begin{proof}
We start from the definition of the Kullback--Leibler divergence:
\begin{align}
    D_{\mathrm{KL}}(\pi_\theta \| \pi_\beta)
    =
    \int_{\mathbb{R}^d}
    \pi_\theta(a' | s)
    \log \frac{\pi_\theta(a' | s)}{\pi_\beta(a' | s)}
    \, da'.
\end{align}

Let $a' = T(s,a) = a + \delta(s,a)$. Using the pushforward relation in Lemma~\ref{lemma:change_of_variable_transport}
\begin{align}
    \pi_\theta(a' | s)
    =
    \pi_\beta(a | s)
    \left|\det \nabla_a T_s(a)\right|^{-1},
\end{align}
we obtain
\begin{align}
    D_{\mathrm{KL}}
    =
    \int_{\mathbb{R}^d}
    \pi_\beta(a | s)
    \log \frac{\pi_\theta(T_s(a) | s)}{\pi_\beta(T_s(a) | s)}
    \, da.
\end{align}

Substituting the density transformation yields
\begin{align}
    \log \frac{\pi_\theta(T_s(a)| s)}{\pi_\beta(T_s(a)| s)}
    =
    \log \pi_\beta(a | s)
    - \log \pi_\beta(T_s(a)| s)
    - \log \det \nabla_a T_s(a).
\end{align}

We now expand each term up to second order.

\vspace{0.5em}
\noindent
\textbf{Step 1: Expansion of $\log \pi_\beta(T_s(a))$.}

Using Taylor expansion,
\begin{align}
    \log \pi_\beta(a + \delta)
    =
    \log \pi_\beta(a|s)
    +
    \nabla_a \log \pi_\beta(a|s)^\top \delta
    +
    \frac{1}{2}
    \delta^\top \nabla^2_a \log \pi_\beta(a|s)\, \delta
    +
    \mathcal{O}(\|\delta\|^3).
\end{align}

\vspace{0.5em}
\noindent
\textbf{Step 2: Second-order expansion of the log-determinant.}

Using the exact Taylor expansion for the log-determinant of the Jacobian $\nabla_a T_s(a) = I + \nabla_a \delta(s,a)$, we have
\begin{align}
    \log \det \nabla_a T_s(a)
    &=
    \operatorname{tr}(\nabla_a \delta(s,a)) 
    - \frac{1}{2} \operatorname{tr}\big((\nabla_a \delta(s,a))^2\big) 
    + \mathcal{O}(\|\nabla_a \delta\|^3) \nonumber \\
    &=
    \nabla \cdot \delta(s,a) 
    - \frac{1}{2} \operatorname{tr}\big((\nabla_a \delta(s,a))^2\big) 
    + \mathcal{O}(\|\nabla_a \delta\|^3).
\end{align}
Unlike the first-order approximation, we must retain the quadratic trace term as it contributes to the second-order expansion of the KL divergence.

\vspace{0.5em}
\noindent
\textbf{Step 3: Combine expansions.}

Substituting the expansions of both the log-density and the log-determinant into the log-ratio, we obtain
\begin{align}
    \log \frac{\pi_\theta(T_s(a)|s)}{\pi_\beta(T_s(a)|s)}
    =
    &- \nabla_a \log \pi_\beta(a|s)^\top \delta
    - \frac{1}{2} \delta^\top \nabla^2_a \log \pi_\beta(a|s)\, \delta \nonumber \\
    &- \nabla \cdot \delta
    + \frac{1}{2} \operatorname{tr}\big((\nabla_a \delta)^2\big)
    + \mathcal{O}(\|\delta\|^3).
\end{align}

\vspace{0.5em}
\noindent
\textbf{Step 4: Substitute into KL.}

Integrating over the base distribution $\pi_\beta(a|s)$, the KL divergence becomes:
\begin{align}
    D_{\mathrm{KL}}\big(\pi_\theta(\cdot | s)\,\|\,\pi_\beta(\cdot | s)\big)
    =
    \int \pi_\beta(a|s)
    \bigg[
    &- \nabla_a \log \pi_\beta(a|s)^\top \delta
    - \nabla \cdot \delta \nonumber \\
    &- \frac{1}{2} \delta^\top \nabla^2_a \log \pi_\beta(a|s)\, \delta
    + \frac{1}{2} \operatorname{tr}\big((\nabla_a \delta)^2\big)
    \bigg]
    da
    + \mathcal{O}(\|\delta\|^3).
\end{align}

\vspace{0.5em}
\noindent
\textbf{Step 5: Cancellation of first-order terms.}

By applying integration by parts and assuming $\pi_\beta$ decays sufficiently fast at the boundary \citep{neuberger2009sobolev, brezis2011functional}, the expected divergence of the vector field equates to:
\begin{align}
    \int \pi_\beta \, \nabla \cdot \delta \, da
    =
    - \int \nabla_a \pi_\beta \cdot \delta \, da
    =
    - \int \pi_\beta \nabla_a \log \pi_\beta \cdot \delta \, da.
\end{align}
Consequently, the first-order terms exactly cancel out:
\begin{align}
    \int \pi_\beta
    \Big(
    - \nabla_a \log \pi_\beta \cdot \delta
    - \nabla \cdot \delta
    \Big) da = 0.
\end{align}

\vspace{0.5em}
\noindent
\textbf{Step 6: Final result and reduction to Fisher form.}

We are left with the second-order terms comprising the Hessian of the log-density and the trace of the squared Jacobian:
\begin{align} \label{eq:kl_second_order_raw}
    D_{\mathrm{KL}}
    =
    \frac{1}{2}
    \int \pi_\beta
    \Big[
    - \delta^\top \nabla^2_a \log \pi_\beta \, \delta
    + \operatorname{tr}\big((\nabla_a \delta)^2\big)
    \Big] da
    + \mathcal{O}(\|\delta\|^3).
\end{align}

Using the logarithmic derivative identity, we decompose the negative Hessian into two components:
\begin{align}
    -\nabla^2_a \log \pi_\beta
    =
    - \underbrace{\frac{\nabla^2_a \pi_\beta}{\pi_\beta}}_{\text{local curvature}}
    +
    \underbrace{\nabla_a \log \pi_\beta \nabla_a \log \pi_\beta^\top}_{\text{Fisher information matrix}}.
\end{align}
Substituting this into \eqref{eq:kl_second_order_raw} yields:
\begin{align} \label{eq:kl_split}
    D_{\mathrm{KL}}
    =
    \frac{1}{2} \int \pi_\beta \Big( \delta^\top \nabla_a \log \pi_\beta \nabla_a \log \pi_\beta^\top \delta \Big) da
    - \frac{1}{2} \int \delta^\top (\nabla^2_a \pi_\beta) \delta \, da
    + \frac{1}{2} \int \pi_\beta \operatorname{tr}\big((\nabla_a \delta)^2\big) da.
\end{align}

To resolve the curvature term $\int \delta^\top (\nabla^2_a \pi_\beta) \delta \, da$, we apply integration by parts. Since $\nabla_a \pi_\beta = \pi_\beta \nabla_a \log \pi_\beta$, the boundary-free integral transfers the derivative onto the displacement field:
\begin{align} \label{eq:curvature_by_parts}
    \int \delta^\top (\nabla^2_a \pi_\beta) \delta \, da
    =
    - \int \pi_\beta \Big[ (\nabla \cdot \delta)(\delta^\top \nabla_a \log \pi_\beta) + \delta^\top (\nabla_a \delta) \nabla_a \log \pi_\beta \Big] da.
\end{align}

Inserting \eqref{eq:curvature_by_parts} back into \eqref{eq:kl_split}, we can group all terms involving the Jacobian $\nabla_a \delta$ into a single residual operator $\mathcal{E}$:
\begin{align}
    D_{\mathrm{KL}}
    =
    \frac{1}{2} \int \pi_\beta \Big( \delta^\top \nabla_a \log \pi_\beta \nabla_a \log \pi_\beta^\top \delta \Big) da + \mathcal{E}(\delta, \nabla_a \delta),
\end{align}
where
\begin{align}
    \mathcal{E} = \frac{1}{2} \int \pi_\beta \Big[ \operatorname{tr}\big((\nabla_a \delta)^2\big) + (\nabla \cdot \delta)(\delta^\top \nabla_a \log \pi_\beta) + \delta^\top (\nabla_a \delta) \nabla_a \log \pi_\beta \Big] da.
\end{align}

In the context of residual policy learning, we naturally regularize the transport map $\delta(s,a)$ to be a \textbf{slowly varying, locally near-rigid displacement field} to avoid catastrophic distributional shift. This physical constraint implies that the local deformation is vanishingly small compared to the translation itself, i.e., $\| \nabla_a \delta \| = o(\| \delta\|)$.

Under this setting, the residual $\mathcal{E}$, which scales as $\mathcal{O}(\|\nabla_a \delta\|^2)$ and $\mathcal{O}(\|\delta\| \|\nabla_a \delta\|)$, is dominated by the translational energy $\mathcal{O}(\|\delta\|^2)$. Omitting these higher-order deformational artifacts, we recover the elegant Fisher information form:
\begin{align}
    D_{\mathrm{KL}}\big(\pi_\theta(\cdot | s)\,\|\,\pi_\beta(\cdot | s)\big)
    \approx
    \frac{1}{2}
    \int
    \pi_\beta(a | s)
    \,
    \delta(s,a)^\top
    \big(
     \nabla_a \log \pi_\beta(a | s)
    \nabla_a \log \pi_\beta(a | s)^\top
    \big)
    \delta(s,a)
    \, da.
\end{align}

This result fundamentally demonstrates that constraining the Jacobian of the displacement field naturally reduces the KL divergence to a local quadratic form governed by the local Fisher information metric, measuring the translational discrepancy.
\end{proof}

\textbf{Miscellaneous Complements: Formal Analysis of the Curvature Term}

We provide a rigorous analysis of the term $\mathcal{J} = \int_{\mathcal{A}} \delta^\top (\nabla^2_a \pi_\beta) \delta \, da$ and its physical implications for the transport map in refining flow policies. 

\paragraph{1. Global Cancellation of Density Curvature}
We first establish that the total curvature of any valid probability density integrates to zero over its support. Let $\mathcal{A} \subseteq \mathbb{R}^d$ be the action space. By the definition of the Laplacian operator, we have $\nabla_a^2 \pi_\beta = \nabla \cdot (\nabla_a \pi_\beta)$. According to the Divergence Theorem (or Stokes' Theorem) \citep{macdonald2012vector}, the integral over $\mathcal{A}$ can be converted into a boundary integral:
\begin{align}
    \int_{\mathcal{A}} \nabla^2_a \pi_\beta \, da = \int_{\mathcal{A}} \nabla \cdot (\nabla_a \pi_\beta) \, da = \oint_{\partial \mathcal{A}} (\nabla_a \pi_\beta \cdot \vec{n}) \, dS,
\end{align}
where $\vec{n}$ denotes the unit outward normal vector on the boundary $\partial \mathcal{A}$ of the compact set $\mathcal{A}$. For behavioral policy distributions, we assume $\pi_\beta$ decays sufficiently fast at infinity (or reaches zero at the boundary of the compact support), implying:
\begin{itemize}
    \item $\pi_\beta(a|s) \to 0$ as $a \to \partial \mathcal{A}$,
    \item $\nabla_a \pi_\beta(a|s) \to 0$ (the density becomes flat at the boundary).
\end{itemize}
Under these regularity conditions, the boundary flux vanishes, leading to the global cancellation of curvature:
\begin{equation} \label{eq:cancellation}
    \int_{\mathcal{A}} \nabla_a^2 \pi_\beta \, da = 0.
\end{equation}

\paragraph{2. Physical Insight: Avoiding Distributional Shift}
In our residual policy setting, we refine the behavior policy $\pi_\beta$ via a displacement field $\delta(s,a)$. The curvature term $\frac{\nabla^2_a \pi_\beta}{\pi_\beta}$ represents local \textbf{volumetric deformation}—the degree to which the action space is locally compressed or stretched. 

Equation \eqref{eq:cancellation} shows that the expectation of this curvature over the entire action space is zero. When designing the residual term $\delta(s,a)$, our objective is to ensure that $\nabla_a \delta(s,a)$ does not change dramatically. A high gradient in $\delta$ would imply a significant deformation of the behavioral policy structure, potentially leading to a catastrophic \textbf{distributional shift}. 

By ensuring $\|\nabla_a \delta(s,a) \|$ small relatively to $\| \delta(s,a)\|$, we allow to transport its high-probability mass while preserving the internal density structure (i.e., avoiding "collapsing" or "tearing" the distribution). Under such smoothness assumptions, $\delta$ can be treated as locally constant relative to the curvature, yielding:
\begin{equation}
    \mathcal{J} = \int_{\mathcal{A}} \delta^\top (\nabla^2_a \pi_\beta) \delta \, da \approx 0.
\end{equation}
This supports the use of the Fisher information form as the leading-order metric for policy divergence: it captures the translational energy of the update, while the deformation-related curvature effects are asymptotically negligible under the near-rigid transport.

\subsection{Score Function Estimation}
\begin{lemma} \label{lemma:score_computation}
Let $a^1 \sim \pi_{\beta}(a|s)$ and $z \sim \mathcal{N}(0, I_d)$ be independent random variables. Define the convex combination:
\[
a^t = t a^1 + (1-t) z, \quad t \in (0,1).
\]
Then, the score function of the marginalized distribution $\pi_\beta(a^t|s)$ is given by:
\[
\nabla_{a} \log \pi_\beta(a^t|s) = \frac{t\, \mathbb{E}[a^1 | a^t] - a^t}{(1-t)^2}.
\]
Furthermore, given the conditional velocity field $v(t, s, a^t)$ defined in the flow matching framework, the score function can be equivalently expressed as:
\begin{equation} \label{eq:score_function}
\nabla_{a} \log \pi_\beta(a^t|s) = \frac{t v(t, s, a^t) - a^t}{1-t}.
\end{equation}
\end{lemma}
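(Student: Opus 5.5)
The plan is to prove the first identity by Tweedie's formula, that is, by differentiating under the integral in the Gaussian mixture representation of the marginal. The second identity then follows by rewriting the flow-matching velocity as a conditional expectation and eliminating $\mathbb{E}[a^1|a^t]$ between the two expressions.

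\textbf{Step 1 (conditional law).} Fix $s$ and $t\in(0,1)$, and write $x$ for a value of $a^t$. Because $z$ is independent of $a^1$, conditionally on $a^1$ we have $a^t \sim \mathcal{N}(t a^1,(1-t)^2 I_d)$. Hence the marginal density is
\[
p_t(x|s)=\int \pi_\beta(a^1|s)\,\varphi_t(x\,|\,a^1)\,da^1,
\qquad
\varphi_t(x\,|\,a^1)\propto \exp\!\Big(-\tfrac{\|x-t a^1\|^2}{2(1-t)^2}\Big).
\]
This $p_t$ is smooth and strictly positive for every $t<1$.

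\textbf{Step 2 (score).} Differentiate under the integral. This is justified by dominated convergence, since the Gaussian kernel and its gradient are bounded uniformly in $a^1$ on compacts in $x$, and $\pi_\beta$ is a probability density. We use $\nabla_x\varphi_t = -\frac{x-ta^1}{(1-t)^2}\varphi_t$ and divide by $p_t$. By Bayes' rule this gives
\[
\nabla_x\log p_t(x|s)=\mathbb{E}\Big[-\tfrac{x-t a^1}{(1-t)^2}\,\Big|\,a^t=x\Big]=\frac{t\,\mathbb{E}[a^1|a^t=x]-x}{(1-t)^2},
\]
which is the first claim.

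\textbf{Step 3 (velocity as conditional expectation).} The flow-matching loss is a least-squares regression of $a^1-z$ on $(t,s,a^t)$. Its minimizer, which is the velocity $v$ referred to in the statement, is therefore
\[
v(t,s,x)=\mathbb{E}[a^1-z\,|\,a^t=x].
\]
On the event $a^t=x$ we can solve $z=(x-t a^1)/(1-t)$. Substituting gives
\[
v=\mathbb{E}[a^1|x]-\frac{x-t\,\mathbb{E}[a^1|x]}{1-t}=\frac{\mathbb{E}[a^1|x]-x}{1-t},
\qquad\text{so}\qquad
\mathbb{E}[a^1|x]=x+(1-t)v.
\]

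\textbf{Step 4 (combine).} Plugging this into Step 2, the numerator becomes
\[
t\big(x+(1-t)v\big)-x=(1-t)(tv-x).
\]
Cancelling one factor of $1-t$ from the denominator yields \eqref{eq:score_function}.

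The only delicate points are two justifications. The first is the interchange of differentiation and integration in Step 2. The second is the identification of the learned field with the exact conditional expectation in Step 3, i.e.\ assuming perfect flow-matching training. We state the latter as the standing assumption under which $v$ is "the conditional velocity field". The algebra in Step 4 is routine.
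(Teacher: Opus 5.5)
Your proposal is correct and follows the paper's proof step for step. Both use the Gaussian conditional law $a^t\mid a^1\sim\mathcal{N}(ta^1,(1-t)^2I_d)$ and express the marginal score as the posterior expectation of the conditional score. Both then identify $v=\mathbb{E}[a^1-z\mid a^t]$, which gives $\mathbb{E}[a^1\mid a^t]=a^t+(1-t)v$, and finish with the same cancellation of a factor of $1-t$. The only additions on your side are the dominated-convergence justification for differentiating under the integral and the explicit perfect-training assumption on $v$; the paper leaves both implicit.
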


\begin{proof}
Based on the linear Gaussian interpolation $a^t = t a^1 + (1-t)z$, the conditional distribution of $a^t$ given $a^1$ is:
\[
a^t | a^1 \sim \mathcal{N}(t a^1, (1-t)^2 I_d).
\]
The log-likelihood gradient (score) of this conditional Gaussian density is:
\begin{equation}
\nabla_{a^t} \log p(a^t | a^1, s) = \nabla_{a^t} \log \mathcal{N}\left(a^t; t a^1, (1-t)^2 I_d\right) = -\frac{a^t - t a^1}{(1-t)^2}.
\end{equation}
By the property of the marginal score function (or via the law of total expectation applied to the score):
\begin{align*}
\nabla_{a} \log \pi_\beta(a^t|s) &= \mathbb{E}_{p(a^1|a^t, s)} \left[ \nabla_{a} \log p(a^t|a^1, s) \right] \\
&= \mathbb{E} \left[ \left. -\frac{a^t - t a^1}{(1-t)^2} \right| a^t \right] \\
&= \frac{t \mathbb{E}[a^1 | a^t] - a^t}{(1-t)^2}.
\end{align*}
In the context of Flow Matching with a Gaussian path, the optimal velocity field $v(t, s, a^t)$ satisfies the relationship:
\begin{equation}
v(t, s, a^t) = \mathbb{E} \left[ \left. \frac{d}{dt} a^t \right| a^t \right] = \mathbb{E}[a^1 - z | a^t].
\end{equation}
Substituting $z = \frac{a^t - t a^1}{1-t}$ into the above, we obtain the identity:
\begin{equation}
\mathbb{E}[a^1 | a^t] = a^t + (1-t) v(t, s, a^t).
\end{equation}
Substituting this into the score expression:
\begin{align*}
\nabla_{a} \log \pi_\beta(a^t|s) &= \frac{t \left( a^t + (1-t) v(t, s, a^t) \right) - a^t}{(1-t)^2} \\
&= \frac{(t-1) a^t + t(1-t) v(t, s, a^t)}{(1-t)^2} \\
&= \frac{t v(t, s, a^t) - a^t}{1-t}.
\end{align*}
This completes the proof.
\end{proof}

Our objective is to recover the score function of the behavioral distribution as $t \to 1$. However, the expression in \eqref{eq:score_function} exhibits a singularity in this limit, as both the numerator and denominator vanish simultaneously, leading to numerical instability. To circumvent this, we introduce a small perturbation $\varepsilon$ to the time variable, yielding a numerically stable approximation. The following theorem establishes the convergence rate and error bound between this perturbed approximation and the ground-truth score function.

\begin{theorem}[Pointwise Convergence of the Perturbed Score Function] \label{theorem:perturbation_error}
    Let $\pi_\beta (a|s)$ denote the behavioral policy distribution defined on a compact set of $\mathbb{R}^d,$ and assume $\pi_\beta (\cdot|s) \in C^3(\mathbb{R}^d, \mathbb{R})$ over the compact subset of $\mathbb{R}^d$. Consider the linear Gaussian interpolation path, let $\mathbf{s}_t(a|s)$ be the associated score function at time $t$. For a small perturbation parameter $\varepsilon > 0$, define the perturbed time $t_\varepsilon = 1 - \varepsilon$. As $\varepsilon \to 0^+$, the approximation error between the perturbed score $\mathbf{s}_{1-\varepsilon}(a|s)$ and the ground truth score $\mathbf{s}_1(a|s)$ satisfies: 
    $$ \| \mathbf{s}_{1-\varepsilon}(a|s) - \mathbf{s}_1(a|s) \| = \varepsilon \left\| \nabla_a \big( a^\top \nabla_a \log \pi_\beta(a|s) \big) \right\| + \mathcal{O}(\varepsilon^2). $$ 
    Consequently, the boundary perturbation scheme achieves a first-order convergence rate of $\mathcal{O}(\varepsilon)$.
\end{theorem}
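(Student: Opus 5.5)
The plan is to avoid the singular velocity formula in Lemma~\ref{lemma:score_computation} entirely. Instead, I will work with the exact Gaussian-smoothing representation of the marginal density along the linear path.

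\textbf{Step 1: rescaled heat-kernel form.} With $a^t = t a^1 + (1-t) z$, write $\sigma = 1-t = \varepsilon$ and $h = \sigma/t$. A change of variables gives
\begin{equation*}
p_t(a) = t^{-d}\,(\pi_\beta * \varphi_h)(a/t),
\end{equation*}
where $\varphi_h$ is the $\mathcal{N}(0,h^2 I_d)$ density. Write $\pi_h \coloneqq \pi_\beta * \varphi_h$. Then the score is exactly
\begin{equation*}
\mathbf{s}_t(a|s) = t^{-1}\,\nabla \log \pi_h(a/t),
\end{equation*}
while $\mathbf{s}_1(a|s) = \nabla_a \log \pi_\beta(a|s)$. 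This splits the error into two sources: the smoothing at scale $h \sim \varepsilon$, and the dilation $a \mapsto a/t$ together with the prefactor $1/t$.

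\textbf{Step 2: the smoothing is second order.} Since $\pi_\beta \in C^3$ on a compact set, we have $\nabla \pi_h = (\nabla \pi_\beta) * \varphi_h$ with $\nabla\pi_\beta \in C^2$. Because $\varphi_h$ is symmetric, the first moment vanishes, and a Taylor expansion under the Gaussian gives
\begin{equation*}
\pi_h = \pi_\beta + \mathcal{O}(h^2), \qquad \nabla\pi_h = \nabla\pi_\beta + \mathcal{O}(h^2),
\end{equation*}
uniformly on the compact set, with constants controlled by the $C^2$ and $C^3$ norms. Fix a point $y$ with $\pi_\beta(y) > 0$; the claim is pointwise, so I restrict to the interior of the support. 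Dividing the two expansions, and using that $\pi_\beta$ is bounded below near $y$, gives
\begin{equation*}
\nabla\log\pi_h(y) = \nabla\log\pi_\beta(y) + \mathcal{O}(\varepsilon^2)
\end{equation*}
locally uniformly. Hence the smoothing contributes nothing at first order.

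\textbf{Step 3: expand the dilation.} We have $1/t = 1+\varepsilon+\mathcal{O}(\varepsilon^2)$ and $a/t = a + \varepsilon a + \mathcal{O}(\varepsilon^2)$. Write $g = \nabla\log\pi_\beta \in C^1$ (in fact $C^2$). A Taylor expansion gives
\begin{equation*}
g(a/t) = g(a) + \varepsilon\,\nabla^2\log\pi_\beta(a)\,a + \mathcal{O}(\varepsilon^2).
\end{equation*}
Multiplying by $1/t$ yields
\begin{equation*}
\mathbf{s}_{1-\varepsilon}(a|s) - \mathbf{s}_1(a|s) = \varepsilon\big(g(a) + \nabla^2\log\pi_\beta(a)\,a\big) + \mathcal{O}(\varepsilon^2).
\end{equation*}
The leading coefficient is exactly $\nabla_a\big(a^\top \nabla_a\log\pi_\beta(a|s)\big) = g + (\nabla g)^\top a$, using the symmetry of the Hessian. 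Taking norms, and using $\|x + \mathcal{O}(\varepsilon^2)\| = \|x\| + \mathcal{O}(\varepsilon^2)$, gives the stated identity and the $\mathcal{O}(\varepsilon)$ rate.

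\textbf{Main obstacle.} I expect Step 2 to be the main difficulty. Passing from smoothing bounds on $\pi_h$ and $\nabla\pi_h$ to a bound on the log-gradient requires $\pi_\beta$ to stay away from zero near the evaluation point. The compact-support hypothesis also has to be read as $\pi_\beta$ being $C^3$ on all of $\mathbb{R}^d$, i.e.\ vanishing smoothly outside its support, so that the Gaussian convolution never crosses a non-smooth boundary. I would therefore state the result for $a$ in the interior of the support where $\pi_\beta(a|s) > 0$, with constants uniform on compact subsets of that interior. Near the boundary of the support, the ratio $\nabla\pi_h/\pi_h$ can degenerate, and the claimed rate need not hold uniformly there.
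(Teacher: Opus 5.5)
Your proof is correct, and it reaches the paper's coefficient by a reorganized route.

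The paper dilates first and smooths second. It writes $p_{1-\varepsilon}$ as the Gaussian smoothing (variance $\varepsilon^2$) of the rescaled density $f_\varepsilon(y)=(1-\varepsilon)^{-d}\pi_\beta\big(y/(1-\varepsilon)\big)$. It then expands at the density level to get $p_{1-\varepsilon}=\pi_\beta+\varepsilon\,\nabla_a\cdot(a\pi_\beta)+\mathcal{O}(\varepsilon^2)$. Finally it differentiates this expansion and linearizes the quotient $\nabla_a p_{1-\varepsilon}/p_{1-\varepsilon}$. The coefficient therefore appears as $\nabla_a\big(\nabla_a\cdot(a\pi_\beta)/\pi_\beta\big)=\nabla_a\big(d+a^\top\nabla_a\log\pi_\beta\big)$.

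You smooth first (at scale $h=\varepsilon/(1-\varepsilon)$, on $\pi_\beta$ itself) and dilate second. This gives you the exact identity $\mathbf{s}_t(a)=t^{-1}\nabla\log\pi_h(a/t)$, so the dilation is never expanded inside a logarithm or a quotient. The first-order term falls out of the chain rule as $g+(\nabla g)a$. The only approximation is the symmetric-smoothing bound $\nabla\log\pi_h=\nabla\log\pi_\beta+\mathcal{O}(h^2)$.

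The underlying mechanism is the same in both: odd Gaussian moments vanish, so only the contraction $a_0\mapsto(1-\varepsilon)a_0$ acts at order $\varepsilon$. The two routes buy different things.
\begin{itemize}
\item Your version buys explicit remainder control. The paper differentiates its $\mathcal{O}(\varepsilon^2)$ density remainder without comment and implicitly needs $\pi_\beta(a|s)>0$. You instead convolve $\nabla\pi_\beta$ directly, state positivity, and obtain constants locally uniform in $a$. You genuinely need that uniformity, because you evaluate at the moving point $a/t$.
\item The paper's version buys an interpretation. Its first-order density bias is the divergence of the radial field $a\pi_\beta$, which gives a continuity-equation picture of the contraction.
\end{itemize}

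Finally, the obstacle you flag is milder than you fear. For the pointwise claim at an interior point with $\pi_\beta(a|s)>0$, it suffices that $\pi_\beta$ is bounded and $C^3$ on a ball around $a$. All Gaussian contributions from outside that ball, including the boundary term if you integrate by parts on the ball, are exponentially small in $1/h^2$. So you do not need $\pi_\beta$ to vanish smoothly at the edge of its support.
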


\begin{proof}
\textbf{1. Integral Representation with Drift} \\
In the linear Gaussian interpolation path $a_t = t a_0 + (1-t) \eta$, where $a_0 \sim \pi_\beta(\cdot|s)$ and $\eta \sim \mathcal{N}(0, I_d)$, the marginal density $p_t(a|s)$ at time $t = 1-\varepsilon$ is given by:
\begin{equation*}
    p_{1-\varepsilon}(a|s) = \int \pi_\beta(a_0|s) \mathcal{N}\big(a; (1-\varepsilon) a_0, \varepsilon^2 I_d\big) da_0.
\end{equation*}
To evaluate this convolution, we reparameterize the integral by setting $y = (1-\varepsilon)a_0$. The change of measure yields $da_0 = (1-\varepsilon)^{-d} dy$, giving:
\begin{equation*}
    p_{1-\varepsilon}(a|s) = \int \underbrace{(1-\varepsilon)^{-d} \pi_\beta\left(\frac{y}{1-\varepsilon}\bigg|s\right)}_{:= f_\varepsilon(y)} \mathcal{N}(a; y, \varepsilon^2 I_d) dy.
\end{equation*}

\vspace{0.5em}
\textbf{2. Taylor Expansion of the Base Density} \\
We first expand the scaled density function $f_\varepsilon(y)$ for small $\varepsilon$. Using $(1-\varepsilon)^{-d} = 1 + d\varepsilon + \mathcal{O}(\varepsilon^2)$ and the Taylor expansion of $\pi_\beta$, we have:
\begin{align*}
    f_\varepsilon(a) &= (1 + d\varepsilon) \left[ \pi_\beta(a|s) + \nabla_a \pi_\beta(a|s)^\top (\varepsilon a) \right] + \mathcal{O}(\varepsilon^2) \\
    &= \pi_\beta(a|s) + \varepsilon \big( d \pi_\beta(a|s) + a^\top \nabla_a \pi_\beta(a|s) \big) + \mathcal{O}(\varepsilon^2).
\end{align*}
Recognizing the term in the parenthesis as the divergence of the vector field $a \pi_\beta(a|s)$, we obtain:
\begin{equation*}
    f_\varepsilon(a) = \pi_\beta(a|s) + \varepsilon \nabla_a \cdot \big( a \pi_\beta(a|s) \big) + \mathcal{O}(\varepsilon^2).
\end{equation*}

\vspace{0.5em}
\textbf{3. Applying the Gaussian Mollifier} \\
The integral $p_{1-\varepsilon}(a|s) = \int f_\varepsilon(y) \mathcal{N}(a; y, \varepsilon^2 I_d) dy$ is the standard heat equation evolution of $f_\varepsilon$ for ``heat-time'' $\tau = \varepsilon^2$. Thus,
\begin{equation*}
    p_{1-\varepsilon}(a|s) = f_\varepsilon(a) + \frac{\varepsilon^2}{2} \Delta_a f_\varepsilon(a) + \mathcal{O}(\varepsilon^4).
\end{equation*}
Substituting our expansion for $f_\varepsilon(a)$, the $\varepsilon^2$ term from the heat kernel is absorbed into the higher-order terms, leaving the drift-induced first-order error as the dominant term:
\begin{equation} \label{eq:p_expand}
    p_{1-\varepsilon}(a|s) = \pi_\beta(a|s) + \varepsilon \nabla_a \cdot \big( a \pi_\beta(a|s) \big) + \mathcal{O}(\varepsilon^2).
\end{equation}

\vspace{0.5em}
\textbf{4. Expansion of the Score Function} \\
Taking the gradient of \eqref{eq:p_expand} with respect to $a$, we get:
\begin{equation*}
    \nabla_a p_{1-\varepsilon} = \nabla_a \pi_\beta + \varepsilon \nabla_a \big( \nabla_a \cdot (a \pi_\beta) \big) + \mathcal{O}(\varepsilon^2).
\end{equation*}
The score function is $\mathbf{s}_{1-\varepsilon} = \frac{\nabla_a p_{1-\varepsilon}}{p_{1-\varepsilon}}$. Using the first-order quotient linearization $\frac{u + \varepsilon \delta u}{v + \varepsilon \delta v} \approx \frac{u}{v} + \varepsilon \left( \frac{v \delta u - u \delta v}{v^2} \right)$, where $u = \nabla_a \pi_\beta$ and $v = \pi_\beta$, we have:
\begin{equation*}
    \mathbf{s}_{1-\varepsilon} = \frac{\nabla_a \pi_\beta}{\pi_\beta} + \varepsilon \left[ \frac{\pi_\beta \nabla_a \big( \nabla_a \cdot (a \pi_\beta) \big) - \nabla_a \pi_\beta \big( \nabla_a \cdot (a \pi_\beta) \big)}{\pi_\beta^2} \right] + \mathcal{O}(\varepsilon^2).
\end{equation*}
The term in the bracket is exactly the gradient of the scalar field $\frac{\nabla_a \cdot (a \pi_\beta)}{\pi_\beta}$ via the quotient rule. Therefore:
\begin{equation*}
    \mathbf{s}_{1-\varepsilon}(a|s) - \mathbf{s}_1(a|s) = \varepsilon \nabla_a \left( \frac{\nabla_a \cdot (a \pi_\beta)}{\pi_\beta} \right) + \mathcal{O}(\varepsilon^2).
\end{equation*}

\vspace{0.5em}
\textbf{5. Final Simplification} \\
We can simplify the scalar field inside the gradient:
\begin{equation*}
    \frac{\nabla_a \cdot (a \pi_\beta)}{\pi_\beta} = \frac{d \pi_\beta + a^\top \nabla_a \pi_\beta}{\pi_\beta} = d + a^\top \nabla_a \log \pi_\beta(a|s).
\end{equation*}
Since the gradient of the constant $d$ is zero, the error term reduces to:
\begin{equation*}
    \mathbf{s}_{1-\varepsilon}(a|s) - \mathbf{s}_1(a|s) = \varepsilon \nabla_a \big( a^\top \nabla_a \log \pi_\beta(a|s) \big) + \mathcal{O}(\varepsilon^2).
\end{equation*}
Taking the norm completes the proof.
\end{proof}
\subsection{Analysis of Optimal Choice of Time Perturbation} 
\label{append:perturbed_time}

According to the approximation of the score function, the local quadratic form (or Fisher information equivalent) can be computed as an outer product:
\begin{equation}
    I(s,a) \approx \frac{\big( (1-\varepsilon)v_{\beta}(1-\varepsilon, s, a) - a \big) \big( (1-\varepsilon)v_{\beta}(1-\varepsilon, s, a) - a \big)^{\top}}{\varepsilon^2}. \label{eq:score_approx}
\end{equation}
We now turn to analyze the optimal choice of the time perturbation $\varepsilon$. The total computation error, denoted by $\mathcal{E}_{\text{total}}(\varepsilon) = \mathcal{E}_{\text{trunc}} + \mathcal{E}_{\text{num}}$, consists of two primary components: the approximation error stemming from the time perturbation (truncation bias) and the numerical rounding error that arises as $\varepsilon$ becomes vanishingly small.

Based on the first-order pointwise convergence established in Theorem~\ref{theorem:perturbation_error}, the norm of the score error scales as $\mathcal{O}(\varepsilon)$. Consequently, the \textbf{truncation error} for the squared objective $I(s,a)$ satisfies:
\begin{equation}
    \mathcal{E}_{\text{trunc}} \approx C_{1} \varepsilon^2, \quad \text{where} \quad C_{1} = \left\| \nabla_a \big( a^\top \nabla_a \log \pi_\beta(a|s) \big) \right\|^2.
\end{equation}
The constant $C_1$ captures the magnitude of the "tidal force" or spatial deformation of the score field under radial shrinkage. A distribution with highly varying local curvature implies that the score field deforms sharply towards the origin, necessitating a smaller $\varepsilon$ to minimize this truncation bias.

Conversely, the \textbf{numerical rounding error} is a consequence of finite floating-point precision (machine epsilon $\delta$). In the expression \eqref{eq:score_approx}, calculating the squared norm involves dividing by $\varepsilon^2$. As $\varepsilon \to 0$, the denominator amplifies the infinitesimal floating-point variance in the numerator. This numerical noise variance can be modeled as:
\begin{equation}
    \mathcal{E}_{\text{num}} \approx \frac{C_2 \delta}{\varepsilon^2},
\end{equation}
where $C_2$ is a constant related to the magnitude of the velocity field $v$ and the action $a$. To identify the optimal perturbation $\varepsilon^*$, we minimize the total mean squared error:
\begin{equation}
    \min_{\varepsilon} \mathcal{E}_{\text{total}}(\varepsilon) = C_1 \varepsilon^2 + \frac{C_2 \delta}{\varepsilon^2}.
\end{equation}
Taking the derivative with respect to $\varepsilon$ and setting it to zero yields:
\begin{equation}
    2 C_1 \varepsilon - \frac{2C_2 \delta}{\varepsilon^3} = 0 \implies \varepsilon^* = \left( \frac{C_2 \delta}{C_1} \right)^{1/4}.
\end{equation}

This derivation reveals a profound connection to classical numerical analysis: the optimal time perturbation $\varepsilon^*$ scales with the machine precision following the classical first-order finite-difference scaling law, $\mathcal{O}(\delta^{1/4})$. Furthermore, $\varepsilon^*$ is inversely proportional to the fourth root of the score field's radial deformation magnitude ($C_1$). 

Consequently, while highly complex distributions theoretically require a finer perturbation, the practical lower bound of $\varepsilon$ is strictly dictated by the hardware precision $\delta_{\text{FP32}}$. In a standard FP32 environment ($\delta_{\text{FP32}} \sim 10^{-6}$), the optimal perturbation scales near $(10^{-6})^{1/4} \approx 10^{-1.5}$. Therefore, choosing $\varepsilon \sim \mathcal{O}(10^{-2})$ to $\mathcal{O}(10^{-1})$ typically achieves the most robust balance between approximation accuracy and numerical stability. Empirical results from the ablation studies are completely consistent with this theoretical scaling law, as shown in Figure~\ref{fig:perturbed_time} and Table~\ref{tab:perturbed_time_full}.
\subsection{Trust-Region Problem} \label{append:trust_region}

The detailed algorithm is presented in Appendix~\ref{append:implementation}, and the following paragraphs describe the architectural components in detail.

\textbf{Lagrangian Formulation.} The constrained optimization problem in \eqref{eq:final_formulation} can be reformulated by introducing the Lagrangian function $\mathcal{L}(\delta_\theta, \lambda)$. Let $\lambda \ge 0$ be the Lagrange multiplier associated with the trust-region constraint \citep{chen2025towards}:
\begin{equation} \label{eq:lagrangian}
\mathcal{L}(\delta_\theta, \lambda) = \mathbb{E}_{s \sim \mathcal{D}, a \sim \pi_\beta(\cdot|s)} \left[ Q_{\phi}(s, a + \delta_\theta(s, a)) \right] - \lambda \left( \mathbb{E}_{s, a} \left[ \frac{1}{2} \delta_\theta(s, a)^\top I(s, a) \delta_\theta(s, a) \right] - \epsilon \right),
\end{equation}
where $I(s, a) = \nabla_a \log \pi_\beta \nabla_a \log \pi_\beta^\top$ denotes the Fisher information matrix at the state-action pair $(s, a)$. The optimal parameters are found by solving the minimax problem: $\min_{\lambda \ge 0} \max_{\theta} \mathcal{L}(\theta, \lambda)$.

\textbf{Inner Problem (Primal Update).} For a fixed dual variable $\lambda$, the inner problem aims to update the policy parameters $\theta$ to maximize the Lagrangian. Unlike traditional KL-constrained methods that require complex approximations, our quadratic reformulation allows for stable and efficient optimization via automatic differentiation:
\begin{equation}
\theta \leftarrow \theta + \eta_\theta \nabla_\theta \mathbb{E}_{s, a} \left[ Q_{\phi}(s, a + \delta_\theta(s, a)) - \frac{\lambda}{2} \delta_\theta(s, a)^\top I(s, a) \delta_\theta(s, a) \right],
\end{equation}
where $\eta_\theta$ is the learning rate. By leveraging the reparameterization trick through the transport map $T_s(a) = a + \delta_\theta(s, a)$, the gradient flows directly from the critic and the Fisher-based penalty into the residual network.

\textbf{Dual Update.} The dual variable $\lambda$ is updated to enforce the trust-region constraint. Given the current residual policy $\delta_\theta$, $\lambda$ is updated by minimizing the dual objective:
\begin{equation}
\lambda \leftarrow \mathrm{ReLU} \left( 
    \lambda + \eta \, \mathbb{E}_{s,a} \left[ 
    \frac{1}{2} \delta_{\theta}(s,a)^\top I(s,a) \delta_{\theta}(s,a) - \epsilon 
    \right] 
\right)
\end{equation}
In practice, to ensure that $\lambda$ remains non-negative, we typically optimize $\log \lambda$. This dual update automatically adjusts the ``penalty strength'' of the Fisher metric, ensuring the policy refinement stays within the provable neighborhood of the behavioral policy support.

\textbf{Further Discussion of the Lagrangian~\eqref{eq:lagrangian}.}
The inner optimization problem in~\eqref{eq:lagrangian} admits an insightful
analytical structure. By applying a first-order Taylor expansion to the critic
\(Q_\phi(s,a+\delta_\theta)\) around the behavioral action \(a\), the objective
becomes locally quadratic in \(\delta\):
\begin{equation}
\max_{\delta_\theta}
\mathbb{E}_{s,a}
\left[
\nabla_a Q_\phi(s,a)^\top\delta_\theta(s,a)
-
\frac{\lambda}{2}
\delta_\theta(s,a)^\top I(s,a)\delta_\theta(s,a)
\right].
\end{equation}
Taking the derivative with respect to \(\delta\) gives the normal equation
\[
    \lambda I(s,a)\delta = \nabla_a Q_\phi(s,a).
\]
Since the local Fisher matrix \(I(s,a)\) may be positive semidefinite, the pseudo-inverse solution
\[
    \delta^*(s,a)
    \approx
    \frac{1}{\lambda}I(s,a)^\dagger\nabla_a Q_\phi(s,a)
\]
should be interpreted as the minimum-norm stationary solution when
\(\nabla_a Q_\phi(s,a)\in\mathrm{Range}(I(s,a))\). If this condition fails,
the undamped objective may be unbounded along the null space of \(I(s,a)\).
In practice, this motivates using a small damped metric
\[
    I_\rho(s,a)=I(s,a)+\rho I_d,
    \qquad \rho>0,
\]
which yields the well-defined update
\begin{equation} \label{eq:closed_form_delta}
    \delta^*_\rho(s,a)
    \approx
    \frac{1}{\lambda}I_\rho(s,a)^{-1}\nabla_a Q_\phi(s,a).
\end{equation}
This expression reveals that the optimal policy refinement is effectively a pointwise Natural Gradient update in the action space, where the gradient of the critic is preconditioned by the inverse of local Fisher information matrix.

This closed-form solution suggests a significant opportunity for accelerating offline RL training. Traditionally, policy refinement requires iterative gradient-based optimization (as shown in the Primal Update). However,~\eqref{eq:closed_form_delta} implies that if the score function (and thus $I(s, a)$) is efficiently estimated, one could potentially bypass the inner loops of policy training entirely. By directly computing $\delta^*$ and updating the policy via a single supervised learning step, or even performing "on-the-fly" refinement during inference, the computational bottleneck of policy-critic coordination could be drastically reduced. We believe that leveraging this metric-aware analytical solution to design training-free or one-step refinement paradigms is a promising frontier for real-time offline RL applications.

\subsection{Deviated Optimal Solution Gap under Isotropic Metric.} \label{append:optimal_policy_gap}

We analyze the suboptimality induced by replacing the anisotropic metric $I(s,a)$ (or practical computation with $I_{\rho}(s,a)$ for one-step computation) with an isotropic identity matrix. Under the quadratic approximation in~\eqref{eq:lagrangian}, the objective takes the general linear form with the quadratic constraint:
\begin{equation}
\mathcal{L}(\delta_\theta,\lambda) = g^\top \delta - \frac{\lambda}{2} \delta^\top M \delta,
\end{equation}
where $g = \nabla_a Q_\phi(s,a)$ and $M = I_\rho(s,a)$.

The optimal solution under the anisotropic metric is:
\begin{equation}
\delta^*_{\text{ani}} = \frac{1}{\lambda} M^{^{\dagger}} g,
\end{equation}
yielding the optimal value:
\begin{equation}
\mathcal{L}^*_{\text{ani}} = \frac{1}{2\lambda} g^\top M^{^{\dagger}} g.
\end{equation}

In contrast, replacing $M$ with the identity matrix leads to the isotropic solution:
\begin{equation}
\delta^*_{\text{iso}} = \frac{1}{\lambda} g,
\end{equation}
with objective value:
\begin{equation}
\mathcal{L}^*_{\text{iso}} = \frac{1}{2\lambda} g^\top g.
\end{equation}

We define the optimality gap as:
\begin{equation}
\Delta(s,a) = \mathcal{L}^*_{\text{ani}} - \mathcal{L}^*_{\text{iso}} 
= \frac{1}{2\lambda} \left( g^\top M^{^{\dagger}} g - g^\top g \right).
\end{equation}

To further characterize this gap, consider the eigendecomposition $M = U \Lambda U^\top$, where $\Lambda = \mathrm{diag}(\lambda_i)$ and $U = [u_1, \cdots, u_d]$ ($\{u_i\}_{i = 1}^d$ is the orthogonal basis). Then the gap can be written as:
\begin{equation}
\Delta(s,a) = \frac{1}{2\lambda} \sum_i (u_i^\top g)^2 \left( \lambda_i^{-1} - 1 \right).
\end{equation}
This expression reveals that the suboptimality arises from ignoring the local curvature encoded in $M = I_{\rho}(s,a)$. Specifically, directions with small eigenvalues ($\lambda_i < 1$) are under-emphasized by the isotropic update, while directions with large eigenvalues ($\lambda_i > 1$) are over-amplified. As a result, the isotropic solution fails to properly balance exploration across directions, leading to degraded objective improvement.

In the special case where $M = I_d$, the gap vanishes. However, in general offline RL settings where the data distribution induces highly anisotropic geometry, this gap can be significant. This highlights the importance of anisotropic metric-aware updates for achieving optimal policy refinement.

\clearpage
\section{Implementation} \label{append:implementation}

\begin{algorithm}[ht] 
\caption{FiDec} 
\label{algo:FiDec}
\SetKwProg{Fn}{Function}{}{end}
\SetKwFunction{FMu}{$\mu_\beta$}
\SetKwFunction{FInfo}{$I$}
\SetKwFunction{FSample}{$T_s$}
\SetKwFunction{FSamplenext}{$T_{s^\prime}$}

\Fn{\FInfo{$s, a$}}{
    \Return $\frac{\big( t_\varepsilon v_{\beta}(t_\varepsilon, s, a) - a \big) \big( t_\varepsilon v_{\beta}(t_\varepsilon, s, a) - a \big)^{\top}}{(1 - t_\varepsilon)^2}$ \tcp{\textcolor{cyan}{Fisher information matrix}}
}
\medskip

\Fn{\FMu{$s, z$}}{
    \For{$t = 0$ \KwTo $M-1$}{
        $z \leftarrow z + \frac{1}{M} v_\beta\!\left(\tfrac{t}{M}, s, z\right)$
    }
    \Return $z$  \qquad  \qquad \qquad \qquad \ \  \tcp{\textcolor{cyan}{Behavioral policy}}
}
\medskip

\Fn{\FSample{$\mu_\beta(s, z)$}}{
    Sample $z \sim \mathcal{N}(0, I_d)$ \;
    \Return $a + \delta_\theta(s, \mu_\beta(s, z))$ \qquad \qquad \tcp{\textcolor{cyan}{Transport map}}
}
\medskip

\While{not converged}{
    Sample minibatch $\{(s, a, r, s')\} \sim \mathcal{D}$ \;
    
    \vspace{0.1em}
    \tcp{\textcolor{cyan}{Step (1): Update critic $Q_\phi$}}
    Sample $z \sim \mathcal{N}(0, I_d)$ \;
    $a' \leftarrow T_{s'}( \mu_\beta(s', z))$ \;
    Update $\phi$ by minimizing TD error:
    \[
        \mathbb{E}\Big[\big(Q_\phi(s, a) - r - \gamma \bar{Q}(s', a')\big)^2\Big]
    \]
    
    \vspace{0.1em}
    \tcp{\textcolor{cyan}{Step (2): Train vector field $v_\beta$}}
    Sample $x^0 \sim \mathcal{N}(0, I_d)$ \;
    $x^1 \leftarrow a$ \;
    Sample $t \sim \mathrm{Unif}[0, 1]$ \;
    $x^t \leftarrow (1 - t)x^0 + t x^1$ \;
    Update $\beta$ by minimizing:
    \[
         \mathbb{E}_{t \sim \text{Unit}[0,1], s, a = x^1  \sim \mathcal{D}, x^0 \sim  \mathcal{N}(0, I_d)}\Big[\|v_\beta(t, s, x^t) - (x^1 - x^0)\|_2^2\Big]
    \]
    
    \vspace{0.1em}
    \tcp{\textcolor{cyan}{Step (3): Train transport map $T$}}
    $\delta_\theta(s,a) \leftarrow T(s, \mu_\beta(s, z)) - \mu_\beta(s, z)$ \;
    Threshold $\epsilon$ \;
    Update $\theta$ by maximizing:
    \[
        \mathbb{E}_{s\sim \mathcal{D},\, z\sim \mathcal{N}(0, I_d)}\Big[
        Q_\phi(s, T_s( \mu_\beta(s, z))) 
        - \lambda \big( \frac{1}{2} \delta_{\theta}(s,a)^\top I(s,a)\delta_{\theta}(s,a) - \epsilon \big)
        \Big]
    \]
    
    \vspace{0.3em}
    \tcp{\textcolor{cyan}{Step (4): Update dual variable $\lambda$}}
    Step size $\eta$ \;
    \[
        \lambda \leftarrow \mathrm{ReLU}\!\big(
        \lambda + \eta \, \mathbb{E}_{s\sim \mathcal{D},\, a\sim\pi_\beta(\cdot|s))}\big[ \frac{1}{2}
        \delta_{\theta}(s,a)^\top I(s,a)\delta_{\theta}(s,a) - \epsilon
        \big]
        \big)
    \]
}
\Return transport map $T_s$
\end{algorithm}

\clearpage

\section{More Experimental Results.} \label{Append:more_results}

\subsection{Benchmarking Tasks}

\begin{figure}[ht]
    \centering
    \includegraphics[width=0.99\linewidth]{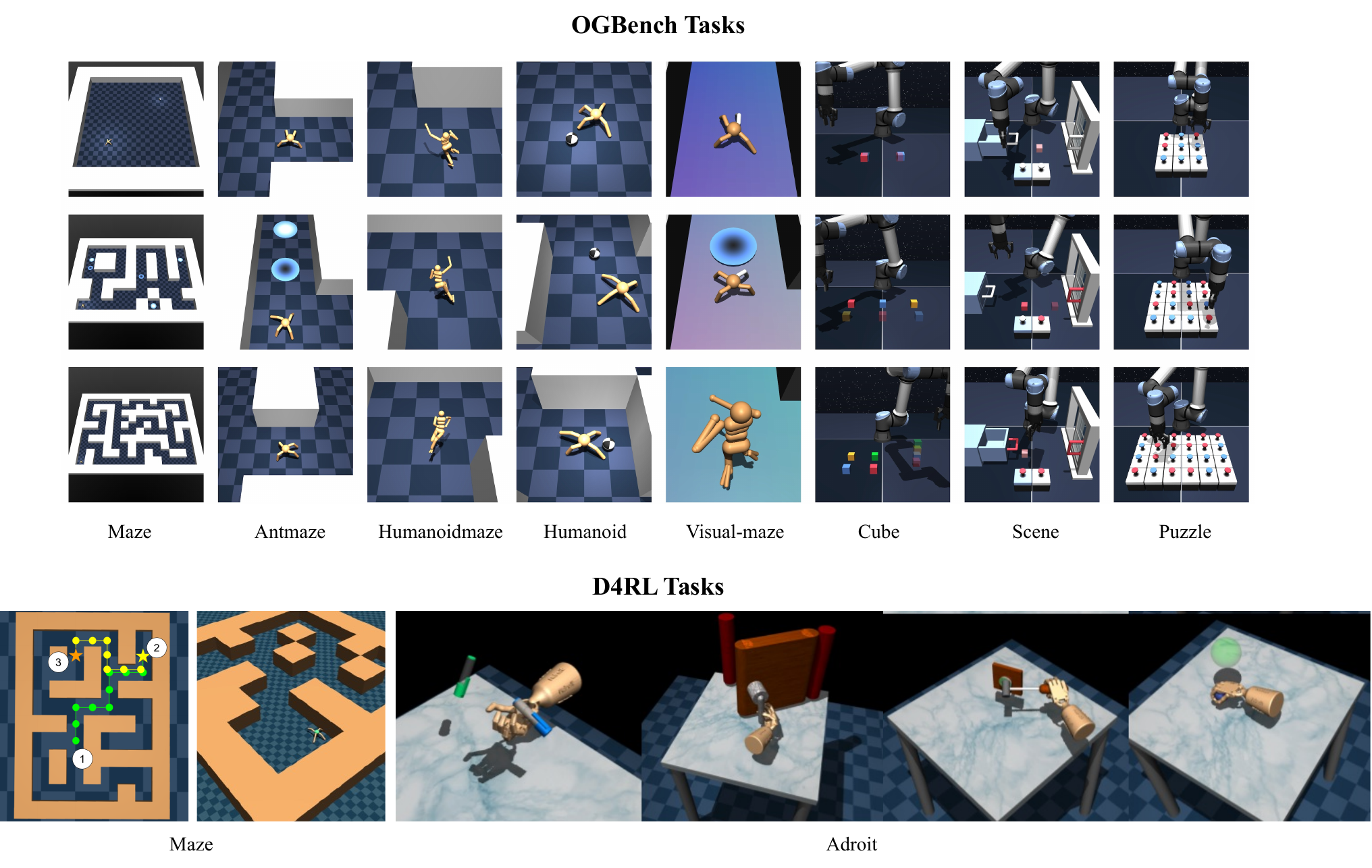}
    \caption{Overview of benchmark tasks. Our evaluation spans a diverse set of environments from OGBench \citep{park2024ogbench} (Maze, Antmaze, Humanoidmaze, Humanoid, Visual-maze, Cube, Scene, Puzzle) and D4RL \citep{fu2020d4rl} (Maze and Adroit), covering a wide range of challenges including navigation, locomotion, dexterous manipulation, and vision-based control.}
    \label{fig:benchmark}
\end{figure}

\subsection{Architectures and Hyperparameters}
Table~\ref{tab:fisher_hparam_summary} summarizes the main training configuration of our FiDec. We retain the same DeFlow/FQL-style backbone settings for optimization, network scale, critic update, flow integration, and evaluation protocol, while introducing the Fisher estimation time $t_{\varepsilon}$ as the key additional hyperparameter.

\begin{table}[ht]
\centering
\small
\setlength{\tabcolsep}{6pt}
\renewcommand{\arraystretch}{1.08}
\caption{Hyperparameters and configuration summary for the FiDec/DeFlow backbone.}
\label{tab:fisher_hparam_summary}
\begin{tabular}{l c | l c}
\toprule
\textbf{Hyperparameter} & \textbf{Value} & \textbf{Hyperparameter} & \textbf{Value} \\
\midrule
discount $\gamma$ (default) & 0.99 & target update rate $\tau$ & 0.005 \\
actor / critic learning rate & $3\times 10^{-4}$ & batch size & 256 \\
evaluation interval & $10^5$ & evaluation episodes & 50 \\
replay buffer size (minimum) & $2\times 10^6$ & save interval & $5\times 10^5$ \\
actor hidden dims & $(512,512,512,512)$ & critic hidden dims & $(512,512,512,512)$ \\
critic layer norm & True & actor layer norm & False \\
flow steps & 10 & flow time sampling & $\mathrm{Unif}[0,1]$ \\
double critic ensemble & 2 & gradient clipping & 5.0 \\
Q-normalization & True & use Lagrange tuning & True \\
BC / trust coefficient init. & 10.0 & fix BC flow online & False \\



\bottomrule
\end{tabular}

\vspace{2mm}
\par\noindent
\parbox{0.97\linewidth}{\footnotesize\textbf{Notes.} This table reports the shared backbone configuration together with the Fisher-specific settings used in the final runs. Following the latest wandb-confirmed runtime configuration, we use $t_{\varepsilon}=0.8$.
The replay buffer size is reported as a configured minimum, since the actual buffer allocation is implemented as $\max(\texttt{buffer\_size}, \texttt{dataset.size}+1)$.}
\end{table}

Table~\ref{tab:fisher_structure_summary} summarizes the model structure and computation flow. The BC flow policy, refinement head, and double-critic backbone remain aligned with the DeFlow/FQL-style implementation. The key component is a paradigm shift from an isotropic penalty to a local Fisher information matrix, estimated from the velocity field of flow. 

\begin{table*}[ht]
\centering
\small
\setlength{\tabcolsep}{5pt}
\renewcommand{\arraystretch}{1.08}
\caption{Architecture and computation summary of the FiDec.}
\label{tab:fisher_structure_summary}
\resizebox{1.\textwidth}{!}{%
\begin{tabular}{l l l c}
\toprule
\textbf{Module} & \textbf{Sub-Module} & \textbf{Structure / Computation} & \textbf{Output Dimension} \\
\midrule

\multirow{4}{*}{\makecell[c]{BC Flow Policy\\$v_\beta(s,x_t,t)$}}
& Input & Observation, corrected flow sample, and time scalar & $d_s + d_a + 1$ \\
& Backbone & MLP with hidden dims $(512,512,512,512)$ & 512 \\
& Update objective & Standard BC flow-matching loss & 1 \\
& Output & Velocity field in action space & $d_a$ \\

\midrule

\multirow{4}{*}{\makecell[c]{Transport map\\$T_s(a) = a +\delta_{\theta}(s,a)$}}
& Input & Observation and base action $a = \mu_\beta(s,a)$ & $d_s + d_a$ \\
& Backbone & MLP with hidden dims $(512,512,512,512)$ & 512 \\
& Role & Predict residual refinement $\delta_\theta(s,a)$ & $d_a$ \\
& Output action & $\mu_\beta(s,z) + \delta_\theta(s,a)$ & $d_a$ \\

\midrule

\multirow{5}{*}{\makecell[c]{Reward Critic\\$Q_\phi(s,a)$}}
& Input & Observation-action pair $(s,a)$ & $d_s + d_a$ \\
& Backbone & Value network with hidden dims $(512,512,512,512)$ & 512 \\
& Critic normalization & LayerNorm enabled & -- \\
& Ensemble & Double critic & 2 \\
& Output & Scalar $Q$ estimate & 1 \\

\midrule

\multirow{5}{*}{\makecell[c]{Fisher Metric\\Estimation}}
& Base action sampling & $a=\mathrm{ODESolve}(v_\beta,s,z)$, $z\sim\mathcal{N}(0,I)$ & $d_a$ \\
& Velocity query & Reuse BC flow network $v_\beta(s,x_t,t_{\varepsilon})$ & $d_a$ \\
& Score estimate & $\mathbf{s}_{t_{\varepsilon}} = (t_{\varepsilon}v_t - a) / (1-t_{\varepsilon})$ & $d_a$ \\
& Fisher penalty & $\frac{1}{2}\delta (s,a)^\top I(s,a) \delta (s,a)$ with trace-normalized $I(s,a)$ & 1 \\

\midrule

\multirow{3}{*}{\makecell[c]{Constraint\\Control}}
& Lagrange parameter & Learnable scalar $\log \lambda$ & 1 \\
& Trust-region control & Fisher-induced quadratic metric corresponding to KL divergence approximation & 1 \\
& Actor objective & BC flow loss + quadratic penalty + Q-maximization term & 1 \\

\bottomrule
\end{tabular}
}

\vspace{2mm}
\par\noindent
\parbox{0.97\linewidth}{\footnotesize\textbf{Notes.} The BC flow policy and refinement head share the same four-layer MLP scale as the DeFlow/FQL-style backbone. The main architectural difference is not a larger network, but the replacement of the isotropic residual penalty by a trace-normalized Fisher quadratic form estimated at $t_{\varepsilon}=0.8$.}
\end{table*}

\begin{table*}[ht]
\centering
\scriptsize
\setlength{\tabcolsep}{4pt}
\renewcommand{\arraystretch}{1.05}
\caption{Task-specific hyperparameters for offline RL. Prior columns follow the original task-specific settings reported in FQL and reused by DeFlow. We append our FiDec variant as the final column. DeFlow explicitly states that Q-normalization is enabled; we follow the same paper-level setting here. The only task-sensitive additional hyperparameter for the FiDec variant is the Fisher estimation time $t_{\varepsilon}$, fixed to $0.8$ in our final experiments.}
\label{tab:offline_hparams_full_with_fisher}

\begin{adjustbox}{max width=\textwidth}
\begin{tabular}{>{\raggedright\arraybackslash}p{5.5cm}ccccccccccc}
\toprule
Task(s)
& IQL
& ReBRAC
& IDQL
& SRPO
& CAC
& FAWAC
& FBRAC
& IFQL
& FQL
& DeFlow
& Fisher-Decorator \\
& $\alpha$
& $(\alpha_1,\alpha_2)$
& $N$
& $\beta$
& $\eta$
& $\alpha$
& $\alpha$
& $N$
& $\alpha$
& $\delta$
& $t_{\varepsilon}$ \\
\midrule

\makecell[l]{antmaze-large-navigate-singletask-\\task\{1,2,3,4,5\}-v0}
& 10 & (0.003, 0.01) & 32 & 0.3 & 1 & 3 & 3 & 32 & 10 & 0.01 & 0.8 \\

\makecell[l]{antmaze-giant-navigate-singletask-\\task\{1,2,3,4,5\}-v0}
& 10 & (0.003, 0.01) & 32 & 0.3 & 1 & 3 & 10 & 32 & 10 & 0.01 & 0.8 \\

\makecell[l]{humanoidmaze-medium-navigate-singletask-\\task\{1,2,3,4,5\}-v0}
& 10 & (0.01, 0.01) & 32 & 0.3 & 0.03 & 3 & 30 & 32 & 30 & 0.001 & 0.8 \\

\makecell[l]{humanoidmaze-large-navigate-singletask-\\task\{1,2,3,4,5\}-v0}
& 10 & (0.01, 0.01) & 32 & 0.3 & 1 & 3 & 30 & 32 & 30 & 0.001 & 0.8 \\

\makecell[l]{antsoccer-arena-navigate-singletask-\\task\{1,2,3,4,5\}-v0}
& 1 & (0.01, 0.01) & 32 & 0.03 & 1 & 10 & 30 & 64 & 10 & 0.01 & 0.8 \\

\makecell[l]{cube-single-play-singletask-\\task\{1,2,3,4,5\}-v0}
& 1 & (1, 0) & 32 & 0.03 & 0.003 & 1 & 100 & 32 & 300 & 0.001 & 0.8 \\

\makecell[l]{cube-double-play-singletask-\\task\{1,2,3,4,5\}-v0}
& 0.3 & (0.1, 0) & 32 & 0.1 & 0.3 & 0.3 & 100 & 32 & 300 & 0.001 & 0.8 \\

\makecell[l]{scene-play-singletask-\\task\{1,2,3,4,5\}-v0}
& 10 & (0.1, 0.01) & 32 & 0.1 & 0.3 & 0.3 & 100 & 32 & 300 & 0.001 & 0.8 \\

\makecell[l]{puzzle-3x3-play-singletask-\\task\{1,2,3,4,5\}-v0}
& 10 & (0.3, 0.01) & 32 & 0.1 & 0.01 & 0.3 & 100 & 32 & 1000 & 0.0005 & 0.8 \\

\makecell[l]{puzzle-4x4-play-singletask-\\task\{1,2,3,4,5\}-v0}
& 3 & (0.3, 0.01) & 32 & 0.1 & 0.01 & 0.3 & 300 & 32 & 1000 & 0.0005 & 0.8 \\

\makecell[l]{antmaze-umaze-v2\\antmaze-umaze-diverse-v2\\antmaze-medium-play-v2\\antmaze-medium-diverse-v2}
& - & - & - & - & 0.01 & 3 & 10 & 32 & 10 & 0.015 & 0.8 \\

\makecell[l]{antmaze-large-play-v2}
& - & - & - & - & 4.5 & 3 & 1 & 32 & 3 & 0.015 & 0.8 \\

\makecell[l]{antmaze-large-diverse-v2}
& - & - & - & - & 3.5 & 3 & 1 & 32 & 3 & 0.015 & 0.8 \\

\makecell[l]{pen-human-v1}
& - & - & 32 & 0.03 & 0.003 & 0.03 & 30000 & 32 & 10000 & 0.01 & 0.8 \\

\makecell[l]{pen-cloned-v1}
& - & - & 32 & 0.1 & 0.003 & 0.3 & 10000 & 32 & 10000 & 0.01 & 0.8 \\

\makecell[l]{pen-expert-v1}
& - & - & 32 & 0.1 & 0.03 & 0.1 & 30000 & 32 & 3000 & 0.01 & 0.8 \\

\makecell[l]{door-human-v1}
& - & - & 32 & 0.01 & 0.03 & 1 & 30000 & 32 & 30000 & 0.001 & 0.8 \\

\makecell[l]{door-cloned-v1}
& - & - & 32 & 0.03 & 0.03 & 1 & 10000 & 128 & 30000 & 0.001 & 0.8 \\

\makecell[l]{door-expert-v1}
& - & - & 32 & 0.01 & 0.03 & 3 & 30000 & 32 & 30000 & 0.001 & 0.8 \\

\makecell[l]{hammer-human-v1}
& - & - & 128 & 0.1 & 0.03 & 3 & 30000 & 32 & 30000 & 0.001 & 0.8 \\

\makecell[l]{hammer-cloned-v1}
& - & - & 32 & 0.1 & 0.003 & 0.03 & 10000 & 32 & 10000 & 0.001 & 0.8 \\

\makecell[l]{hammer-expert-v1}
& - & - & 32 & 0.03 & 0.03 & 3 & 30000 & 32 & 30000 & 0.001 & 0.8 \\

\makecell[l]{relocate-human-v1}
& - & - & 32 & 0.03 & 0.01 & 0.3 & 30000 & 128 & 10000 & 0.001 & 0.8 \\

\makecell[l]{relocate-cloned-v1}
& - & - & 64 & 0.03 & 0.01 & 0.1 & 3000 & 32 & 30000 & 0.001 & 0.8 \\

\makecell[l]{relocate-expert-v1}
& - & - & 32 & 0.01 & 0.003 & 1 & 30000 & 32 & 30000 & 0.001 & 0.8 \\

\makecell[l]{visual-cube-single-play-singletask-task1-v0}
& 1 & (1, 0) & - & - & - & - & 100 & 32 & 300 & 0.001 & 0.8 \\

\makecell[l]{visual-cube-double-play-singletask-task1-v0}
& 0.3 & (0.1, 0) & - & - & - & - & 100 & 32 & 100 & 0.001 & 0.8 \\

\makecell[l]{visual-scene-play-singletask-task1-v0}
& 10 & (0.1, 0.01) & - & - & - & - & 100 & 32 & 100 & 0.001 & 0.8 \\

\makecell[l]{visual-puzzle-3x3-play-singletask-task1-v0}
& 10 & (0.3, 0.01) & - & - & - & - & 100 & 32 & 300 & 0.0005 & 0.8 \\

\makecell[l]{visual-puzzle-4x4-play-singletask-task1-v0}
& 3 & (0.3, 0.01) & - & - & - & - & 300 & 32 & 300 & 0.0005 & 0.8 \\

\bottomrule
\end{tabular}
\end{adjustbox}

\vspace{2mm}
\parbox{0.97\linewidth}{\footnotesize
\textbf{Notes.}
We follow the task-specific hyperparameter conventions used in FQL and DeFlow, and append our algorithm (FiDec) as an additional column.
For the FiDec, the only task-sensitive additional hyperparameter is the Fisher estimation time $t_{\varepsilon}$, which is fixed to 0.8 across all offline tasks in our final experiments.
Other Fisher-specific constants are treated as fixed implementation details and are therefore omitted from the main comparison table. For the hyperparameter $\epsilon$, we keep it the same as in DeFlow for a fair comparison.}
\end{table*}

\clearpage
\subsection{Additional Results} \label{append:additional_result}

The additional examples in Figure~\ref{fig:new_examples} further illustrate that this advantage is not restricted to a single landscape geometry. In Figure~\ref{fig:new_examples}(a), the asymmetric bimodal landscape with an off-center saddle shows that isotropic regularization still induces either averaging across modes or interpolation through low-value regions, whereas FiDec concentrates refinement on the favorable high-value mode. Figure~\ref{fig:new_examples}(b) considers a curved support manifold with a detached hotspot: FQL is attracted toward the unsupported high-value region, DeFlow remains overly conservative on the support corridor, and FiDec advances farther along the valid manifold toward better actions. Figure~\ref{fig:new_examples}(c) presents a more complex multi-lobe landscape, where the central basin and eccentric valley make isotropic updates even more ambiguous; again, FiDec preserves the meaningful modal structure while steering mass toward high-value regions. Finally, Figure~\ref{fig:new_examples}(d) shows a crescent-like support geometry with multiple high-value tendencies, where FiDec consistently follows the data geometry and achieves support-preserving improvement, while FQL is prone to off-support attraction and DeFlow tends to under-exploit the available high-value directions. Taken together, these four examples highlight that the benefit of FiDec comes from geometry-aware anisotropic refinement rather than from any task-specific coincidence of the original two examples.

\begin{figure}[ht]
\centering

\includegraphics[width=0.98\linewidth]{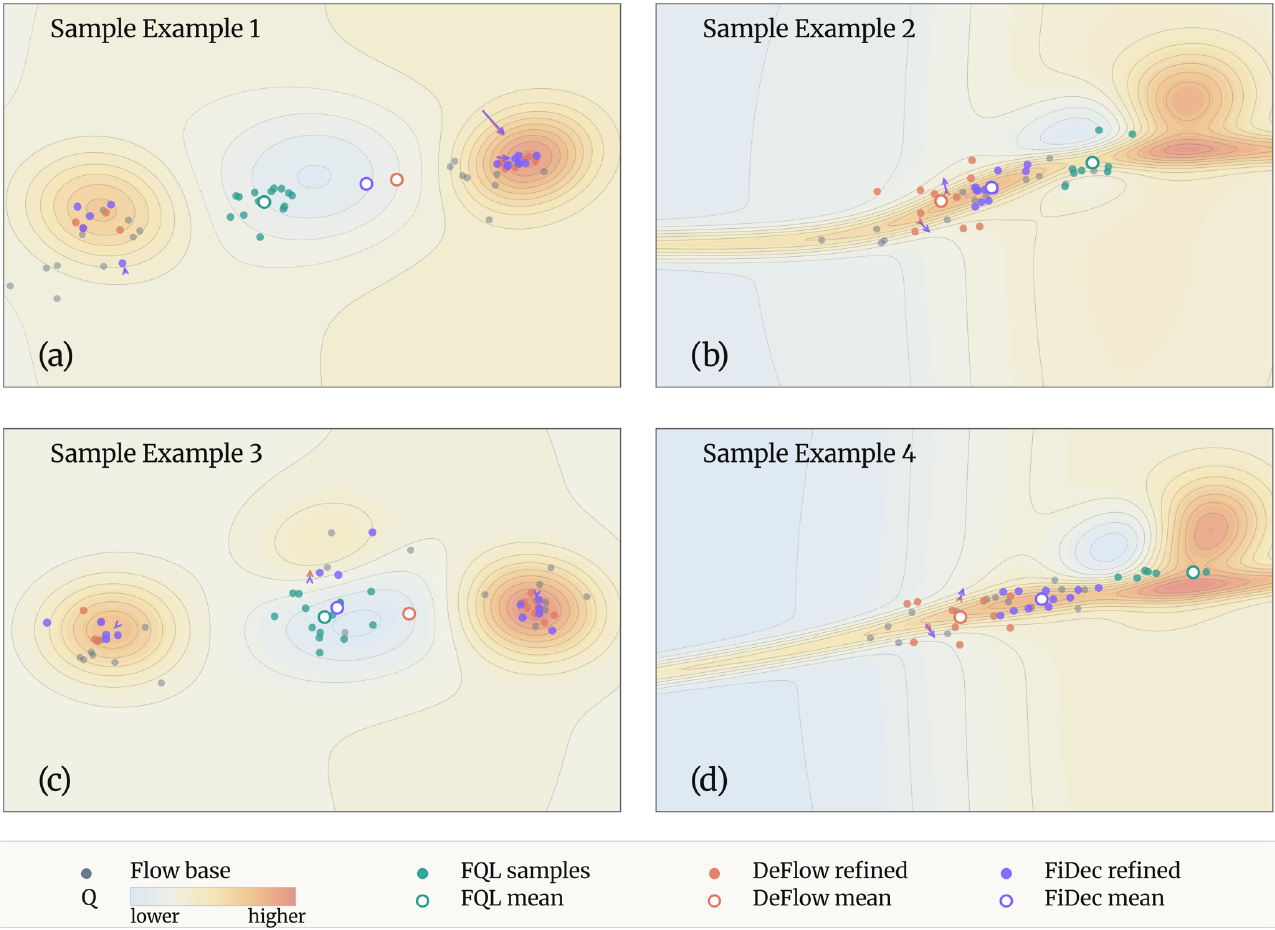}

\caption{Additional examples of isotropic and anisotropic policy refinement.
Panels (a) and (c) consider multimodal landscapes. Under isotropic regularization, FQL tends to average across modes, while DeFlow often interpolates through low-value regions or remains insufficiently biased toward favorable modes. In contrast, FiDec preserves multimodality while shifting mass toward higher-value regions by following the local geometry induced by the Fisher information matrix. 
Panels (b) and (d) consider policies supported on thin manifolds with detached or weakly supported high-value regions. In these cases, FQL is more easily attracted toward unsupported areas and exhibits out-of-distribution drift, whereas DeFlow remains comparatively conservative along the support and may fail to reach better regions. FiDec instead follows the support geometry and refines the policy toward higher-value valid regions. Together, these additional examples show that the advantage of FiDec is not tied to a single landscape, but persists across diverse multimodal and manifold-supported refinement scenarios.}
\label{fig:new_examples}
\vspace{-10pt}
\end{figure}

\begin{table*}[ht]
\centering
\caption{Full offline RL results corresponding to Table~\ref{tab:offline_rl_latest_final}. We report all 73 OGBench and D4RL offline tasks. (*) indicates the default task in each environment.}
\label{tab:full_offline_rl_latest_final}
\tiny
\setlength{\tabcolsep}{2.2pt}
\renewcommand{\arraystretch}{0.88}
\begin{adjustbox}{width=\textwidth}
\begin{tabular}{lcccccccccccc}
\toprule
& \multicolumn{3}{c}{Gaussian Policies} & \multicolumn{3}{c}{Diffusion Policies} & \multicolumn{6}{c}{Flow Policies} \\
\cmidrule(lr){2-4} \cmidrule(lr){5-7} \cmidrule(lr){8-13}
Task & BC & IQL & ReBRAC & IDQL & SRPO & CAC & FAWAC & FBRAC & IFQL & FQL & DeFlow & \textbf{FiDec} \\
\midrule
antmaze-large-navigate-singletask-task1-v0 (*) & 0 $\pm$0 & 48 $\pm$9 & \textbf{91 $\pm$10} & 0 $\pm$0 & 0 $\pm$0 & 42 $\pm$7 & 1 $\pm$1 & 70 $\pm$20 & 24 $\pm$17 & 80 $\pm$8 & \textbf{87$\pm$3} & \textbf{88$\pm$2} \\
antmaze-large-navigate-singletask-task2-v0 & 6 $\pm$3 & 42 $\pm$6 & \textbf{88 $\pm$4} & 14 $\pm$8 & 4 $\pm$4 & 1 $\pm$1 & 0 $\pm$1 & 35 $\pm$12 & 8 $\pm$3 & 57 $\pm$10 & 75$\pm$6 & 80$\pm$2 \\
antmaze-large-navigate-singletask-task3-v0 & 29 $\pm$5 & 72 $\pm$7 & 51 $\pm$18 & 26 $\pm$8 & 3 $\pm$2 & 49 $\pm$10 & 12 $\pm$4 & 83 $\pm$15 & 52 $\pm$17 & \textbf{93 $\pm$3} & \textbf{94$\pm$4} & \textbf{90$\pm$3} \\
antmaze-large-navigate-singletask-task4-v0 & 8 $\pm$3 & 51 $\pm$9 & \textbf{84 $\pm$7} & 62 $\pm$25 & 45 $\pm$19 & 17 $\pm$6 & 10 $\pm$3 & 37 $\pm$18 & 18 $\pm$8 & 80 $\pm$4 & 62$\pm$23 & \textbf{85$\pm$4} \\
antmaze-large-navigate-singletask-task5-v0 & 10 $\pm$3 & 54 $\pm$22 & \textbf{90 $\pm$2} & 2 $\pm$2 & 1 $\pm$1 & 55 $\pm$6 & 9 $\pm$5 & 76 $\pm$8 & 38 $\pm$18 & 83 $\pm$4 & \textbf{86$\pm$4} & \textbf{89$\pm$2} \\
\midrule
antmaze-giant-navigate-singletask-task1-v0 (*) & 0 $\pm$0 & 0 $\pm$0 & \textbf{27 $\pm$22} & 0 $\pm$0 & 0 $\pm$0 & 0 $\pm$0 & 0 $\pm$0 & 0 $\pm$1 & 0 $\pm$0 & 4 $\pm$5 & 6$\pm$2 & 15$\pm$1 \\
antmaze-giant-navigate-singletask-task2-v0 & 0 $\pm$0 & 1 $\pm$1 & 16 $\pm$17 & 0 $\pm$0 & 0 $\pm$0 & 0 $\pm$0 & 0 $\pm$0 & 4 $\pm$7 & 0 $\pm$0 & 9 $\pm$7 & 18$\pm$13 & \textbf{19$\pm$4} \\
antmaze-giant-navigate-singletask-task3-v0 & 0 $\pm$0 & 0 $\pm$0 & \textbf{34 $\pm$22} & 0 $\pm$0 & 0 $\pm$0 & 0 $\pm$0 & 0 $\pm$0 & 0 $\pm$0 & 0 $\pm$0 & 0 $\pm$1 & 0$\pm$0 & 0$\pm$0 \\
antmaze-giant-navigate-singletask-task4-v0 & 0 $\pm$0 & 0 $\pm$0 & 5 $\pm$12 & 0 $\pm$0 & 0 $\pm$0 & 0 $\pm$0 & 0 $\pm$0 & 9 $\pm$4 & 0 $\pm$0 & 14 $\pm$23 & 0$\pm$0 & \textbf{21$\pm$14} \\
antmaze-giant-navigate-singletask-task5-v0 & 1 $\pm$1 & 19 $\pm$7 & \textbf{49 $\pm$22} & 0 $\pm$1 & 0 $\pm$0 & 0 $\pm$0 & 0 $\pm$0 & 6 $\pm$10 & 13 $\pm$9 & 16 $\pm$28 & 0$\pm$0 & 10$\pm$12 \\
\midrule
humanoidmaze-medium-navigate-singletask-task1-v0 (*) & 1 $\pm$0 & 32 $\pm$7 & 16 $\pm$9 & 1 $\pm$1 & 0 $\pm$0 & 38 $\pm$19 & 6 $\pm$2 & 25 $\pm$8 & 69 $\pm$19 & 19 $\pm$12 & 35$\pm$4 & \textbf{80$\pm$5} \\
humanoidmaze-medium-navigate-singletask-task2-v0 & 1 $\pm$0 & 41 $\pm$9 & 18 $\pm$16 & 1 $\pm$1 & 1 $\pm$1 & 47 $\pm$35 & 40 $\pm$2 & 76 $\pm$10 & 85 $\pm$11 & \textbf{94 $\pm$3} & 76$\pm$2 & \textbf{92$\pm$1} \\
humanoidmaze-medium-navigate-singletask-task3-v0 & 6 $\pm$2 & 25 $\pm$5 & 36 $\pm$13 & 0 $\pm$1 & 2 $\pm$1 & \textbf{83 $\pm$18} & 19 $\pm$2 & 27 $\pm$11 & 49 $\pm$49 & 74 $\pm$18 & 75$\pm$3 & \textbf{84$\pm$7} \\
humanoidmaze-medium-navigate-singletask-task4-v0 & 0 $\pm$0 & 0 $\pm$1 & \textbf{15 $\pm$16} & 1 $\pm$1 & 1 $\pm$1 & 5 $\pm$4 & 1 $\pm$1 & 1 $\pm$2 & 1 $\pm$1 & 3 $\pm$4 & 11$\pm$2 & 5$\pm$1 \\
humanoidmaze-medium-navigate-singletask-task5-v0 & 2 $\pm$1 & 66 $\pm$4 & 24 $\pm$20 & 1 $\pm$1 & 3 $\pm$3 & 91 $\pm$5 & 31 $\pm$7 & 63 $\pm$9 & \textbf{98 $\pm$2} & \textbf{97 $\pm$2} & 88$\pm$4 & \textbf{97$\pm$1} \\
\midrule
humanoidmaze-large-navigate-singletask-task1-v0 (*) & 0 $\pm$0 & 3 $\pm$1 & 2 $\pm$1 & 0 $\pm$0 & 0 $\pm$0 & 1 $\pm$1 & 0 $\pm$0 & 0 $\pm$1 & 6 $\pm$2 & \textbf{7 $\pm$6} & 5$\pm$1 & 6$\pm$1 \\
humanoidmaze-large-navigate-singletask-task2-v0 & \textbf{0 $\pm$0} & \textbf{0 $\pm$0} & \textbf{0 $\pm$0} & \textbf{0 $\pm$0} & \textbf{0 $\pm$0} & \textbf{0 $\pm$0} & \textbf{0 $\pm$0} & \textbf{0 $\pm$0} & \textbf{0 $\pm$0} & \textbf{0 $\pm$0} & \textbf{0$\pm$0} & \textbf{0$\pm$0} \\
humanoidmaze-large-navigate-singletask-task3-v0 & 1 $\pm$1 & 7 $\pm$3 & 8 $\pm$4 & 3 $\pm$1 & 1 $\pm$1 & 2 $\pm$3 & 1 $\pm$1 & 10 $\pm$2 & \textbf{48 $\pm$10} & 11 $\pm$7 & 13$\pm$5 & 17$\pm$1 \\
humanoidmaze-large-navigate-singletask-task4-v0 & 1 $\pm$0 & 1 $\pm$0 & 1 $\pm$1 & 0 $\pm$0 & 0 $\pm$0 & 0 $\pm$1 & 0 $\pm$0 & 0 $\pm$0 & 1 $\pm$1 & 2 $\pm$3 & 5$\pm$1 & \textbf{7$\pm$4} \\
humanoidmaze-large-navigate-singletask-task5-v0 & 0 $\pm$1 & 1 $\pm$1 & 2 $\pm$2 & 0 $\pm$0 & 0 $\pm$0 & 0 $\pm$0 & 0 $\pm$0 & 1 $\pm$1 & 0 $\pm$0 & 1 $\pm$3 & 0$\pm$0 & \textbf{10$\pm$13} \\

\midrule

antsoccer-arena-navigate-singletask-task1-v0 & 2 $\pm$1 & 14 $\pm$5 & 0 $\pm$0 & 44 $\pm$12 & 2 $\pm$1 & 1 $\pm$3 & 22 $\pm$2 & 17 $\pm$3 & 61 $\pm$25 & 77 $\pm$4 & 84$\pm$4 & \textbf{89$\pm$4} \\
antsoccer-arena-navigate-singletask-task2-v0 & 2 $\pm$2 & 17 $\pm$7 & 0 $\pm$1 & 15 $\pm$12 & 3 $\pm$1 & 0 $\pm$0 & 8 $\pm$1 & 8 $\pm$2 & 75 $\pm$3 & 88 $\pm$3 & 87$\pm$2 & \textbf{94$\pm$2} \\
antsoccer-arena-navigate-singletask-task3-v0 & 0 $\pm$0 & 6 $\pm$4 & 0 $\pm$0 & 0 $\pm$0 & 0 $\pm$0 & 8 $\pm$19 & 11 $\pm$5 & 16 $\pm$3 & 14 $\pm$22 & \textbf{61 $\pm$6} & 56$\pm$5 & 56$\pm$2 \\
antsoccer-arena-navigate-singletask-task4-v0 (*) & 1 $\pm$0 & 3 $\pm$2 & 0 $\pm$0 & 0 $\pm$1 & 0 $\pm$0 & 0 $\pm$0 & 12 $\pm$3 & 24 $\pm$4 & 16 $\pm$9 & 39 $\pm$6 & 43$\pm$6 & \textbf{48$\pm$6} \\
antsoccer-arena-navigate-singletask-task5-v0 & 0 $\pm$0 & 2 $\pm$2 & 0 $\pm$0 & 0 $\pm$0 & 0 $\pm$0 & 0 $\pm$0 & 9 $\pm$2 & 15 $\pm$4 & 0 $\pm$1 & 36 $\pm$9 & \textbf{41$\pm$5} & 32$\pm$5 \\

\midrule 

cube-single-play-singletask-task1-v0 & 10 $\pm$5 & 88 $\pm$3 & 89 $\pm$5 & \textbf{95 $\pm$2} & 89 $\pm$7 & 77 $\pm$28 & 81 $\pm$9 & 73 $\pm$33 & 79 $\pm$4 & \textbf{97 $\pm$2} & 91$\pm$2 & \textbf{96$\pm$1} \\
cube-single-play-singletask-task2-v0 & 3 $\pm$1 & 85 $\pm$8 & 92 $\pm$4 & \textbf{96 $\pm$2} & 82 $\pm$16 & 80 $\pm$30 & 81 $\pm$9 & 83 $\pm$13 & 73 $\pm$3 & \textbf{97 $\pm$2} & \textbf{96$\pm$1} & \textbf{96$\pm$1} \\
cube-single-play-singletask-task3-v0 & 9 $\pm$3 & 91 $\pm$5 & 93 $\pm$3 & \textbf{99 $\pm$1} & \textbf{96 $\pm$2} & \textbf{98 $\pm$1} & 87 $\pm$4 & 82 $\pm$12 & 88 $\pm$4 & \textbf{98 $\pm$2} & \textbf{97$\pm$1} & \textbf{98$\pm$1} \\
cube-single-play-singletask-task4-v0 & 2 $\pm$1 & 73 $\pm$6 & \textbf{92 $\pm$3} & \textbf{93 $\pm$4} & 70 $\pm$18 & \textbf{91 $\pm$2} & 79 $\pm$6 & 79 $\pm$20 & 79 $\pm$6 & \textbf{94 $\pm$3} & 87$\pm$2 & 89$\pm$2 \\
cube-single-play-singletask-task5-v0 & 3 $\pm$3 & 78 $\pm$9 & 87 $\pm$8 & \textbf{90 $\pm$6} & 61 $\pm$12 & 80 $\pm$20 & 78 $\pm$10 & 76 $\pm$33 & 77 $\pm$7 & \textbf{93 $\pm$3} & 80$\pm$6 & \textbf{90$\pm$1} \\

\midrule

cube-double-play-singletask-task1-v0  (*) & 8 $\pm$3 & 27 $\pm$5 & 45 $\pm$6 & 39 $\pm$19 & 7 $\pm$6 & 21 $\pm$8 & 21 $\pm$7 & 47 $\pm$11 & 35 $\pm$9 & 61 $\pm$9 & \textbf{67$\pm$10} & \textbf{69$\pm$10} \\
cube-double-play-singletask-task2-v0 (*) & 0 $\pm$0 & 1 $\pm$1 & 7 $\pm$3 & 16 $\pm$10 & 0 $\pm$0 & 2 $\pm$2 & 2 $\pm$1 & 22 $\pm$12 & 9 $\pm$5 & 36 $\pm$6 & 53$\pm$3 & \textbf{58$\pm$9} \\
cube-double-play-singletask-task3-v0 & 0 $\pm$0 & 0 $\pm$0 & 4 $\pm$1 & 17 $\pm$8 & 0 $\pm$1 & 3 $\pm$1 & 1 $\pm$1 & 4 $\pm$2 & 8 $\pm$5 & 22 $\pm$5 & 40$\pm$8 & \textbf{51$\pm$13} \\
cube-double-play-singletask-task4-v0 & 0 $\pm$0 & 0 $\pm$0 & 1 $\pm$1 & 0 $\pm$1 & 0 $\pm$0 & 0 $\pm$1 & 0 $\pm$0 & 0 $\pm$1 & 1 $\pm$1 & 5 $\pm$2 & 3$\pm$2 & \textbf{6$\pm$1} \\
cube-double-play-singletask-task5-v0 & 0 $\pm$0 & 4 $\pm$3 & 4 $\pm$2 & 1 $\pm$1 & 0 $\pm$0 & 3 $\pm$2 & 2 $\pm$1 & 2 $\pm$2 & 17 $\pm$6 & 19 $\pm$10 & 28$\pm$3 & \textbf{32$\pm$7} \\

\midrule

scene-play-singletask-task1-v0 & 19 $\pm$6 & 94 $\pm$3 & \textbf{95 $\pm$2} & \textbf{100 $\pm$0} & 94 $\pm$4 & \textbf{100 $\pm$1} & 87 $\pm$8 & \textbf{96 $\pm$8} & \textbf{98 $\pm$3} & \textbf{100 $\pm$0} & \textbf{98$\pm$1} & \textbf{100$\pm$0} \\
scene-play-singletask-task2-v0 (*) & 1 $\pm$1 & 12 $\pm$3 & 50 $\pm$13 & 33 $\pm$14 & 2 $\pm$2 & 50 $\pm$40 & 18 $\pm$8 & 46 $\pm$10 & 0 $\pm$0 & \textbf{76 $\pm$9} & 59$\pm$5 & 72$\pm$5 \\
scene-play-singletask-task3-v0 & 1 $\pm$1 & 32 $\pm$7 & 55 $\pm$16 & \textbf{94 $\pm$4} & 4 $\pm$4 & 49 $\pm$16 & 38 $\pm$9 & 78 $\pm$14 & 54 $\pm$19 & \textbf{98 $\pm$1} & 88$\pm$4 & \textbf{96$\pm$1} \\
scene-play-singletask-task4-v0 & 2 $\pm$2 & 0 $\pm$1 & 3 $\pm$3 & 4 $\pm$3 & 0 $\pm$0 & 0 $\pm$0 & 6 $\pm$1 & 4 $\pm$4 & 0 $\pm$0 & 5 $\pm$1 & 8$\pm$3 & \textbf{12$\pm$3} \\
scene-play-singletask-task5-v0 & \textbf{0 $\pm$0} & \textbf{0 $\pm$0} & \textbf{0 $\pm$0} & \textbf{0 $\pm$0} & \textbf{0 $\pm$0} & \textbf{0 $\pm$0} & \textbf{0 $\pm$0} & \textbf{0 $\pm$0} & \textbf{0 $\pm$0} & \textbf{0 $\pm$0} & \textbf{0$\pm$0} & \textbf{0$\pm$0} \\

\midrule

puzzle-3x3-play-singletask-task1-v0 & 5 $\pm$2 & 33 $\pm$6 & \textbf{97 $\pm$4} & 52 $\pm$12 & 89 $\pm$5 & \textbf{97 $\pm$2} & 25 $\pm$9 & 63 $\pm$19 & 94 $\pm$3 & 90 $\pm$4 & \textbf{99$\pm$1} & \textbf{96$\pm$2} \\
puzzle-3x3-play-singletask-task2-v0 & 1 $\pm$1 & 4 $\pm$3 & 1 $\pm$1 & 0 $\pm$1 & 0 $\pm$1 & 0 $\pm$0 & 4 $\pm$2 & 2 $\pm$2 & 1 $\pm$2 & 16 $\pm$5 & 0$\pm$0 & \textbf{37$\pm$2} \\
puzzle-3x3-play-singletask-task3-v0 & 1 $\pm$1 & 3 $\pm$2 & 3 $\pm$1 & 0 $\pm$0 & 0 $\pm$0 & 0 $\pm$0 & 1 $\pm$0 & 1 $\pm$1 & 0 $\pm$0 & 10 $\pm$3 & 0$\pm$0 & \textbf{28$\pm$3} \\
puzzle-3x3-play-singletask-task4-v0 (*) & 1 $\pm$1 & 2 $\pm$1 & 2 $\pm$1 & 0 $\pm$0 & 0 $\pm$0 & 0 $\pm$0 & 1 $\pm$1 & 2 $\pm$2 & 0 $\pm$0 & 16 $\pm$5 & 3$\pm$3 & \textbf{24$\pm$1} \\
puzzle-3x3-play-singletask-task5-v0 & 1 $\pm$0 & 3 $\pm$2 & 5 $\pm$3 & 0 $\pm$0 & 0 $\pm$0 & 0 $\pm$0 & 1 $\pm$1 & 2 $\pm$2 & 0 $\pm$0 & 16 $\pm$3 & 20$\pm$10 & \textbf{30$\pm$2} \\

\midrule

puzzle-4x4-play-singletask-task1-v0 & 1 $\pm$1 & 12 $\pm$2 & 26 $\pm$4 & \textbf{48 $\pm$5} & 24 $\pm$9 & 44 $\pm$10 & 1 $\pm$2 & 32 $\pm$9 & \textbf{49 $\pm$9} & 34 $\pm$8 & 7$\pm$1 & 38$\pm$3 \\
puzzle-4x4-play-singletask-task2-v0 & 0 $\pm$0 & 7 $\pm$4 & 12 $\pm$4 & 14 $\pm$5 & 0 $\pm$1 & 0 $\pm$0 & 0 $\pm$1 & 5 $\pm$3 & 4 $\pm$4 & \textbf{16 $\pm$5} & 3$\pm$1 & 15$\pm$2 \\
puzzle-4x4-play-singletask-task3-v0 & 0 $\pm$0 & 9 $\pm$3 & 15 $\pm$3 & 34 $\pm$5 & 21 $\pm$10 & 29 $\pm$12 & 1 $\pm$1 & 20 $\pm$10 & \textbf{50 $\pm$14} & 18 $\pm$5 & 4$\pm$2 & 8$\pm$2 \\
puzzle-4x4-play-singletask-task4-v0 (*) & 0 $\pm$0 & 5 $\pm$2 & 10 $\pm$3 & \textbf{26 $\pm$6} & 7 $\pm$4 & 1 $\pm$1 & 0 $\pm$0 & 5 $\pm$1 & 21 $\pm$11 & 11 $\pm$3 & 4$\pm$1 & 6$\pm$2 \\
puzzle-4x4-play-singletask-task5-v0 & 0 $\pm$0 & 4 $\pm$1 & 7 $\pm$3 & \textbf{24 $\pm$11} & 1 $\pm$1 & 0 $\pm$0 & 0 $\pm$1 & 4 $\pm$3 & 2 $\pm$2 & 7 $\pm$3 & 2$\pm$1 & 9$\pm$1 \\

\midrule

antmaze-umaze-v2 & 55 & 77 & \textbf{98} & 94 & \textbf{97} & 66 $\pm$5 & 90 $\pm$6 & 94 $\pm$3 & 92 $\pm$6 & \textbf{96 $\pm$2} & \textbf{97$\pm$3} & \textbf{99$\pm$1} \\
antmaze-umaze-diverse-v2 & 47 & 54 & 84 & 80 & 82 & 66 $\pm$11 & 55 $\pm$7 & 82 $\pm$9 & 62 $\pm$12 & \textbf{89 $\pm$5} & 77$\pm$3 & 79$\pm$3 \\
antmaze-medium-play-v2 & 0 & 66 & \textbf{90} & 84 & 81 & 49 $\pm$24 & 52 $\pm$12 & 77 $\pm$7 & 56 $\pm$15 & 78 $\pm$7 & 78$\pm$7 & 83$\pm$2 \\
antmaze-medium-diverse-v2 & 1 & 74 & 84 & 85 & 75 & 0 $\pm$1 & 44 $\pm$15 & 77 $\pm$6 & 60 $\pm$25 & 71 $\pm$13 & 77$\pm$3 & \textbf{91$\pm$4} \\
antmaze-large-play-v2 & 0 & 42 & 52 & 64 & 54 & 0 $\pm$0 & 10 $\pm$6 & 32 $\pm$21 & 55 $\pm$9 & \textbf{84 $\pm$7} & 79$\pm$4 & \textbf{82$\pm$6} \\
antmaze-large-diverse-v2 & 0 & 30 & 64 & 68 & 54 & 0 $\pm$0 & 16 $\pm$10 & 20 $\pm$17 & 64 $\pm$8 & \textbf{83 $\pm$4} & 78$\pm$4 & \textbf{86$\pm$3} \\

\midrule

pen-human-v1 & 71 & 78 & \textbf{103} & 76 $\pm$10 & 69 $\pm$7 & 64 $\pm$8 & 67 $\pm$5 & 77 $\pm$7 & 71 $\pm$12 & 53 $\pm$6 & 65$\pm$7 & 41$\pm$6 \\
pen-cloned-v1 & 52 & 83 & \textbf{103} & 64 $\pm$7 & 61 $\pm$7 & 56 $\pm$10 & 62 $\pm$10 & 67 $\pm$9 & 80 $\pm$11 & 74 $\pm$11 & 72$\pm$6 & 94$\pm$12 \\
pen-expert-v1 & 110 & 128 & \textbf{152} & 140 $\pm$6 & 134 $\pm$4 & 103 $\pm$9 & 118 $\pm$6 & 119 $\pm$7 & 139 $\pm$5 & 142 $\pm$6 & 136$\pm$5 & 132$\pm$2 \\
door-human-v1 & 2 & 3 & -0 & 6 $\pm$2 & 3 $\pm$3 & 5 $\pm$2 & 2 $\pm$1 & 4 $\pm$2 & \textbf{7 $\pm$2} & 0 $\pm$0 & 0$\pm$0 & 0$\pm$0 \\
door-cloned-v1 & -0 & \textbf{3} & 0 & 0 $\pm$0 & 0 $\pm$0 & 1 $\pm$0 & 0 $\pm$1 & 0 $\pm$0 & 2 $\pm$2 & 2 $\pm$1 & 0$\pm$0 & 0$\pm$0 \\
relocate-cloned-v1 & -0 & 0 & \textbf{2} & -0 $\pm$0 & -0 $\pm$0 & -0 $\pm$0 & -0 $\pm$0 & 1 $\pm$1 & -0 $\pm$0 & -0 $\pm$0 & 0$\pm$0 & 0$\pm$0 \\
door-expert-v1 & \textbf{105} & \textbf{107} & \textbf{106} & \textbf{105 $\pm$1} & \textbf{105 $\pm$0} & 98 $\pm$3 & \textbf{103 $\pm$1} & \textbf{104 $\pm$1} & \textbf{104 $\pm$2} & \textbf{104 $\pm$1} & 32$\pm$17 & 88$\pm$2 \\
hammer-human-v1 & \textbf{3} & 2 & 0 & 2 $\pm$1 & 1 $\pm$1 & 2 $\pm$0 & 2 $\pm$1 & 2 $\pm$1 & \textbf{3 $\pm$1} & 1 $\pm$1 & 0$\pm$0 & 1$\pm$0 \\
hammer-cloned-v1 & 1 & 2 & 5 & 2 $\pm$1 & 2 $\pm$1 & 1 $\pm$1 & 1 $\pm$0 & 2 $\pm$1 & 2 $\pm$1 & \textbf{11 $\pm$9} & 6$\pm$3 & 7$\pm$2 \\
hammer-expert-v1 & 127 & \textbf{129} & \textbf{134} & 125 $\pm$4 & 127 $\pm$0 & 92 $\pm$11 & 118 $\pm$3 & 119 $\pm$9 & 117 $\pm$9 & 125 $\pm$3 & 29$\pm$12 & 92$\pm$12 \\
relocate-human-v1 & \textbf{0} & \textbf{0} & \textbf{0} & \textbf{0 $\pm$0} & \textbf{0 $\pm$0} & \textbf{0 $\pm$0} & \textbf{0 $\pm$0} & \textbf{0 $\pm$0} & \textbf{0 $\pm$0} & \textbf{0 $\pm$0} & \textbf{0$\pm$0} & \textbf{0$\pm$0} \\
relocate-expert-v1 & \textbf{108} & \textbf{106} & \textbf{108} & \textbf{107 $\pm$1} & \textbf{106 $\pm$2} & 93 $\pm$6 & \textbf{105 $\pm$3} & \textbf{105 $\pm$2} & \textbf{104 $\pm$3} & \textbf{107 $\pm$1} & 68$\pm$17 & 97$\pm$5 \\

\midrule

visual-cube-single-play-singletask-task1-v0$^{1}$ & - & 70 $\pm$12 & \textbf{83 $\pm$6} & - & - & - & - & 55 $\pm$8 & 49 $\pm$7 & 81 $\pm$12 & 71$\pm$7 & \textbf{86$\pm$2} \\
visual-cube-double-play-singletask-task1-v0$^{1}$ & - & \textbf{34 $\pm$23} & 4 $\pm$4 & - & - & - & - & 6 $\pm$2 & 8 $\pm$6 & 21 $\pm$11 & 7$\pm$4 & 15$\pm$7 \\
visual-scene-play-singletask-task1-v0$^{1}$ & - & \textbf{97 $\pm$2} & \textbf{98 $\pm$4} & - & - & - & - & 46 $\pm$4 & 86 $\pm$10 & \textbf{98 $\pm$3} & \textbf{99$\pm$1} & \textbf{98$\pm$1} \\
visual-puzzle-3x3-play-singletask-task1-v0$^{1}$ & - & 7 $\pm$15 & 88 $\pm$4 & - & - & - & - & 7 $\pm$2 & \textbf{100 $\pm$0} & 94 $\pm$1 & \textbf{97$\pm$1} & \textbf{96$\pm$1} \\
visual-puzzle-4x4-play-singletask-task1-v0$^{1}$ & - & 0 $\pm$0 & 26 $\pm$6 & - & - & - & - & 0 $\pm$0 & 8 $\pm$15 & \textbf{33 $\pm$6} & 22$\pm$7 & 23$\pm$1 \\

\bottomrule \multicolumn{13}{l}{\footnotesize $^{1}$ Due to the high computational cost of pixel-based tasks, only six strong methods are benchmarked on the visual OGBench tasks, following FQL/DeFlow.}
\end{tabular}
\end{adjustbox}
\end{table*}

\begin{table*}[ht]
\centering
\caption{Offline-to-online RL results. Results are averaged over 8 seeds.}
\label{tab:o2o_rl_latest_final}
\scriptsize
\setlength{\tabcolsep}{3.0pt}
\renewcommand{\arraystretch}{0.95}
\begin{adjustbox}{max width=\textwidth}
\begin{tabular}{>{\raggedright\arraybackslash}p{4.95cm}cccccccc}
\toprule
Task & IQL & ReBRAC & Cal-QL & RLPD & IFQL & FQL & DeFlow & \textbf{FiDec} \\
\midrule
humanoidmaze-medium-navigate-singletask-v0 & 21 $\pm$13$\!\rightarrow\!$16 $\pm$8 & 16 $\pm$20$\!\rightarrow\!$1 $\pm$1 & 0 $\pm$0$\!\rightarrow\!$0 $\pm$0 & 0 $\pm$0$\!\rightarrow\!$8 $\pm$10 & \colorbox{yellow!15}{56 $\pm$35$\!\rightarrow\!$82 $\pm$20} & 12 $\pm$7$\!\rightarrow\!$22 $\pm$12 & \colorbox{green!15}{36$\pm$7$\!\rightarrow\!$97$\pm$3} & \colorbox{cyan!15}{\textbf{82$\pm$4$\!\rightarrow\!$99$\pm$2}} \\

antsoccer-arena-navigate-singletask-v0 & 2 $\pm$1$\!\rightarrow\!$0 $\pm$0 & 0 $\pm$0$\!\rightarrow\!$0 $\pm$0 & 0 $\pm$0$\!\rightarrow\!$0 $\pm$0 & 0 $\pm$0$\!\rightarrow\!$0 $\pm$0 & 26 $\pm$15$\!\rightarrow\!$39 $\pm$10 & \colorbox{yellow!15}{28 $\pm$8$\!\rightarrow\!$86 $\pm$5} & \colorbox{green!15}{40$\pm$12$\!\rightarrow\!$87$\pm$1} & \colorbox{cyan!15}{\textbf{51$\pm$12$\!\rightarrow\!$91$\pm$4}} \\

cube-double-play-singletask-v0 & 0 $\pm$1$\!\rightarrow\!$0 $\pm$0 & 6 $\pm$5$\!\rightarrow\!$28 $\pm$28 & 0 $\pm$0$\!\rightarrow\!$0 $\pm$0 & 0 $\pm$0$\!\rightarrow\!$0 $\pm$0 & 12 $\pm$9$\!\rightarrow\!$40 $\pm$5 & \colorbox{cyan!15}{\textbf{40 $\pm$11$\!\rightarrow\!$92 $\pm$3}} & \colorbox{yellow!15}{49$\pm$6$\!\rightarrow\!$83$\pm$7} & \colorbox{green!15}{61$\pm$9$\!\rightarrow\!$87$\pm$2} \\

scene-play-singletask-v0 & 14 $\pm$11$\!\rightarrow\!$10 $\pm$9 & \textbf{55 $\pm$10$\!\rightarrow\!$100 $\pm$0} & 1 $\pm$2$\!\rightarrow\!$50 $\pm$53 & \textbf{0 $\pm$0$\!\rightarrow\!$100 $\pm$0} & 0 $\pm$1$\!\rightarrow\!$60 $\pm$39 & \colorbox{green!15}{82 $\pm$11$\!\rightarrow\!$100 $\pm$1} & \colorbox{yellow!15}{35$\pm$19$\!\rightarrow\!$100$\pm$0} & \colorbox{cyan!15}{\textbf{65$\pm$14$\!\rightarrow\!$100$\pm$0}} \\

puzzle-4x4-play-singletask-v0 & 5 $\pm$2$\!\rightarrow\!$1 $\pm$1 & 8 $\pm$4$\!\rightarrow\!$14 $\pm$35 & 0 $\pm$0$\!\rightarrow\!$0 $\pm$0 & \colorbox{yellow!15}{0 $\pm$0$\!\rightarrow\!$100 $\pm$1} & 23 $\pm$6$\!\rightarrow\!$19 $\pm$33 & 8 $\pm$3$\!\rightarrow\!$38 $\pm$52 & \colorbox{green!15}{3$\pm$1$\!\rightarrow\!$100$\pm$0} & \colorbox{cyan!15}{\textbf{7$\pm$3$\!\rightarrow\!$100$\pm$0}} \\

\midrule

antmaze-umaze-v2 & 77$\!\rightarrow\!$96 & 98$\!\rightarrow\!$75 & \colorbox{green!15}{77$\!\rightarrow\!$100} & 0 $\pm$0$\!\rightarrow\!$98 $\pm$3 & 94 $\pm$5$\!\rightarrow\!$96 $\pm$2 & \colorbox{green!15}{97 $\pm$2$\!\rightarrow\!$99 $\pm$1} & 88$\pm$2$\!\rightarrow\!$99$\pm$1 & \colorbox{cyan!15}{\textbf{97$\pm$1$\!\rightarrow\!$100$\pm$0}} \\

antmaze-umaze-diverse-v2 & 60$\!\rightarrow\!$64 & \textbf{74$\!\rightarrow\!$98} & 32$\!\rightarrow\!$98 & 0 $\pm$0$\!\rightarrow\!$94 $\pm$5 & 69 $\pm$20$\!\rightarrow\!$93 $\pm$5 & \colorbox{yellow!15}{79 $\pm$16$\!\rightarrow\!$100 $\pm$1} & \colorbox{green!15}{\textbf{79$\pm$9$\!\rightarrow\!$99$\pm$1}} & \colorbox{cyan!15}{\textbf{62$\pm$6$\!\rightarrow\!$100$\pm$0}} \\

antmaze-medium-play-v2 & 72$\!\rightarrow\!$90 & 88$\!\rightarrow\!$98 & \colorbox{cyan!15}{\textbf{72$\!\rightarrow\!$99}} & 0 $\pm$0$\!\rightarrow\!$98 $\pm$2 & 52 $\pm$19$\!\rightarrow\!$93 $\pm$2 & \colorbox{yellow!15}{77 $\pm$7$\!\rightarrow\!$97 $\pm$2} & 78$\pm$4$\!\rightarrow\!$98$\pm$0 & \colorbox{green!15}{83$\pm$2$\!\rightarrow\!$98$\pm$2} \\

antmaze-medium-diverse-v2 & 64$\!\rightarrow\!$92 & \colorbox{green!15}{85$\!\rightarrow\!$99} & \colorbox{yellow!15}{62$\!\rightarrow\!$98} & 0 $\pm$0$\!\rightarrow\!$97 $\pm$2 & 44 $\pm$26$\!\rightarrow\!$89 $\pm$4 & 55 $\pm$19$\!\rightarrow\!$97 $\pm$3 & 66$\pm$6$\!\rightarrow\!$97$\pm$2 & \colorbox{cyan!15}{\textbf{81$\pm$2$\!\rightarrow\!$98$\pm$2}} \\

antmaze-large-play-v2 & 38$\!\rightarrow\!$64 & 68$\!\rightarrow\!$32 & \colorbox{green!15}{32$\!\rightarrow\!$97} & 0 $\pm$0$\!\rightarrow\!$93 $\pm$5 & 64 $\pm$14$\!\rightarrow\!$80 $\pm$5 & 66 $\pm$40$\!\rightarrow\!$84 $\pm$30 & \textbf{67$\pm$10$\!\rightarrow\!$95$\pm$2} & \colorbox{cyan!15}{\textbf{71$\pm$10$\!\rightarrow\!$98$\pm$2}} \\

antmaze-large-diverse-v2 & 27$\!\rightarrow\!$64 & 67$\!\rightarrow\!$72 & 44$\!\rightarrow\!$92 & 0 $\pm$0$\!\rightarrow\!$94 $\pm$3 & 69 $\pm$6$\!\rightarrow\!$86 $\pm$5 & \colorbox{yellow!15}{75 $\pm$24$\!\rightarrow\!$94 $\pm$3} & \colorbox{green!15}{72$\pm$3$\!\rightarrow\!$94$\pm$2} & \colorbox{cyan!15}{\textbf{76$\pm$3$\!\rightarrow\!$95$\pm$2}} \\

\midrule

pen-cloned-v1 & 84$\!\rightarrow\!$102 & 74$\!\rightarrow\!$138 & -3$\!\rightarrow\!$-3 & 3 $\pm$2$\!\rightarrow\!$120 $\pm$10 & 77 $\pm$7$\!\rightarrow\!$107 $\pm$10 & \colorbox{cyan!15}{\textbf{53 $\pm$14$\!\rightarrow\!$149 $\pm$6}} & 67$\pm$7$\!\rightarrow\!$130$\pm$4 & \colorbox{green!15}{83$\pm$6$\!\rightarrow\!$143$\pm$7} \\

door-cloned-v1 & 1$\!\rightarrow\!$20 & \textbf{0$\!\rightarrow\!$102} & -0$\!\rightarrow\!$-0 &  \colorbox{yellow!15}{0$\pm$0$\!\rightarrow\!$102 $\pm$7} & 3 $\pm$2$\!\rightarrow\!$50 $\pm$15 & \colorbox{green!15}{0 $\pm$0$\!\rightarrow\!$102 $\pm$5} & 1$\pm$0$\!\rightarrow\!$99$\pm$1 & \colorbox{cyan!15}{\textbf{0$\pm$0$\!\rightarrow\!$104$\pm$2}} \\

hammer-cloned-v1 & 1$\!\rightarrow\!$57 & \textbf{7$\!\rightarrow\!$125} & 0$\!\rightarrow\!$0 & \colorbox{cyan!15}{\textbf{0 $\pm$0$\!\rightarrow\!$128 $\pm$29}} & 4 $\pm$2$\!\rightarrow\!$60 $\pm$14 & \colorbox{green!15}{0 $\pm$0$\!\rightarrow\!$127 $\pm$17} & 4$\pm$3$\!\rightarrow\!$120$\pm$1 & \colorbox{yellow!15}{7$\pm$3$\!\rightarrow\!$125$\pm$4} \\

\bottomrule
\end{tabular}
\end{adjustbox}
\end{table*}

\begin{table}[ht]
\centering
\caption{Ablation (full table for Figure~\ref{fig:perturbed_time}) on the perturbed time $t_\varepsilon$ over OGBench-5.
Results are reported as mean $\pm$ std over three runs for each task.}
\label{tab:perturbed_time_full}
\resizebox{\linewidth}{!}{
\begin{tabular}{lccccccc}
\toprule
Env & DeFlow (isotropic $L^2$ constraint) & $t_\varepsilon{=}0.70$ & $t_\varepsilon{=}0.75$ & $t_\varepsilon{=}0.80$ & $t_\varepsilon{=}0.85$ & $t_\varepsilon{=}0.90$ & $t_\varepsilon{=}0.95$ \\
\midrule
antsoccer-arena-navigate-singletask-task4      & 0.43$\pm$0.08 & 0.54$\pm$0.11 & 0.52$\pm$0.03 & 0.49$\pm$0.04 & 0.47$\pm$0.05 & 0.36$\pm$0.05 & 0.03$\pm$0.01 \\
cube-double-play-singletask-task2              & 0.50$\pm$0.03 & 0.57$\pm$0.13 & 0.51$\pm$0.03 & 0.58$\pm$0.02 & 0.47$\pm$0.06 & 0.45$\pm$0.04 & 0.62$\pm$0.06 \\
humanoidmaze-medium-navigate-singletask-task1  & 0.35$\pm$0.07 & 0.64$\pm$0.08 & 0.77$\pm$0.08 & 0.76$\pm$0.07 & 0.56$\pm$0.12 & 0.16$\pm$0.07 & 0.02$\pm$0.02 \\
puzzle-4x4-play-singletask-task4               & 0.04$\pm$0.01 & 0.03$\pm$0.02 & 0.03$\pm$0.00 & 0.05$\pm$0.01 & 0.07$\pm$0.02 & 0.09$\pm$0.01 & 0.04$\pm$0.02 \\
scene-play-singletask-task2                    & 0.51$\pm$0.06 & 0.61$\pm$0.01 & 0.62$\pm$0.09 & 0.70$\pm$0.07 & 0.71$\pm$0.06 & 0.73$\pm$0.01 & 0.77$\pm$0.04 \\
\midrule
\textbf{Avg} & \textbf{0.36$\pm$0.17} & \textbf{0.48$\pm$0.23} & \textbf{0.49$\pm$0.25} & \textbf{0.51$\pm$0.25} & \textbf{0.45$\pm$0.21} & \textbf{0.36$\pm$0.23} & \textbf{0.30$\pm$0.33} \\
\bottomrule
\end{tabular}}
\end{table}


\end{document}